\DeclareMathAlphabet\mathbb{U}{msb}{m}{n}
\def\Nset{\mathbb{N}}
\def\Rset{\mathbb{R}}
\let\Pr\undefined
\DeclareMathOperator*{\Pr}{\mathbb{P}}
\DeclareMathOperator*{\E}{\mathbb E}
\DeclareMathOperator*{\argmax}{argmax}
\DeclarePairedDelimiter{\abs}{\lvert}{\rvert} 
\DeclarePairedDelimiter{\bracket}{[}{]}
\DeclarePairedDelimiter{\curl}{\{}{\}}
\DeclarePairedDelimiter{\paren}{(}{)}
\DeclarePairedDelimiter{\norm}{\|}{\|}
\newcommand{\cN}{\mathcal{N}}
\newcommand{\sA}{{\mathscr A}}
\newcommand{\sC}{{\mathscr C}}
\newcommand{\sD}{{\mathscr D}}
\newcommand{\sF}{{\mathscr F}}
\newcommand{\sH}{{\mathscr H}}
\newcommand{\sM}{{\mathscr M}}
\newcommand{\sR}{{\mathscr R}}
\newcommand{\sS}{{\mathscr S}}
\newcommand{\sX}{{\mathscr X}}
\newcommand{\sY}{{\mathscr Y}}
\newcommand{\sfp}{{\mathsf p}}
\newcommand{\sfH}{{\mathsf H}}
\newcommand{\sfL}{{\mathsf L}}
\newcommand{\sfX}{{\mathsf X}}
\newcommand{\1}{\mathds{1}}
\newcommand{\Rad}{\mathfrak R}
\newcommand{\brho}{{\boldsymbol \rho}}
\newcommand{\bepsilon}{{\boldsymbol \e}}
\newcommand{\hh}{{\sf h}}
\newcommand{\pp}{{\sf p}}
\newcommand{\qq}{{\sf q}}
\newcommand{\s}{{\sf s}}
\newcommand{\yy}{{\sf y}}
\newcommand{\DD}{{\sf D}}
\newcommand{\TT}{{\sf T}}
\newcommand{\num}{{n}}
\newcommand{\FOCAL}{\textsc{focal}}
\newcommand{\LDAM}{\textsc{ldam}}
\newcommand{\CE}{\textsc{ce}}
\newcommand{\CB}{\textsc{cb}}
\newcommand{\EQUAL}{\textsc{equal}}
\newcommand{\WCE}{\textsc{wce}}
\newcommand{\LA}{\textsc{la}}
\newcommand{\lbal}{{\ell_{\rm{BAL}}}}
\newcommand{\h}{\widehat}
\newcommand{\ov}{\overline}
\newcommand{\wt}{\widetilde}
\newcommand{\e}{\epsilon}
\newcommand{\ignore}[1]{}
\declaretheorem{theorem}
\newtheorem{lemma}[theorem]{Lemma}
\newtheorem{corollary}[theorem]{Corollary}
\title{Improved Balanced Classification with\\ Theoretically Grounded Loss Functions
}
\author{Corinna Cortes\\
Google Research\\ New York, NY 10011\\
\texttt{corinna@google.com}
\And
Mehryar Mohri\\
Google Research \& CIMS\\
New York, NY 10011\\
\texttt{mohri@google.com}
\And
Yutao Zhong\\
Google Research\\ New York, NY 10011\\
\texttt{yutaozhong@google.com}
}
\begin{document}

\maketitle

\begin{abstract}

The \emph{balanced loss} is a widely adopted objective for multi-class
classification under class imbalance. By assigning equal importance to
all classes, regardless of their frequency, it promotes fairness and
ensures that minority classes are not overlooked. However, directly
minimizing the balanced classification loss is typically intractable, which
makes the design of effective surrogate losses a central question.
This paper introduces and studies two advanced surrogate loss
families: Generalized Logit-Adjusted (GLA) loss functions and
Generalized Class-Aware weighted (GCA) losses. GLA losses generalize
Logit-Adjusted losses, which shift logits based on class priors, to
the broader general cross-entropy loss family. GCA loss functions extend the
standard class-weighted losses, which scale losses inversely by class
frequency, by incorporating class-dependent confidence margins and
extending them to the general cross-entropy family.
We present a comprehensive theoretical analysis of 
consistency for both loss families. We show that GLA losses are Bayes-consistent, but only
$\sH$-consistent for complete (i.e., unbounded) hypothesis sets. Moreover,
their $\sH$-consistency bounds depend inversely on the minimum class
probability, scaling at least as $1/\pp_{\min}$.  In contrast, GCA losses are
$\sH$-consistent for any hypothesis set that is bounded or complete, with $\sH$-consistency
bounds that scale more favorably as $1/\sqrt{\pp_{\min}}$, offering
significantly stronger theoretical guarantees in imbalanced settings.
We report the results of experiments demonstrating that,
empirically, both the GCA losses with calibrated class-dependent
confidence margins and GLA losses can greatly outperform straightforward
class-weighted losses as well as the LA losses. GLA generally performs slightly better in common benchmarks, whereas GCA exhibits a slight edge in highly imbalanced settings.
Thus, we advocate for both GLA and GCA losses as principled,
theoretically sound, and state-of-the-art surrogates for balanced
classification under class imbalance.

\end{abstract}

\section{Introduction}
\label{sec:introduction}

Class imbalance is a prevalent challenge in real-world multi-class
classification problems. Applications such as medical diagnosis, fraud
detection, and rare event prediction often involve highly skewed label
distributions, where a small subset of classes dominate the data,
while others, sometimes the most critical, are heavily
underrepresented. Standard training objectives, such as minimizing the
unweighted cross-entropy loss, tend to be biased toward majority
classes, leading to poor performance on minority classes and
undermining the fairness, soundness and reliability of learned models.

To address this issue, a widely studied approach is to minimize the
\emph{balanced loss}, which assigns equal importance to all classes
regardless of their frequency in the training
data~\citep{chan1998learning,brodersen2010balanced,KotlowskiDembczynskHullermeier2011,menon2013statistical,
  cao2019learning,menonlong,cui2019class}.  
  This promotes fairness by equalizing performance across demographic groups \citep{khalili2023loss,hardt2016equality} and ensures that minority classes are not overlooked in long-tailed datasets \citep{feldman2020does,ZhangKangHooiYanFeng2023} (see Appendix~\ref{app:related-work}). It is also crucial in federated learning, where data imbalances across clients can lead to biased models that favor heavy users \citep{li2021autobalance,mcmahan2017communication,mohri2019agnostic}. By reweighting the
loss contributions from different classes, the balanced loss promotes
equitable treatment of all labels and has been shown to better align
with metrics such as balanced accuracy and macro-F1.
However, directly optimizing the balanced classification loss is typically
intractable in practice. Thus, the design of effective surrogate
losses that are tractable to optimize is a central challenge in
imbalanced learning.

This paper introduces and studies two families of surrogate losses: Generalized Logit-Adjusted (GLA) loss functions and
Generalized Class-Aware weighted (GCA) losses. GLA losses generalize
Logit-Adjusted losses \citep{menonlong}, which shift logits based on class priors, to
the broader general cross-entropy loss family \citep{mao2023cross}. GCA loss functions extend the
standard class-weighted losses, which scale losses inversely by class
frequency, by incorporating class-dependent confidence margins and
extending them to the general cross-entropy family.

We present a comprehensive theoretical analysis of their
consistency. We show that GLA losses are Bayes-consistent \citep{Zhang2003,bartlett2006convexity,zhang2004statistical,tewari2007consistency,steinwart2007compare}, but only
$\sH$-consistent \citep{awasthi2022h,awasthi2022multi,
  mao2023cross,MaoMohriZhong2023characterization} for complete (i.e., unbounded) hypotheses. Moreover,
their $\sH$-consistency bounds depend inversely on the minimum class
probability, $\pp_{\min}$, scaling at least as $1/\pp_{\min}$.  In contrast, GCA losses are
$\sH$-consistent for any  hypothesis set that is bounded or complete, with $\sH$-consistency
bounds that scale more favorably as $1/\sqrt{\pp_{\min}}$, offering
significantly stronger theoretical guarantees in imbalanced settings.

We also report the results of experiments demonstrating that,
empirically, both the GCA losses with calibrated class-dependent
confidence margins and GLA losses comfortably outperform straightforward
class-weighted losses as well as the LA losses. GLA generally performs slightly better in common benchmarks, whereas GCA exhibits a slight edge in highly imbalanced settings.

Taken together, our results establish GLA and GCA losses as
theoretically grounded and practically effective classification algorithms for tackling
class imbalance in multi-class learning. Their complementary strengths
make them well-suited for a wide range of real-world applications
where fairness across classes is paramount.

The rest of this paper is structured as follows. Section~\ref{sec:background} reviews fundamental concepts related to class imbalance in multi-class classification, introduces the balanced loss (Section~\ref{sec:balanced-loss}), discusses existing surrogate losses (Section~\ref{sec:existing-surrogate}), and highlights the limitations of current approaches (Section~\ref{sec:limitations-existing}). Section~\ref{sec:surrogate} introduces two novel surrogate loss families: Generalized Logit-Adjusted (GLA) (Section~\ref{sec:GLA}) and Generalized Class-Aware weighted (GCA) losses (Section~\ref{sec:GCA}). A comprehensive theoretical analysis of their consistency and margin bounds is provided in Section~\ref{sec:theory} and Appendix~\ref{app:margin-bounds}. Finally, Section~\ref{sec:experiments} reports empirical results on CIFAR-10, CIFAR-100, and Tiny ImageNet, demonstrating the effectiveness of our algorithms, which are based on the minimization of these loss functions.

\section{Preliminaries}
\label{sec:preliminaries}

\ignore{We first introduce  fundamental definitions and concepts in the multi-class classification setting.}

Let $\sX$ denote the input space and  $\sY = [n] \coloneqq \curl*{1, \ldots, n}$ represent the set of  $n$ possible labels. We consider a data distribution $\sD$ over the combined input-label space $\sX \times \sY$. Our hypothesis set, denoted by $\sH$, consists of functions that map an input-label pair $(x, y)$ to a real-valued score, $h \colon \sX \times \sY \to \Rset$. 
We denote by $\pp(x)$ the marginal probability density of an input $x$, and by $\pp(y)$ the marginal probability of a class label $y$. The minimum class marginal is defined as $\pp_{\min} = \min_{y \in \sY} \pp(y)$. The conditional distributions $\pp(x \mid y)$ and $\pp(y \mid x)$ represent the probability of input $x$ given label $y$, and label $y$ given input $x$, respectively.

Let $\sH_{\mathrm{all}}$ denote the set of all measurable functions, and a $\ell \colon \sH_{\mathrm{all}} \times \sX \times \sY \to \Rset$ the loss function adopted to penalize inaccurate predictions. Then, the \emph{generalization error} of a hypothesis $h \in \sH$ is defined as its expected loss: $\sR_{\ell}(h) = \E_{(x,y)\sim \sD}\bracket*{\ell(h,x,y)}$. The lowest possible generalization error achievable within the hypothesis set $\sH$ is the \emph{best-in-class generalization error}, $\sR_{\ell}^*(\sH) = \inf_{h \in \sH} \sR_{\ell}(h)$. 
  
For any input $x \in \sX$,  a hypothesis $h \in \sH$ assigns a predicted label $\hh(x)$ by selecting the class with the highest score:
$\hh(x) = \argmax_{y \in \sY} h(x, y)$ (ties are broken by choosing the highest index).  The standard \emph{zero-one loss function} for multi-class classification is defined as $\ell_{0-1}(h, x, y) \coloneqq
\1_{\hh(x) \neq y}$, which is $1$ if the prediction is incorrect and $0$ otherwise.

The \emph{margin} $\rho_{h}(x, y)$ for a predictor $h \in \sH$ on a labeled example $(x, y)$ measures the confidence of the correct prediction: $\rho_h(x, y) = h(x, y) - \max_{y' \neq y} h(x, y')$. This is the difference between the score of the true label $y$ and the highest score among all other labels $y'$.

The generalization error of a hypothesis $h$ can also be expressed as the expectation of the \emph{conditional error} over the input $x$:
$\sR_{\ell}(h) = \E_{x} \bracket*{\sC_{\ell}(h, x)}$, where
$\sC_{\ell}(h, x) = \sum_{y \in \sY} \pp(y \mid x) \ell(h, x, y)$. The \emph{best-in-class conditional error} is
$\sC_{\ell}^*\paren*{\sH, x} = \inf_{h \in \sH} \sC_{\ell}(h, x)$.
The difference, $\Delta \sC_{\ell, \sH}(h, x) =
\sC_{\ell}(h, x) - \sC_{\ell}^*\paren*{\sH, x}$, is termed the \emph{conditional regret} for the loss function $\ell$. 
These concepts and definitions are useful in our analysis of the consistency of loss functions.

\section{Background and Related Work
}
\label{sec:background}

We first review fundamental concepts related to class imbalance in multi-class classification, introduce the balanced loss, discuss existing surrogate losses, and highlight the limitations of current approaches.

\subsection{Class Imbalance and Balanced Loss}
\label{sec:balanced-loss}

Class imbalance in multi-class settings arises when the label distribution $\pp(y)$ is highly skewed, with some classes (often referred to as "tail" labels) having much lower probabilities of occurrence compared to others (the "head" or majority classes). In such cases, many recent studies \citep{chan1998learning, brodersen2010balanced, KotlowskiDembczynskHullermeier2011, menon2013statistical, cao2019learning, menonlong, cui2019class} suggest that the balanced loss ($\lbal$) is a more appropriate loss function than the standard zero-one loss. The balanced loss assigns equal importance to all classes, irrespective of their frequency, and is thus viewed as promoting fairness by equalizing performance across demographic groups \citep{khalili2023loss,hardt2016equality,conitzer2019group} and ensuring minority classes are not overlooked in long-tailed datasets \citep{feldman2020does,ZhangKangHooiYanFeng2023} (see Appendix~\ref{app:related-work}). It is also crucial in federated learning, where data imbalances across clients can lead to biased models that favor majority users \citep{li2021autobalance,mcmahan2017communication,mohri2019agnostic}.

The balanced loss reduces the influence of class imbalances by averaging the per-class loss by weighting the error for each example $(h, x, y)$ by the inverse of the probability of the true class $\pp(y)$:
\begin{equation}
\label{eq:bal}
\lbal(h, x, y) = \frac{1_{\hh(x) \neq y}}{\pp(y)}.
\end{equation}
The following lemma characterizes the best-in-class conditional error and the corresponding conditional regret for the balanced loss. For any input $x \in \sX$, 
we denote by $\mathsf H(x)$
the set of labels that can be predicted by hypotheses in $\sH$ for that input: 
$\sf H(x) =
\curl*{\hh(x) \colon h \in \sH}$. The proof of Lemma~\ref{lemma:conditional-regret} is provided in Appendix~\ref{app:conditional-regret}.

\begin{restatable}{lemma}{ConditionalRegret}
\label{lemma:conditional-regret}
For any $x \in \sX$,
the best-in-class conditional error and
the conditional regret for $\lbal$ can be expressed as follows:
\begin{align*}
\sC^*_{\lbal}\paren*{\sH, x} =
\sum_{y \in \sY} \frac{\pp(y \mid x)}{\pp(y)} -\max_{y \in \sf H(x)}\frac{\pp(y \mid x)}{\pp(y)} 
\mspace{24mu}
\Delta \sC_{\lbal, \sH}(h, x) =
\max_{y \in \sf H(x)}\frac{\pp(y \mid x)}{\pp(y)} - \frac{\pp(\hh(x)) \mid x)}{\pp(\hh(x))}.
\end{align*}
\end{restatable}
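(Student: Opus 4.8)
The plan is to compute the conditional error $\sC_{\lbal}(h,x)$ in closed form, take its infimum over $\sH$ to obtain the best-in-class conditional error, and then subtract to read off the regret. Everything follows from the fact that the balanced loss depends on $h$ only through the single predicted label $\hh(x)$.

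First I would expand the definition of the conditional error. Since $\sC_{\lbal}(h,x) = \sum_{y \in \sY} \pp(y \mid x)\,\lbal(h,x,y)$ and $\lbal(h,x,y) = \1_{\hh(x) \neq y}/\pp(y)$, the indicator vanishes precisely at $y = \hh(x)$ and equals one otherwise, so
\[
\sC_{\lbal}(h,x) = \sum_{y \neq \hh(x)} \frac{\pp(y \mid x)}{\pp(y)} = \sum_{y \in \sY} \frac{\pp(y \mid x)}{\pp(y)} - \frac{\pp(\hh(x) \mid x)}{\pp(\hh(x))}.
\]
The key structural observation is that the first sum is a constant independent of $h$, and the subtracted term is a function of the predicted label $\hh(x)$ alone. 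Thus minimizing $\sC_{\lbal}(h,x)$ over $h \in \sH$ is equivalent to maximizing $\pp(y \mid x)/\pp(y)$ over all labels $y$ that some hypothesis in $\sH$ can output at $x$, that is, over $\sf H(x)$. Because $\sf H(x) \subseteq \sY$ is finite, the infimum is attained and the supremum of the subtracted term is a genuine maximum, giving
\[
\sC^*_{\lbal}(\sH,x) = \sum_{y \in \sY} \frac{\pp(y \mid x)}{\pp(y)} - \max_{y \in \sf H(x)} \frac{\pp(y \mid x)}{\pp(y)},
\]
which is the first claimed identity. Subtracting this from $\sC_{\lbal}(h,x)$, the constant sum cancels and we obtain $\Delta \sC_{\lbal,\sH}(h,x) = \max_{y \in \sf H(x)} \tfrac{\pp(y \mid x)}{\pp(y)} - \tfrac{\pp(\hh(x) \mid x)}{\pp(\hh(x))}$, the second identity.

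The only delicate step—really the crux of the argument—is the reduction of the infimum over the function class $\sH$ to a maximum over the achievable label set $\sf H(x)$. I would justify it directly from the definition $\sf H(x) = \curl*{\hh(x) \colon h \in \sH}$: every value $\pp(y \mid x)/\pp(y)$ with $y \in \sf H(x)$ is realized by some $h \in \sH$, and conversely each $h$ realizes exactly one such value, so the range of the subtracted term as $h$ varies is precisely $\curl*{\pp(y \mid x)/\pp(y) \colon y \in \sf H(x)}$; finiteness of this set yields an attained maximum. Notably, no convexity, boundedness, or topological assumption on $\sH$ is needed, since the dependence on $h$ is entirely through the discrete quantity $\hh(x)$.
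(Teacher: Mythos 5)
Your proof is correct and follows essentially the same route as the paper's: expand $\sC_{\lbal}(h,x)$ into a constant sum minus $\pp(\hh(x)\mid x)/\pp(\hh(x))$, reduce the infimum over $\sH$ to a maximum over the achievable label set $\sf H(x)$, and subtract. Your added justification that the range of the subtracted term is exactly $\curl*{\pp(y\mid x)/\pp(y) \colon y \in \sf H(x)}$ and that finiteness gives attainment is a welcome (if minor) elaboration of a step the paper states without comment.
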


\subsection{Existing Surrogate Losses for Balanced Learning}
\label{sec:existing-surrogate}

Several surrogate losses have been proposed for optimizing the balanced loss. Here, we review two prominent Bayes-consistent examples:

\textbf{Class-Weighted Cross-Entropy}: A common strategy is to use the class-weighted cross-entropy loss \citep{xie1989logit, morik1999combining}, which adjusts the standard cross-entropy loss by weighting each example inversely proportional to its class frequency $\pp(y)$:
\begin{equation}
\label{eq:wce}
\ell_{\rm{WCE}}(h,x,y) = - \frac{1}{\pp(y)} \log \paren*{\frac{e^{h(x, y)}}{\sum_{y' \in \sY} e^{h(x, y')}}}.
\end{equation}
As pointed by \citep{byrd2019effect}, the limitation of $\ell_{\rm{WCE}}$ is that in separable cases, class-weighted cross-entropy may still yield solutions with zero training loss that do not adjust decision boundaries meaningfully toward minority or majority classes. This is because class weighting does not influence the classifier once perfect separation is achieved. As a result, the method fails to address imbalance in such regimes.

\textbf{Logit-Adjusted (LA) Losses}: More recently, \citet{menonlong} introduced Logit-Adjusted (LA) losses. These losses modify the logits (outputs before softmax) based on class priors, typically by adding a term $ \tau \log(\pp(y))$ with $\tau>0$:
\begin{equation}
\label{eq:la}
\ell_{\rm{LA}}(h,x,y) = -\log \paren*{\frac{e^{h(x, y) + \tau \log(\pp(y))}}{\sum_{y' \in \sY} e^{h(x, y') + \tau \log(\pp(y'))}}}.
\end{equation}
As we will show in Section~\ref{sec:theory}, $\ell_{\rm{LA}}$ is not Bayes-consistent for the balanced loss when $\tau \neq 1$.

A detailed discussion of other approaches for handling class imbalance, including alternative loss weighting schemes \citep{cui2019class,Fan:2017,jamal2020rethinking,wang2023unified,wang2025unified,li2025focal}, margin modifications \citep{Masnadi-Shirazi:2010,Iranmehr:2019,zhang2017range,cao2019learning,tan2020equalization,jiawei2020balanced}, data augmentation and sampling techniques \citep{KubatMa97,WallaceSmallBrodleyTrikalinos2011,chawla2002smote,Yin:2018}, threshold adjustments \citep{Fawcett:1996,Provost:2000,Maloof03,King:2001,Collell:2016,menonlong,zhu2023generalized}, and weight normalization methods \citep{Zhang:2019,Kim:2019,kang2019decoupling}
is included in Appendix~\ref{app:related-work}.

\subsection{Limitations of Existing Approaches}
\label{sec:limitations-existing}

Despite their usefulness, existing surrogate losses and related methods admit some limitations.
\emph{Class-weighted cross-entropy} often has a minimal effect in settings where data is easily separable. In such cases, solutions that achieve zero training loss (perfect separation) remain optimal even with class weighting, failing to shift decision boundaries effectively towards dominant classes as might be desired \citep{byrd2019effect}.
\emph{Logit-Adjusted (LA) losses}, as we will demonstrate in Section~\ref{sec:theory}, are not Bayes-consistent for the balanced loss when the temperature parameter $\tau \neq 1$. Consequently, optimal tuning of $\tau$ often lacks a theoretical guarantee, and the method itself offers limited flexibility.
\emph{Other margin modification techniques} \citep[e.g.,][]{cao2019learning,tan2020equalization} may not be Bayes-consistent for the balanced loss, even in simpler binary classification problems \citep{menonlong}.
\emph{The drawbacks of other strategies} beyond direct loss modification, such as weight normalization, have also been previously noted \citep{menonlong}.

\section{Surrogate Loss Families} 
\label{sec:surrogate}

This section generalizes two surrogate loss families designed for learning with class imbalance: Generalized Logit-Adjusted (GLA) loss functions and Generalized Class-Aware weighted (GCA) losses. Both families are derived from the general cross-entropy (GCE) framework \citep{mao2023cross}. For any $(h, x, y) \in \sH \times \sX \times \sY$, the GCE loss is defined as: 
\begin{equation*} 
\ell_{\rm{GCE}}(h, x, y) = \Psi^q \paren*{\frac{e^{h(x, y)}}{\sum_{y' \in \sY} e^{h(x, y')}}}, \mbox{\hspace{0.5cm}with\hspace{0.5cm}}
 \Psi^q(t) = \begin{cases}
 -\log(t) & \text{if } q = 0\\
 \frac{1}{q}(1 - t^{q}) & \text{if } q \in (0, \infty).
\end{cases} 
\end{equation*}
Specific choices of $q$ recover well-known loss functions: $q = 0$ yields the \emph{logistic loss} (or standard cross-entropy) \citep{Verhulst1838,Verhulst1845,Berkson1944,Berkson1951}; $q \in (0, 1)$ gives the \emph{generalized cross-entropy loss} notable for its robustness to label noise  \citep{zhang2018generalized}; and $q = 1$ corresponds to the \emph{mean absolute error loss} \citep{ghosh2017robust}.

\subsection{Generalized Logit-Adjusted (GLA) Losses}
\label{sec:GLA}

A \emph{Generalized Logit-Adjusted (GLA) Loss} modifies the logits within the GCE family by incorporating a class-prior-based bias term, $ \log(\pp(y)) / (1 - q)$:
\begin{equation}
\label{eq:GLA}
\ell_{\rm{GLA}}(h,x,y) = \Psi^q \paren*{\frac{e^{h(x, y) + \frac{\log(\pp(y))}{1 - q}}}{\sum_{y' \in \sY} e^{h(x, y') + \frac{\log(\pp(y'))}{1 - q}}}},
\end{equation}
The GLA loss family generalizes the Logit-Adjusted (LA) loss with $\tau = 1$. Specifically, when $q = 0$, Eq.~\eqref{eq:GLA} recovers the LA loss with $\tau = 1$ previously defined in Eq.~\eqref{eq:la}. Thus, GLA extends the concept of logit adjustment to the broader GCE family. As will be detailed in Section~\ref{sec:theory-GLA}, GLA losses are Bayes-consistent for any $q \in [0, 1)$, offering greater flexibility compared to the original LA loss (whose limitations were discussed in Section~\ref{sec:limitations-existing}).

The term inside the $\Psi^q $ function in Eq.~\eqref{eq:GLA} can be rewritten to highlight its behavior:
\begin{equation*}
\frac{e^{h(x, y) + \frac{\log(\pp(y))}{1 - q}}}{\sum_{y' \in \sY} e^{h(x, y') + \frac{\log(\pp(y'))}{1 - q}}} =  \frac{e^{h(x, y)} \cdot \pp(y)^{\frac{1}{1 - q}}}{\sum_{y' \in \sY} e^{h(x, y')} \cdot  \pp(y')^{\frac{1}{1 - q}}} = \frac{1}{\sum_{y' \in \sY} e^{h(x, y') - h(x, y)} \cdot \paren*{\frac{\pp(y')}{\pp(y)}}^{\frac{1}{1 - q}} }.
\end{equation*}
In this formulation, the term $\big( \pp(y')/\pp(y)\big)^{\frac{1}{1 - q}}$  acts as a weighting factor in the denominator, effectively creating a pairwise label margin adjustment that depends on the relative frequencies of class $y$ (the true class) and other classes $y'$. This mechanism encourages a larger separation (margin) when $y$ is a rare class (low $\pp(y)$) and $y'$ is a dominant
class (high $\pp(y')$) and reduces the risk that scores for dominant classes  overshadow those for rare classes.\ignore{, which is crucial for effective learning under class imbalance.}

\subsection{Generalized Class-Aware (GCA) Losses}
\label{sec:GCA}

A \emph{Generalized Class-Aware (GCA) loss} introduces class sensitivity by inversely weighting the GCE loss by class frequency $\pp(y)$ and incorporating class-dependent confidence margins $\rho_y$:
\begin{equation}
\label{eq:GCA}
\ell_{\rm{GCA}}(h,x,y) = \frac{1}{\pp(y)} \Psi^q \paren*{\frac{e^{h(x, y) / \rho_y}}{\sum_{y' \in \sY} e^{h(x, y') / \rho_y}}},
\end{equation}
where $\brho = \paren*{\rho_1, \ldots, \rho_n}$ is a vector of positive confidence margin parameters for each class. 
The GCA formulation encompasses standard class-weighting as a special case. For instance, the class-weighted cross-entropy loss (Eq.~\eqref{eq:wce}) is recovered when $q = 0$ and all confidence margins $\rho_y$ are set to 1. If all $\rho_y = 1$, Eq.~\eqref{eq:GCA} simplifies to:
$\ell_{\rm{GCA}}(h,x,y) = \frac{1}{\pp(y)} \Psi^q \paren*{\frac{e^{h(x, y)}}{\sum_{y' \in \sY} e^{h(x, y')}}}$,
thereby extending the class-weighted cross-entropy concept to the entire GCE family. The motivation for using the inverse of the prior in GCA remains the same for $q \neq 1$ as for $q = 1$. The parameter $q$ simply specifies a particular loss within the generalized cross-entropy family, applicable in both standard and imbalanced settings. The inverse of the prior is used to align with the definition of the balanced loss, which reduces the influence of class imbalance by reweighting each example's error accordingly. This ensures that GCA losses benefit from consistency guarantees with respect to the balanced loss.

The introduction of distinct confidence margin parameters $\brho$ is a key aspect of GCA losses. These parameters allow for fine-tuned adjustments to the decision boundaries.
By applying class-specific scaling with factors related to $\rho_y$ to the logit differences $[h(x, y) - h(x, y')]$-terms that inherently represent margins, the GCA loss (through an effective transformation to $\paren*{h(x, y) - h(x, y')} / \rho_y$) can more effectively separate dominant and rare classes, as such transformation modulates how confidently each class needs to be separated. Such margin adjustments, as highlighted by recent work of \citet{cortes2025balancing}, play a crucial role in effectively shifting decision boundaries across classes and mitigating imbalance. This, in turn, addresses the limitations of simpler class-weighting schemes mentioned in Section~\ref{sec:limitations-existing}.

Note that while the $\rho_k$ values can be treated as tunable hyperparameters and freely tuned via cross-validation, the search can be effectively guided by focusing on vectors $[\rho_k]_k$ near $[m_k^{1/3}]_k$, where $m_k$ denotes the number of samples in class $k$, as suggested by \citet{cortes2025balancing} and followed in our experiments. A similar derivation to theirs, adaptable to our setting, shows these values are theoretically optimal in a separable case, providing justification and guidance for selecting $\rho_k$ for GCA losses. Empirically, we also found GCA losses to be robust to variations in $\rho_k$ around these values. Consequently, while $\rho_k$ can be tuned, the default choice of $m_k^{1/3}$ performs well. When the number of classes $n$ is large, the search space can be further reduced by assigning identical $\rho_k$ values to underrepresented classes and reserving distinct values for the most frequent ones.

For fixed hyperparameters, the computational cost of GLA and GCA losses is comparable to that of standard neural networks trained with cross-entropy loss (that is, logistic loss with softmax) and to that of the baselines. Our loss functions are adapted from the general cross-entropy family and both share similar convergence behavior and remain practical when optimized with commonly used optimizers such as SGD, Adam, and AdaGrad. While our methods introduce additional hyperparameters, namely $\rho_k$ and $q$ in GCA losses and $q$ in GLA losses, the value of $\rho_k$ has a default choice (as discussed above), and $q$ serves a similar role to hyperparameters in the baseline methods listed in Table~\ref{tab:comparison-long} in Section~\ref{sec:experiments}, many of which also involve at least one extra tunable parameter.

\section{Theoretical Analysis}
\label{sec:theory}

In this section, we leverage Lemma~\ref{lemma:conditional-regret} to present a comprehensive theoretical analysis of the consistency for the two proposed surrogate loss families: Generalized Logit-Adjusted (GLA) losses and Generalized Class-Aware (GCA) losses. 

\subsection{Consistency Notions}
\label{sec:consistency-notions}

A critical characteristic of a surrogate loss function $\ell_A$, used in place of a target loss function $\ell_B$, is its
\emph{Bayes-consistency} \citep{steinwart2007compare}. This property ensures that if a sequence of predictor $\{h_n\}_{n\in \Nset}$ within $\sH_{\rm{all}}$ (the set of all measurable functions) asymptotically minimizes the surrogate loss 
$\ell_A$, it will also asymptotically minimize the target loss
$\ell_B$. Formally:
$\lim_{n \to +\infty} \sR_{\ell_A}(h_n) =
\sR^*_{\ell_A}(\sH_{\rm{all}}) \Rightarrow \lim_{n \to +\infty}
\sR_{\ell_B}(h_n) = \sR^*_{\ell_B}(\sH_{\rm{all}})$. However, Bayes-consistency is an asymptotic concept and is defined only for the comprehensive class of all measurable functions $\sH_{\rm{all}}$. A more practically relevant and informative concept is that of \emph{$\sH$-consistency bounds}. These bounds are non-asymptotic and tailored to a specific hypothesis class $\sH$
\citep{awasthi2022h,awasthi2022multi,awasthi2021calibration,awasthi2021finer,AwasthiMaoMohriZhong2023theoretically,awasthi2023dc,MaoMohriMohriZhong2023twostage,MaoMohriZhong2023characterization,MaoMohriZhong2023ranking,MaoMohriZhong2023rankingabs,MaoMohriZhong2023structured,mao2023cross,MaoMohriZhong2024deferral,MaoMohriZhong2024predictor,MaoMohriZhong2024score,mao2024h,mao2024multi,mao2024realizable,mao2024regression,MohriAndorChoiCollinsMaoZhong2024learning,cortes2024cardinality,mao2025enhanced,MaoMohriZhong2025mastering,MaoMohriZhong2025principled,mao2025theory,zhong2025fundamental,desalvo2025budgeted}). 
In the realizable
setting, these bounds take the form:
\begin{equation*}
\forall h \in \sH,\, \quad \sR_{\ell_B}(h) - \sR^*_{\ell_B}(\sH)
\leq \Gamma \paren*{\sR_{\ell_A}(h) - \sR^*_{\ell_A}(\sH)}.    
\end{equation*}
Here, $\Gamma$ is a non-increasing concave function such that $\Gamma(0) =
0$. 
In the more general non-realizable setting, the bound is augmented by a \emph{minimizability gap}, $\sM_{\ell}(\sH) =
\sR_{\ell}^*(\sH) - \E_{x} \bracket*{\sC_{\ell}^*\paren*{\sH, x}}$. This gap quantifies the difference between the best-in-class error and the expected best-in-class conditional error. The augmented bound is: 
\begin{equation*}
\sR_{\ell_B}(h) -
\sR^*_{\ell_B}(\sH) + \sM_{\ell_B}(\sH) \leq \Gamma
\paren*{\sR_{\ell_A}(h) - \sR^*_{\ell_A}(\sH) + \sM_{\ell_A}(\sH)}.    
\end{equation*} 
As demonstrated by \citet{mao2024universal,mohri2025beyond}, the minimizability gap is always non-negative and is bounded above by the approximation error
$\sA_{\ell}(\sH) = \sR^*_{\ell}(\sH) - \sR^*_{\ell}(\sH_{\rm{all}})$, i.e.,
$0 \leq \sM_{\ell}(\sH) \leq \sA_{\ell}(\sH)$. The minimizability gap becomes zero when $\sH =
\sH_{\rm{all}}$ or, more generally, when the approximation error $\sA_{\ell}(\sH) = 0$. In other cases, it is typically non-zero and offers a more refined measure than the approximation error. 
In particular,
$\sH$-consistency bounds imply Bayes-consistency when $\sH =
\sH_{\rm{all}}$ and generally provide stronger and more applicable guarantees.

\subsection{GLA Losses}
\label{sec:theory-GLA}

We now analyze the consistency properties of the GLA loss family. We establish that the LA loss is only Bayes-consistent for $\tau=1$.

\textbf{Bayes-Consistency.} 
It is known that the Logit-Adjusted (LA) loss is Bayes-consistent with respect to the balanced loss when its temperature parameter is set to one, $\tau = 1$ \citep{menonlong}. We begin by establishing a negative result: this consistency does not extend to other values of $\tau$.
\begin{restatable}{theorem}{LABayes}
\label{thm:la-bayes}
When $\tau \neq 1$, the LA loss $\ell_{\rm{LA}}$ is not Bayes-consistent with respect to the balanced loss $\lbal$. 
\end{restatable}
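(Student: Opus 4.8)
The plan is to reduce Bayes-consistency to a pointwise comparison between the label predicted by the minimizer of the conditional LA risk and the Bayes-optimal label for the balanced loss, and then to exhibit a single distribution on which these two predictions disagree on a set of positive probability.

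First I would identify the minimizer of the conditional LA risk over $\sH_{\rm{all}}$. For fixed $x$, writing $s_y(x) = e^{h(x,y)+\tau\log\pp(y)}/\sum_{y'} e^{h(x,y')+\tau\log\pp(y')}$, the conditional risk $\sC_{\ell_{\rm{LA}}}(h,x) = -\sum_y \pp(y\mid x)\log s_y(x)$ is exactly the cross-entropy between $\pp(\cdot\mid x)$ and the softmax vector $(s_y(x))_y$. Since a measurable $h$ can realize any probability vector as $(s_y(x))_y$ pointwise, Gibbs' inequality shows this is minimized precisely when $s_y(x) = \pp(y\mid x)$, i.e. when $e^{h(x,y)}\pp(y)^\tau \propto \pp(y\mid x)$. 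Hence the LA-optimal scores satisfy $h^*(x,y) = \log\pp(y\mid x) - \tau\log\pp(y) + c(x)$, and the induced prediction is $\argmax_{y} h^*(x,y) = \argmax_y \pp(y\mid x)/\pp(y)^\tau$.

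Then by Lemma~\ref{lemma:conditional-regret}, applied with $\sH = \sH_{\rm{all}}$ so that $\mathsf H(x) = \sY$, the conditional regret of the balanced loss vanishes exactly for labels in $\argmax_y \pp(y\mid x)/\pp(y)$. Thus Bayes-consistency would force $\argmax_y \pp(y\mid x)/\pp(y)^\tau \subseteq \argmax_y \pp(y\mid x)/\pp(y)$ for a.e. $x$. I would then break this for $\tau \neq 1$ with an explicit binary ($n=2$) construction: choose marginals with $\pp(1)\neq \pp(2)$ and an input $x_0$ of positive mass whose conditional odds $\pp(1\mid x_0)/\pp(2\mid x_0)$ lie strictly between $(\pp(1)/\pp(2))^\tau$ and $\pp(1)/\pp(2)$ — a nonempty open interval precisely because $\tau\neq1$ and the odds ratio differs from $1$. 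At $x_0$ the LA-optimal and balanced-optimal labels differ, so $h^*$ has strictly positive balanced conditional regret there.

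Finally I would assemble the contradiction: taking all conditionals strictly positive, $h^*$ is a genuine finite-valued minimizer attaining $\sR^*_{\ell_{\rm{LA}}}(\sH_{\rm{all}})$, so the constant sequence $h_n \equiv h^*$ satisfies the hypothesis of Bayes-consistency, yet the pointwise disagreement on the positive-mass event $\{x_0\}$ gives $\sR_{\lbal}(h^*) > \sR^*_{\lbal}(\sH_{\rm{all}})$, contradicting it. The one step needing care is the distribution construction: I must keep the marginals $\pp(y)$ consistent with the chosen conditionals (tuning mass at auxiliary points to hit the target marginal odds) while ensuring the disagreement window is entered, which is the crux of the argument; identifying the conditional minimizer and verifying attainment of the infimum are routine once all probabilities are positive.
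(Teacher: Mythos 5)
Your proposal is correct and follows essentially the same route as the paper's proof: characterize the Bayes-optimal scores for the LA loss (showing the induced prediction is $\argmax_y \pp(y\mid x)/\pp(y)^\tau = \argmax_y \pp(x\mid y)\pp(y)^{1-\tau}$), compare with the balanced-loss Bayes prediction $\argmax_y \pp(y\mid x)/\pp(y)$ from Lemma~\ref{lemma:conditional-regret}, and exhibit a distribution where they disagree. Your explicit binary construction with conditional odds strictly between $(\pp(1)/\pp(2))^\tau$ and $\pp(1)/\pp(2)$, together with the constant-sequence argument, actually fills in details that the paper's proof leaves as a one-line existence claim.
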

The proof, which involves characterizing the Bayes classifiers for both the LA loss and the balanced loss, is detailed in Appendix~\ref{app:la}. In contrast, the following result establishes the Bayes-consistency of the GLA loss with respect to the balanced loss for any $q \in [0, 1)$.
\begin{restatable}{theorem}{LACompBayes}
\label{thm:la-comp-bayes}
For any $q \in [0, 1)$, the GLA Loss $\ell_{\rm{GLA}}$ is Bayes-consistent with respect to the balanced loss $\lbal$.
\end{restatable}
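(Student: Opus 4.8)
\emph{Proof plan.} The plan is to reduce Bayes-consistency to a pointwise calibration statement over $\sH_{\rm{all}}$, to characterize the Bayes-optimal GLA solution explicitly, and to verify that the prediction it induces coincides with the balanced-optimal one. Since the minimizability gap vanishes for the complete set, for both $\ell \in \curl*{\ell_{\rm{GLA}}, \lbal}$ we have $\sR_{\ell}(h) - \sR^*_{\ell}(\sH_{\rm{all}}) = \E_x\bracket*{\Delta \sC_{\ell, \sH_{\rm{all}}}(h, x)}$. It therefore suffices to produce a calibration function $\delta$ with $\delta(\epsilon) > 0$ for every $\epsilon > 0$ such that $\Delta \sC_{\lbal, \sH_{\rm{all}}}(h,x) \geq \epsilon$ implies $\Delta \sC_{\ell_{\rm{GLA}}, \sH_{\rm{all}}}(h,x) \geq \delta(\epsilon)$; combined with $\E_x\bracket*{\Delta \sC_{\ell_{\rm{GLA}}, \sH_{\rm{all}}}(h_n, x)} \to 0$, the boundedness of the balanced conditional regret by $1/\pp_{\min}$, and Markov's inequality, this forces $\E_x\bracket*{\Delta \sC_{\lbal, \sH_{\rm{all}}}(h_n, x)} \to 0$.

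First I would characterize the conditional minimizer of $\ell_{\rm{GLA}}$. Setting $s_y = e^{h(x,y) + \log(\pp(y))/(1-q)} \big/ \sum_{y'} e^{h(x,y') + \log(\pp(y'))/(1-q)}$, as $h$ ranges over $\sH_{\rm{all}}$ the vector $s = (s_1, \dots, s_n)$ ranges over the probability simplex and $\sC_{\ell_{\rm{GLA}}}(h,x) = \sum_y \pp(y \mid x)\, \Psi^q(s_y)$. Minimizing over the simplex — for $q = 0$ this is the cross-entropy, minimized at $s_y = \pp(y\mid x)$; for $q \in (0,1)$ a Lagrange computation using the strict concavity of $t \mapsto t^q$ gives $s_y \propto \pp(y\mid x)^{1/(1-q)}$ — yields in all cases $s^\star_y \propto \pp(y\mid x)^{1/(1-q)}$. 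Inverting the softmax, $e^{h(x,y)} \propto s_y\, \pp(y)^{-1/(1-q)}$, so the minimizer satisfies $h^\star(x,y) = \tfrac{1}{1-q}\log\paren*{\pp(y\mid x)/\pp(y)} + C(x)$.

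Next I would link this to the balanced loss via Lemma~\ref{lemma:conditional-regret}. Over $\sH_{\rm{all}}$ every label is attainable, so $\mathsf H(x) = \sY$ and $\Delta \sC_{\lbal, \sH_{\rm{all}}}(h,x) = \max_{y} \pp(y\mid x)/\pp(y) - \pp(\hh(x)\mid x)/\pp(\hh(x))$, with balanced-optimal prediction $\argmax_y \pp(y\mid x)/\pp(y)$. Because $1/(1-q) > 0$ is a monotone rescaling, $\argmax_y h^\star(x,y) = \argmax_y \pp(y\mid x)/\pp(y)$, so the GLA minimizer induces exactly the balanced-optimal prediction. Moreover, inverting the softmax for an arbitrary $h$ gives $\hh(x) = \argmax_y h(x,y) = \argmax_y s_y\, \pp(y)^{-1/(1-q)}$, exhibiting the induced prediction as an argmax (hence a stable function away from ties) of the reweighted vector $s$.

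The main obstacle will be turning this pointwise matching into a uniform quantitative bound $\delta(\epsilon) > 0$. I would argue as follows: if $\Delta \sC_{\lbal, \sH_{\rm{all}}}(h,x) \geq \epsilon$, then $\hh(x)$ is $\epsilon$-suboptimal on the reweighted scale, so the reweighted vector $s_y\, \pp(y)^{-1/(1-q)}$ attains its maximum on a coordinate bounded away from the maximizer of $\pp(\cdot\mid x)/\pp(\cdot)$; this confines $s$ to a closed region not containing the unconstrained minimizer $s^\star$, and the strict convexity of $\sum_y \pp(y\mid x)\Psi^q(s_y)$ (strict convexity of $-\log$ when $q=0$, of $t\mapsto -t^q$ when $q\in(0,1)$) together with compactness of the simplex lower-bounds the GLA conditional regret by a positive amount. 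The delicate point — and the source of the $1/\pp_{\min}$ scaling flagged in the abstract — is that the reweighting by $\pp(y)^{-1/(1-q)}$ couples the simplex deviation of $s$ to the class priors, so the lower bound degrades as the rarest class shrinks; controlling this coupling uniformly over all conditionals $\pp(\cdot\mid x)$ with target gap at least $\epsilon$ is the crux. Since $\pp_{\min} > 0$ is fixed for any given distribution, the resulting $\delta(\epsilon)$ stays strictly positive, and the reduction of the first paragraph then delivers Bayes-consistency for every $q \in [0,1)$.
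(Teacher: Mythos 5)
Your proposal is correct, and its core computation coincides with the paper's proof: the paper likewise characterizes the conditional minimizer of $\ell_{\rm{GLA}}$ over $\sH_{\rm{all}}$ by the condition that the adjusted softmax equals $\paren[\big]{\pp(y \mid x)}^{1/(1-q)} \big/ \sum_{y'} \paren[\big]{\pp(y' \mid x)}^{1/(1-q)}$, inverts the softmax to get $e^{h^*(x,y)} \propto \paren[\big]{\pp(x \mid y)/\pp(x)}^{1/(1-q)}$, and concludes via Lemma~\ref{lemma:conditional-regret} that $\hh^*_{\rm{GLA}}(x) = \argmax_y \pp(x \mid y) = \hh^*_{\rm{bal}}(x)$ --- exactly your second and third paragraphs. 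Where you diverge is that the paper stops there: it treats the coincidence of the Bayes-optimal predictors as establishing Bayes-consistency, without converting the pointwise matching into the sequential statement $\sR_{\ell_{\rm{GLA}}}(h_n) \to \sR^*_{\ell_{\rm{GLA}}}(\sH_{\rm{all}}) \Rightarrow \sR_{\lbal}(h_n) \to \sR^*_{\lbal}(\sH_{\rm{all}})$. Your final paragraph supplies precisely that missing calibration step, and your instinct about where the difficulty lies is validated elsewhere in the paper: Theorem~\ref{thm:GLA} specialized to $\sH = \sH_{\rm{all}}$ (which is complete) gives the quantitative version of your $\delta(\epsilon)$ explicitly, namely $\Delta \sC_{\ell_{\rm{GLA}}, \sH}(h,x) \geq \frac{(\pp_{\min})^{2/(1-q)}(1-q)}{2} \paren[\big]{\Delta \sC_{\lbal, \sH}(h,x)}^2$ via Gibbs distributions and Pinsker-type inequalities, exhibiting exactly the $\pp_{\min}$ degradation you flag as the crux. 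So your compactness-plus-strict-convexity route is a sound qualitative substitute for that Pinsker argument (with the usual care that the constraint region is closed and excludes the minimizer $s^\star$ despite the discontinuity of the target regret at ties); the paper's approach buys an explicit modulus $\Gamma(t) \propto \sqrt{t}$, while yours buys a shorter, existence-only argument sufficient for the consistency claim alone.
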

The proof, provided in Appendix~\ref{app:la-comp-bayes}, characterizes the Bayes classifiers for the GLA loss. Note that Theorem~\ref{thm:la-comp-bayes} recovers the Bayes-consistency of the LA loss (when $q = 0$) as a special case, consistent with \citep{menonlong}.

\textbf{$\sH$-Consistency Bounds.}
We first present a counter-example
(Figure~\ref{fig:example}) demonstrating  
that even when 
\begin{wrapfigure}{r}{0.3\textwidth}
    \centering
\includegraphics[width=\linewidth]{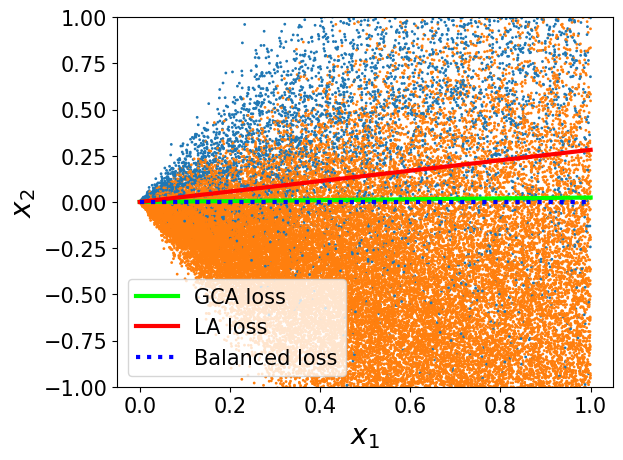}
    \vspace{-0.8cm}
    \caption{Counterexample to the $\sH$-consistency of $\ell_{\rm{LA}}$ for bounded hypothesis sets.}
    \label{fig:example}
    \vspace{-\intextsep}
\end{wrapfigure}
$\tau = 1$ 
(that is, for the standard LA loss, which is GLA with $q = 0$), $\ell_{\rm{LA}}$ is not $\sH$-consistent with respect to the balanced loss $\lbal$ for certain bounded hypothesis sets. 
In this example, considering a two-dimensional distribution where $x_1 \sim U[0, 1]$ and $x_2 \mid x_1 \sim \cN(y x_1, x_1^2)$, with $y$ following a Bernoulli distribution ($\Pr(+1) = \frac{1}{8}$), if the hypothesis set consists of linear models with bounded weights, specifically $\curl*{(x, y) \mapsto w_y \cdot x: \norm*{w_y} = 100}$, the best-in-class classifier for both the balanced loss and a GCA loss is $x_2 = 0$. However, the best-in-class classifier for the LA loss (with $\tau = 1$) differs and is not parallel to $x_2 = 0$. This implies that the LA loss with $\tau = 1$ is not $\sH$-consistent for this bounded hypothesis set.

\ignore{
\begin{figure*}[t]
    \centering
    \includegraphics[scale=0.6]{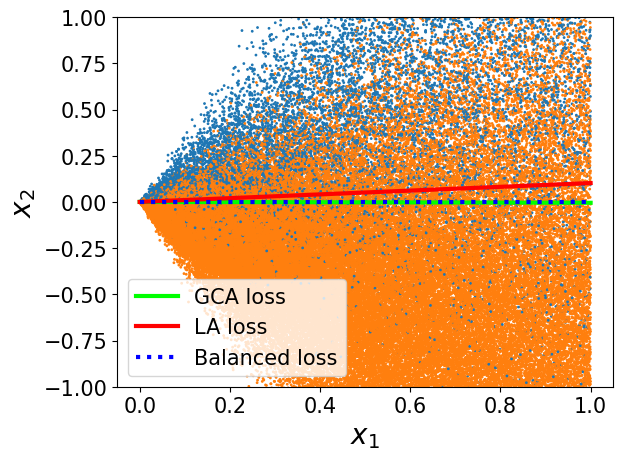}
    \caption{Counterexample to $\sH$-Consistency of the Logit-Adjusted Loss}
    \label{fig:example}
\end{figure*}
}

This counterexample shows that GLA losses do not guarantee $\sH$-consistency for bounded hypothesis sets. The following theorem establishes that the GLA loss $\ell_{\rm{GLA}}$ is $\sH$-consistent with respect to the balanced loss $\lbal$ if the hypothesis set $\sH$ is \emph{complete} that is, for every instance $x \in \sX$, the scoring vectors spanned by $\sH$ cover the entire space $\Rset^n$: $\curl*{h(x, \cdot) \colon h \in \sH} = \Rset^n$. Naturally, bounded hypothesis sets cannot satisfy this condition. Note that a complete set can be a strict subset of $\sH_{\rm{all}}$. For example, linear models with unbounded weights are complete, yet they do not equal $\sH_{\rm{all}}$. Note, the same positive result does not hold for LA losses with general $\tau$s. Being not Bayes-consistent, LA losses are not $\sH$-consistent for complete\ignore{or regular} hypothesis sets.

\begin{restatable}{theorem}{GLA}
\label{thm:GLA}
Assume that $\sH$ is complete. Then, for any $q \in [0, 1)$, the following $\sH$-consistency bound holds for the GLA loss $\ell_{\rm{GLA}}$:
\begin{align*}
\sR_{\lbal}(h) - \sR^*_{\lbal}(\sH) + \sM_{\lbal}(\sH)
 \leq  \Gamma \paren*{ \sR_{\ell_{\rm{GLA}}}(h)
    - \sR^*_{\ell_{\rm{GLA}}}(\sH) + \sM_{\ell_{\rm{GLA}}}(\sH) },
\end{align*}
where $\Gamma(t) = \frac{\sqrt{2t}}{\pp_{\min}}$ for $q = 0$, and
$\Gamma(t) = \frac{\sqrt{2t}}{\paren*{\pp_{\min}}^{\frac{1}{1 - q}} (1 - q)^{\frac12}}$ for $q \in (0, 1)$.
In the special case where the approximation error $\sA_{\ell_{\rm{GLA}}}(\sH)
= 0$, the bound simplifies to:
\begin{align*}
  \sR_{\lbal}(h) - \sR^*_{\lbal}(\sH)
  \leq  \Gamma \paren*{ \sR_{ \ell_{\rm{GLA}}}(h) - \sR^*_{\ell_{\rm{GLA}}}(\sH) },
\end{align*}
\end{restatable}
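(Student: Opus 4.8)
The plan is to reduce the claimed functional inequality to a pointwise comparison of conditional regrets and then invoke Jensen's inequality. Using $\sM_{\ell}(\sH) = \sR^*_{\ell}(\sH) - \E_{x}\bracket*{\sC^*_{\ell}(\sH,x)}$, both sides rewrite as expectations of conditional regrets, since
\[
\sR_{\ell}(h) - \sR^*_{\ell}(\sH) + \sM_{\ell}(\sH) = \sR_{\ell}(h) - \E_{x}\bracket*{\sC^*_{\ell}(\sH,x)} = \E_{x}\bracket*{\Delta\sC_{\ell,\sH}(h,x)}
\]
for both $\ell=\lbal$ and $\ell=\ell_{\rm{GLA}}$. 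As $\Gamma(t)=\sqrt{2t}/c$ is nondecreasing, concave, and satisfies $\Gamma(0)=0$, it suffices to prove the pointwise estimate $\Delta\sC_{\lbal,\sH}(h,x)\le\Gamma\paren*{\Delta\sC_{\ell_{\rm{GLA}},\sH}(h,x)}$ for every $x$; the full bound then follows from $\E_{x}\bracket*{\Delta\sC_{\lbal}} \le \E_{x}\bracket*{\Gamma(\Delta\sC_{\ell_{\rm{GLA}}})} \le \Gamma\paren*{\E_{x}\bracket*{\Delta\sC_{\ell_{\rm{GLA}}}}}$. The special case $\sA_{\ell_{\rm{GLA}}}(\sH)=0$ is immediate, since then $\sM_{\ell_{\rm{GLA}}}(\sH)=0$ while $\sM_{\lbal}(\sH)\ge 0$ may be dropped from the left-hand side.

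Next I would compute the GLA conditional regret using completeness. Fixing $x$ and writing $p_y=\pp(y\mid x)$, $w_y=\pp(y)$, $\beta=1/(1-q)$, and $s_y = e^{h(x,y)}w_y^{\beta}/\sum_{y'}e^{h(x,y')}w_{y'}^{\beta}$, one has $\sC_{\ell_{\rm{GLA}}}(h,x)=\sum_{y}p_y\Psi^q(s_y)$. Completeness forces $\curl*{h(x,\cdot)\colon h\in\sH}=\Rset^n$, so $s$ ranges over the whole open simplex and $\sfH(x)=\sY$. Minimizing $\sum_y p_y\Psi^q(s_y)$ over the simplex gives $s^*_y\propto p_y^{\beta}$, and a short calculation recasts the regret as a general-cross-entropy divergence, $\Delta\sC_{\ell_{\rm{GLA}}}(h,x)=\Lambda\sum_{y}s^*_y\,\Psi^q\paren*{s_y/s^*_y}$ with $\Lambda=\paren*{\sum_y p_y^{\beta}}^{1-q}$. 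Crucially, $\hh(x)=\argmax_y s_y/w_y^{\beta}$, whereas the balanced-optimal label is $y^*=\argmax_y p_y/w_y$, and Lemma~\ref{lemma:conditional-regret} gives $\Delta\sC_{\lbal,\sH}(h,x)=\max_y p_y/w_y - p_{\hh(x)}/w_{\hh(x)}$.

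I would then establish the pointwise bound. Assume the prediction is wrong, $a\coloneqq\hh(x)\ne y^*\eqqcolon b$. Combining prediction suboptimality ($s_a/w_a^{\beta}\ge s_b/w_b^{\beta}$) with $s^*_a/w_a^{\beta}<s^*_b/w_b^{\beta}$ yields the two-coordinate displacement inequality
\[
\frac{s_a-s^*_a}{w_a^{\beta}}-\frac{s_b-s^*_b}{w_b^{\beta}} \ge \frac{(p_b/w_b)^{\beta}-(p_a/w_a)^{\beta}}{\sum_y p_y^{\beta}} > 0 .
\]
For $q=0$ ($\beta=1$, $s^*=p$) this reads $\norm*{s-p}_1\ge\pp_{\min}\,\Delta\sC_{\lbal,\sH}(h,x)$, and since $\Delta\sC_{\ell_{\rm{GLA}}}=D_{\mathrm{KL}}(p\|s)$, Pinsker's inequality gives $\Delta\sC_{\ell_{\rm{GLA}}}\ge\tfrac12\pp_{\min}^{2}\paren*{\Delta\sC_{\lbal}}^2$, which is exactly $\Gamma(t)=\sqrt{2t}/\pp_{\min}$. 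For $q\in(0,1)$ a second-order expansion of the divergence gives $\Delta\sC_{\ell_{\rm{GLA}}}\ge\tfrac{1-q}{2}\sum_y p_y(s_y-s^*_y)^2$, and the $w^{\beta}$ factors in the displacement inequality produce the power $\pp_{\min}^{\beta}=\pp_{\min}^{1/(1-q)}$, targeting $\Delta\sC_{\ell_{\rm{GLA}}}\ge\tfrac{1-q}{2}\pp_{\min}^{2/(1-q)}\paren*{\Delta\sC_{\lbal}}^2$, i.e. the stated $\Gamma$.

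The hard part will be this pointwise bound for $q\in(0,1)$. Unlike the logarithmic case, $\Psi^q$ is \emph{bounded} (indeed $\Psi^q\le 1/q$), so placing excess mass on a class $a$ with tiny conditional probability $p_a$ is cheap under $\ell_{\rm{GLA}}$ even though it severely hurts $\lbal$; this saturation is precisely what forces the weaker $\pp_{\min}^{1/(1-q)}$ dependence and causes the naive $p$-weighted estimate $\sum_y p_y(s_y-s^*_y)^2$ to lose control at the very coordinate being mispredicted. The delicate step is to argue that the compensating probability deficit must fall on a coordinate with non-negligible weight — for instance $y^*$, which satisfies $p_{y^*}\ge\pp_{\min}$ because $\max_y p_y/w_y\ge 1$ — so that the displacement inequality converts into the quadratic lower bound with the correct constant $\tfrac{1-q}{2}$ and the correct power of $\pp_{\min}$.
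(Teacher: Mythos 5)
Your overall architecture --- rewriting both excess errors as expectations of conditional regrets, proving a pointwise inequality $\Delta\sC_{\lbal,\sH}(h,x)\leq\Gamma\paren*{\Delta\sC_{\ell_{\rm{GLA}},\sH}(h,x)}$, and concluding via concavity of $\Gamma$ and Jensen's inequality --- is exactly the paper's, and so is your $q=0$ case: there the conditional regret is the KL divergence between $\pp(\cdot\mid x)$ and the prior-tilted softmax, Pinsker's inequality gives unweighted $\ell_1$ control, and your two-coordinate displacement inequality (restricted to $\hh(x)$ and $y^*$, with each $\pp(y)$ factored out and lower-bounded by $\pp_{\min}$) yields $\Gamma(t)=\sqrt{2t}/\pp_{\min}$. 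Your identification of the minimizer $s^*_y\propto p_y^{1/(1-q)}$ under completeness, the product form of the regret with $\Lambda=\paren*{\sum_y p_y^{1/(1-q)}}^{1-q}$, and the handling of the special case $\sA_{\ell_{\rm{GLA}}}(\sH)=0$ are likewise correct and match the paper.

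The gap is where you yourself flag it. For $q\in(0,1)$ you propose the lower bound $\Delta\sC_{\ell_{\rm{GLA}}}\geq\tfrac{1-q}{2}\sum_y p_y(s_y-s^*_y)^2$, which is both unjustified and, as you correctly observe, insufficient: the $p_a$-weight kills the bound at the mispredicted coordinate $a=\hh(x)$ precisely when $p_a$ is small, and your sketched repair (arguing the compensating deficit must land on $y^*$) is never carried out. The paper avoids the weighted quadratic form entirely: it recognizes $\Delta\sC_{\ell_{\rm{GLA}},\sH}(h,x)=\Lambda\,\TT_{1-q}\paren*{\s(\cdot\mid x)\,\|\,\ov S(x,\cdot)}$ as a Tsallis relative entropy of order $1-q$ between the tilted distributions and invokes the Pinsker-type inequality $\TT_{1-q}(\s\,\|\,\ov S)\geq\tfrac{1-q}{2}\norm*{\s-\ov S}_1^2$ \citep[Eq.~(4.13)]{rastegin2013bounds}, i.e., an \emph{unweighted} $\ell_1$ bound exactly parallel to the $q=0$ case. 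From there your displacement inequality applies verbatim with $\pp(y)^{1/(1-q)}$ in place of $\pp(y)$ (producing the factor $\pp_{\min}^{2/(1-q)}$), and the remaining conversion from $\paren*{p_{y^*}/w_{y^*}}^{1/(1-q)}-\paren*{p_a/w_a}^{1/(1-q)}$ down to $p_{y^*}/w_{y^*}-p_a/w_a$ uses only $\max_y p_y/w_y\geq 1$ and $\sum_y p_y^{1/(1-q)}\leq 1$. The missing ingredient is thus a single citable inequality rather than a new structural argument, but as written your $q\in(0,1)$ case does not close.
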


The proof, presented in Appendix~\ref{app:GLA}, consists of first defining a Gibbs distribution induced by $h$ and next of applying a Pinsker-type inequality. Our technique is novel: it constructively upper-bounds the conditional regret of the balanced loss by that of the GLA loss, leveraging Lemma~\ref{lemma:conditional-regret}. Remarkably, when $q = 0$, Theorem~\ref{thm:GLA} yields $\sH$-consistency guarantees for the LA loss with $\tau = 1$ under the completeness assumption, a significantly stronger guarantee that the previously established Bayes-consistency result of \citet{menonlong}.
The $\sH$-consistency bounds for GLA losses depend inversely on the minimum class
probability, scaling as $1/\pp_{\min}$ when $q = 0$ and, more generally, as $\paren*{1/\pp_{\min}}^{\frac{1}{1 - q}}$ when $q \in (0, 1)$,

\subsection{GCA Losses}
\label{sec:theory-GCA}

This section presents consistency guarantees for GCA losses. We define a hypothesis set $\sH$ as \emph{regular} if, for any $x \in
\sX$, the predictions made by the hypotheses in $\sH$ cover the
complete set of $n$ possible classification labels: $\sf H(x) =
\curl*{\hh(x) \colon h \in \sH} = [n]$. Widely used hypothesis sets, such as linear models, neural network families, as well as the family of all measurable functions, are all regular. In particular, every complete hypothesis set is regular, while regularity alone is a much weaker yet natural assumption in practice.

The following theorem shows that for a regular hypothesis set, if a GCE loss $\ell_{\rm{GCE}}$ is $\sH$-consistent with respect to $\ell_{0-1}$ then its corresponding GCA loss $\ell_{\rm{GCA}}$
(Eq.~\eqref{eq:GCA}) is also $\sH$-consistent with respect to the balanced loss $\lbal$ (Eq.~\eqref{eq:bal}). For simplicity, we assume $\rho_y = 1$ for all $y$ throughout this section.

\begin{restatable}{theorem}{GCA}
\label{thm:GCA}
Let $\sH$ be a regular hypothesis set and $\ell_{\rm{GCE}}$ a GCE loss. Assume that there exists a function $\Gamma(t) = \beta\,
t^{\alpha}$ for some $\alpha \in (0, 1]$ and $\beta > 0$, such that
  the following $\sH$-consistency bound holds for all $h \in \sH$ and
  any distribution,
\begin{equation*}
  \sR_{\ell_{0-1}}(h) - \sR^*_{\ell_{0-1}}(\sH) + \sM_{\ell_{0-1}}(\sH)
  \leq \Gamma\paren*{\sR_{\ell_{\rm{GCE}}}(h)
    - \sR^*_{\ell_{\rm{GCE}}}(\sH) + \sM_{\ell_{\rm{GCE}}}(\sH)}.
\end{equation*}
Then, the following $\sH$-consistency bound holds for $\ell_{\rm{GCA}}$ with respect to $\lbal$ for all $h \in \sH$ and any distribution:
\begin{equation*}
   \sR_{\lbal}(h) - \sR_{\lbal}^*(\sH) + \sM_{\lbal}(\sH)
   \leq  \ov \Gamma\paren*{\sR_{\ell_{\rm{GCA}}}(h)
    -  \sR_{\ell_{\rm{GCA}}}^*(\sH) + \sM_{\ell_{\rm{GCA}}}(\sH)},
\end{equation*}
where $\ov \Gamma(t)
= \beta\, \paren*{\frac{1}{\pp_{\min}}}^{1 - \alpha} t^{\alpha}$.
In the special case where the approximation error $\sA_{\ell_{\rm{GCA}}}(\sH)
= 0$, this bound simplifies to:
\begin{align*}
  \sR_{\lbal}(h) - \sR^*_{\lbal}(\sH)
  \leq  \ov \Gamma \paren*{ \sR_{ \ell_{\rm{GCA}}}(h) - \sR^*_{\ell_{\rm{GCA}}}(\sH) }.
\end{align*}
\end{restatable}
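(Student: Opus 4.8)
The plan is to reduce the claimed bound to the assumed GCE-versus-zero-one bound through a conditional reweighting (distribution-tilting) argument, and then to lift the resulting pointwise inequality to risks via Jensen's inequality. First I would exploit that, with $\rho_y = 1$, the GCA loss and the balanced loss are exactly $\frac{1}{\pp(y)}$-reweightings of the GCE loss and the zero-one loss: $\ell_{\rm{GCA}}(h,x,y) = \frac{1}{\pp(y)}\ell_{\rm{GCE}}(h,x,y)$ and $\lbal(h,x,y) = \frac{1}{\pp(y)}\ell_{0-1}(h,x,y)$. Fixing $x$, I set $Z(x) = \sum_{y \in \sY}\frac{\pp(y\mid x)}{\pp(y)}$ and introduce the tilted conditional law $\wt\pp(y\mid x) = \frac{1}{Z(x)}\frac{\pp(y\mid x)}{\pp(y)}$, together with the conditional error $\wt\sC_\ell(h,x) = \sum_y \wt\pp(y\mid x)\,\ell(h,x,y)$ taken under it. A direct computation then gives the identities $\sC_{\ell_{\rm{GCA}}}(h,x) = Z(x)\,\wt\sC_{\ell_{\rm{GCE}}}(h,x)$ and $\sC_{\lbal}(h,x) = Z(x)\,\wt\sC_{\ell_{0-1}}(h,x)$. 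Since $Z(x)$ does not depend on $h$, the minimizers coincide and these identities carry over to best-in-class conditional errors, hence to conditional regrets: $\Delta\sC_{\ell_{\rm{GCA}},\sH}(h,x) = Z(x)\,\Delta\wt\sC_{\ell_{\rm{GCE}},\sH}(h,x)$ and $\Delta\sC_{\lbal,\sH}(h,x) = Z(x)\,\Delta\wt\sC_{\ell_{0-1},\sH}(h,x)$, where $\Delta\wt\sC$ denotes conditional regret under $\wt\pp$. Regularity ($\sf H(x) = [n]$) makes the second identity agree with Lemma~\ref{lemma:conditional-regret}.

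Next I would transfer the assumed bound to the tilted law. Because the hypothesized $\ell_{\rm{GCE}}$ bound holds for \emph{any} distribution, I can specialize it to the distribution whose marginal on $\sX$ is a point mass at $x$ and whose conditional label law is $\wt\pp(\cdot\mid x)$; for such a distribution all risks collapse to conditional errors and both minimizability gaps vanish, yielding the pointwise inequality $\Delta\wt\sC_{\ell_{0-1},\sH}(h,x) \le \Gamma\paren*{\Delta\wt\sC_{\ell_{\rm{GCE}},\sH}(h,x)} = \beta\,\paren*{\Delta\wt\sC_{\ell_{\rm{GCE}},\sH}(h,x)}^{\alpha}$, with regularity ensuring the conditional capacity of $\sH$ at $x$ is unchanged by tilting. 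Substituting the identities above and $\Delta\wt\sC_{\ell_{\rm{GCE}},\sH}(h,x) = \Delta\sC_{\ell_{\rm{GCA}},\sH}(h,x)/Z(x)$ produces $\Delta\sC_{\lbal,\sH}(h,x) \le \beta\,Z(x)^{1-\alpha}\,\paren*{\Delta\sC_{\ell_{\rm{GCA}},\sH}(h,x)}^{\alpha}$. Finally, since $\pp(y) \ge \pp_{\min}$ gives $Z(x) = \sum_y \pp(y\mid x)/\pp(y) \le 1/\pp_{\min}$ and $1-\alpha \ge 0$, I obtain the pointwise bound $\Delta\sC_{\lbal,\sH}(h,x) \le \ov\Gamma\paren*{\Delta\sC_{\ell_{\rm{GCA}},\sH}(h,x)}$ with $\ov\Gamma(t) = \beta\,(1/\pp_{\min})^{1-\alpha}\,t^{\alpha}$.

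To conclude, I would take expectations over $x$. Concavity of $\ov\Gamma$ (valid since $\alpha \in (0,1]$) and Jensen's inequality give $\E_x\bracket*{\ov\Gamma(\Delta\sC_{\ell_{\rm{GCA}},\sH}(h,x))} \le \ov\Gamma\paren*{\E_x[\Delta\sC_{\ell_{\rm{GCA}},\sH}(h,x)]}$; combining this with the standard identity $\E_x[\Delta\sC_{\ell,\sH}(h,x)] = \sR_\ell(h) - \sR^*_\ell(\sH) + \sM_\ell(\sH)$ applied to $\ell = \lbal$ and $\ell = \ell_{\rm{GCA}}$ yields the stated bound. The special case follows because $\sA_{\ell_{\rm{GCA}}}(\sH)=0$ forces $\sM_{\ell_{\rm{GCA}}}(\sH)=0$ via $0 \le \sM_{\ell_{\rm{GCA}}}(\sH) \le \sA_{\ell_{\rm{GCA}}}(\sH)$, while $\sM_{\lbal}(\sH) \ge 0$ may simply be dropped from the left-hand side.

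I expect the main obstacle to be the rigorous justification of the transfer step: establishing that the \emph{distributional} GCE-to-zero-one bound indeed implies the \emph{conditional} bound evaluated under the tilted law $\wt\pp(\cdot\mid x)$. This is precisely where the ``any distribution'' hypothesis and the regularity of $\sH$ are essential—they guarantee both that the comparison function $\Gamma$ is unchanged and that tilting the label probabilities at $x$ does not alter the attainable conditional predictions. Once this is in place, the reweighting identities and the estimate $Z(x) \le 1/\pp_{\min}$ are routine.
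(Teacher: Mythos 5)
Your proposal is correct and follows essentially the same route as the paper's proof in Appendix~\ref{app:GCA}: the same tilted conditional law $\qq(y \mid x) = \frac{\pp(y\mid x)}{\pp(y)}\frac{1}{Z(x)}$, the same reweighting identities turning the balanced/GCA conditional regrets into zero-one/GCE conditional regrets under the new distribution, the same bound $Z(x) \le 1/\pp_{\min}$, and the same Jensen step. Your explicit justification of the transfer step (specializing the ``any distribution'' hypothesis to a point mass at $x$ with the tilted conditional) is in fact spelled out more carefully than in the paper, which simply cites the $\sH$-consistency bound of $\ell_{\rm{GCE}}$ at the conditional-regret level.
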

The proof is provided in Appendix~\ref{app:GCA}, where we constructively define new conditional probabilities $\qq(y \mid x)$ along with a normalization factor $Z(x) = \sum_{y \in \sY} \frac{\pp(y \mid x)}{\pp(y)} \leq \frac{1}{\pp_{\min}}$. These probabilities transform the conditional regret of the balanced loss and the GCA loss into the conditional regrets of the zero-one loss and the GCE loss, respectively, under the newly defined distribution. 

When $\sA_{\ell_{\rm{GCA}}}(\sH) = 0$, the
$\sH$-consistency bound guarantees that if the surrogate
estimation error $\sR_{\ell_{\rm{GCA}}}(h) - \sR_{\ell_{\rm{GCA}}}^*(\sH)$ is
optimized up to $\e$, the estimation error for the balanced loss, $\sR_{\lbal}(h) - \sR_{\lbal}^*(\sH)$, is upper-bounded by $\Gamma(\e)$. For common choices of $\Psi$ in $\ell_{\rm{GCA}}$, \citet{mao2023cross,MaoMohriZhong2023characterization} show that $\Gamma$ takes specific forms: for $\Psi(t) = -\log(t)$, $\Gamma(t) = \sqrt{2t}$ (so $\alpha = 1 / 2$ and $\beta = \sqrt{2}$); for $\Psi(t) = \frac{1}{q}(1 - t^{q})$ with $q \in (0, 1)$,
$\Gamma(t) = \sqrt{2 n^{q} t}$ (so $\alpha = 1 / 2$ and $\beta = \sqrt{2n^q}$).  This leads to the following corollary for GCA losses:
\begin{corollary}
\label{cor:GCA}
Under the assumptions of Theorem~\ref{thm:GCA}, for all $h \in \sH$ and any distribution,  the following $\sH$-consistency bound holds for $\ell_{\rm{GCA}}$ with respect to $\lbal$:
\begin{equation*}
   \sR_{\lbal}(h) - \sR_{\lbal}^*(\sH) + \sM_{\lbal}(\sH)
   \leq  \ov \Gamma\paren*{\sR_{\ell_{\rm{GCA}}}(h)
    -  \sR_{\ell_{\rm{GCA}}}^*(\sH) + \sM_{\ell_{\rm{GCA}}}(\sH)},
\end{equation*}
where $\ov \Gamma(t) = \frac{\sqrt{2t}}{\sqrt{\pp_{\min}}}$ for $\Psi(t) = -\log(t)$ and $\ov \Gamma(t) = \frac{\sqrt{2n^q t}}{\sqrt{\pp_{\min}}}$ for  $\Psi(t) = \frac{1}{q}(1 - t^{q})$ with $q \in (0, 1)$. In the special case where the approximation error $\sA_{\ell_{\rm{GCA}}}(\sH)
= 0$, this bound simplifies to:
\begin{align*}
  \sR_{\lbal}(h) - \sR^*_{\lbal}(\sH)
  \leq  \ov \Gamma \paren*{ \sR_{ \ell_{\rm{GCA}}}(h) - \sR^*_{\ell_{\rm{GCA}}}(\sH) },
\end{align*}
\end{corollary}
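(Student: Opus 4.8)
The plan is to obtain the corollary as a direct specialization of Theorem~\ref{thm:GCA}. That theorem already expresses the comparison function $\ov\Gamma$ in terms of the exponent $\alpha$ and constant $\beta$ of the underlying $\ell_{\rm{GCE}}$-to-$\ell_{0-1}$ bound, so the entire task reduces to substituting the concrete pairs $(\alpha,\beta)$ attached to the two standard members of the GCE family and simplifying.

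First I would recall the relevant $\sH$-consistency bounds of $\ell_{\rm{GCE}}$ against $\ell_{0-1}$ on a regular hypothesis set, as established by \citet{mao2023cross,MaoMohriZhong2023characterization}. For the logistic choice $\Psi(t) = -\log(t)$ these give $\Gamma(t) = \sqrt{2t}$, i.e.\ $\alpha = \tfrac12$ and $\beta = \sqrt{2}$; for the generalized cross-entropy choice $\Psi(t) = \tfrac1q(1 - t^{q})$ with $q \in (0,1)$ they give $\Gamma(t) = \sqrt{2 n^{q} t}$, i.e.\ $\alpha = \tfrac12$ and $\beta = \sqrt{2 n^{q}}$. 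I would note explicitly that each bound is of the required power form $\Gamma(t) = \beta\, t^{\alpha}$ and holds for all $h \in \sH$ and uniformly over all distributions, so that the hypotheses of Theorem~\ref{thm:GCA} are met verbatim.

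Next I would apply Theorem~\ref{thm:GCA} with each of these $\Gamma$'s. The theorem returns the claimed inequality with $\ov\Gamma(t) = \beta\,(1/\pp_{\min})^{1-\alpha}\,t^{\alpha}$. Because $\alpha = \tfrac12$ in both cases, the prior factor collapses to $(1/\pp_{\min})^{1/2} = 1/\sqrt{\pp_{\min}}$ and $t^{\alpha} = \sqrt{t}$. Substituting $\beta = \sqrt{2}$ yields $\ov\Gamma(t) = \sqrt{2t}/\sqrt{\pp_{\min}}$ in the logistic case, and substituting $\beta = \sqrt{2 n^{q}}$ yields $\ov\Gamma(t) = \sqrt{2 n^{q} t}/\sqrt{\pp_{\min}}$ in the generalized cross-entropy case, exactly as stated. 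The simplified bound under $\sA_{\ell_{\rm{GCA}}}(\sH) = 0$ then follows at once: in that regime $\sM_{\ell_{\rm{GCA}}}(\sH) = 0$ (since $0 \le \sM_{\ell}(\sH) \le \sA_{\ell}(\sH)$), so the argument of $\ov\Gamma$ reduces to $\sR_{\ell_{\rm{GCA}}}(h) - \sR^*_{\ell_{\rm{GCA}}}(\sH)$, while the nonnegative term $\sM_{\lbal}(\sH)$ on the left can be dropped to only weaken the inequality.

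Since the derivation is a one-line substitution, there is no analytic obstacle; the only point demanding care is the bookkeeping one of confirming that the invoked GCE-to-$\ell_{0-1}$ bounds are indeed available with these precise constants and under the same regularity assumption on $\sH$ used in Theorem~\ref{thm:GCA}, so that its premise is satisfied literally. Were the source results stated only up to unspecified constants or under a stronger (e.g.\ completeness) assumption, the explicit $1/\sqrt{\pp_{\min}}$ constants in the corollary would need to be re-derived accordingly; granting the cited bounds as stated, the result is immediate.
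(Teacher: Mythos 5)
Your proposal is correct and matches the paper's own derivation exactly: the corollary is obtained by instantiating Theorem~\ref{thm:GCA} with the known GCE-to-$\ell_{0-1}$ bounds of \citet{mao2023cross,MaoMohriZhong2023characterization}, namely $\alpha = \tfrac12$ with $\beta = \sqrt{2}$ for $\Psi(t) = -\log(t)$ and $\beta = \sqrt{2n^q}$ for $\Psi(t) = \tfrac1q(1 - t^q)$, which collapses $\ov\Gamma(t) = \beta\,(1/\pp_{\min})^{1-\alpha}t^{\alpha}$ to the stated forms. Your handling of the $\sA_{\ell_{\rm{GCA}}}(\sH) = 0$ case via the vanishing minimizability gap is also the paper's argument.
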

If $\sH = \sH_{\rm{all}}$, taking the limit on both sides implies the Bayes-consistency of these GCA losses $\ell_{\rm{GCA}}$ with respect to the
balanced loss $\lbal$. More generally, Corollary~\ref{cor:GCA}
demonstrates that $\ell_{\rm{GCA}}$ admits an excess error bound relative to
$\lbal$ if $\ell_{\rm{GCE}}$ has such a bound relative to
$\ell_{0-1}$.

\citet{mao2023cross} and \citet{MaoMohriZhong2023characterization} showed that loss functions belonging to the widely used general cross-entropy (GCE) family (including logistic loss) admit $\sH$-consistency bounds with respect to the multi-class
zero-one loss $\ell_{0-1}$ when the hypothesis set is complete and \emph{bounded}, respectively. We say a hypothesis set $\sH$ is \emph{bounded} if $\sH = \curl*{h \colon \sX \times \sY \to \Rset \mid h(\cdot, y) \in \sF,\ \forall y \in \sY}$, where $\sF$ is a family of real-valued functions $f$ satisfying $|f(x)| \leq \Lambda(x)$ for all $x \in \sX$, and all values in $[-\Lambda(x), +\Lambda(x)]$ are attainable. Here, $\Lambda(x) > 0$ is a fixed function on $\sX$. Boundedness also implies regularity.
Thus, a key advantage of GCA losses is their general 
$\sH$-consistency: they are $\sH$-consistent for any  hypothesis set that is bounded or complete. Furthermore, their consistency bounds exhibit an improved scaling with the minimum class probability, $1/\sqrt{\pp_{\min}}$. This contrasts favorably with GLA losses, offering potentially stronger theoretical support in highly imbalanced settings.

\textbf{Comparison and Discussion.}
%
Our theoretical analysis reveals distinct characteristics for the two loss families:
GLA losses are Bayes-consistent (for $q \in [0, 1)$). However, their $\sH$-consistency requires the hypothesis set 
$\sH$ to be complete (and thus unbounded). The corresponding bounds depend on the minimum class probability $\pp_{\min}$, scaling as $1/\pp_{\min}$ (for $q = 0$) or less favorably as $\paren*{1/\pp_{\min}}^{\frac{1}{1 - q}}$ ( for $q \in (0, 1)$).  In contrast, GCA losses demonstrate $\sH$-consistency for any hypothesis set that is bounded or complete. Their $\sH$-consistency
bounds scale more favorably with the minimum class probability, as $1/\sqrt{\pp_{\min}}$. This suggests GCA losses offer stronger theoretical guarantees, particularly in settings with significant class imbalance or when using more restricted hypothesis sets.

The trade-offs between these theoretical properties and empirical performance are important. As we will show in the next experimental section (Section~\ref{sec:experiments}), GLA losses often achieve slightly better empirical results on common benchmarks. Conversely, GCA losses tend to have an edge in highly imbalanced scenarios. This empirical behavior aligns with our theoretical findings:
GLA losses may be preferred for moderately imbalanced scenarios when using expressive, potentially unbounded hypothesis sets where their specific form of logit adjustment is beneficial; GCA losses are theoretically better-suited for highly imbalanced settings due to their favorable consistency scaling and applicability to a wider range of hypothesis sets. For bounded hypothesis sets where GLA's $\sH$-consistency is not guaranteed, GCA is the theoretically preferred option.

The assumptions in this section primarily concern properties of the hypothesis set. These are standard and typically satisfied in practice. Most natural hypothesis sets, such as linear models, neural networks, and the set of all measurable functions, are regular, meaning they produce predictions across all $n$ classes. Whether a hypothesis set is bounded or complete depends on the modeling choice (e.g., bounded weights in linear models). Importantly, our results do not assume any specific data distribution and hold for arbitrary distributions, including those arising in real-world settings.

Compared to the previous work \citep{cortes2025balancing}, the key difference is that \textsc{immax} \citep{cortes2025balancing} is designed for optimizing the standard multi-class 0-1 loss under imbalanced data, whereas the proposed GCE and GCA losses are designed to optimize the balanced loss. As a result, \textsc{immax} enjoys consistency with respect to the standard 0-1 loss, while GCE and GCA are consistent with respect to the balanced loss, a property most existing surrogate losses lack, as discussed in Section~\ref{sec:limitations-existing}.

Appendix~\ref{app:margin-bounds} further provides margin bounds for both the GCA and GLA losses in the more general cost-sensitive multi-class classification setting. We show that both losses benefit from margin guarantees, with more favorable bounds for GCA losses, as the GLA bounds depend on $1/\pp_{\min}$.

\textbf{Theoretical novelty.} Classical margin bounds have been extensively studied (see, for example \citep{KoltchinskiiPanchenko2000,KoltchinskiiPanchenko2002,SchapireFreundBartlettLee1997,CortesMohriSuresh2021,MohriRostamizadehTalwalkar2018}).
In particular, \citet{MohriRostamizadehTalwalkar2018} derived margin bounds for standard multi-class classification. In contrast, we derive new margin bounds for cost-sensitive classification, a setting that introduces additional complexity due to the presence of instance-dependent cost functions. This requires the development of new proof techniques, including the derivation of an upper bound on the loss function expressed in terms of a margin loss and a maximum operator, along with an analysis of the Rademacher complexity of this maximum term via the vector contraction lemma. Moreover, in addition to the resulting margin bounds for GCA loss functions, our margin bounds for GLA loss functions are non-trivial and require a specific and entirely new analysis (Appendix~\ref{app:margin-GLA}). \citet{mao2023cross,MaoMohriZhong2023characterization} studied $\sH$-consistency bounds for loss functions in the general cross-entropy (GCE) family with respect to the standard zero-one loss. In contrast, our work establishes $\sH$-consistency bounds for the proposed GCA and GLA losses with respect to the balanced loss, where both the surrogate and target losses are more complex. This required several novel technical contributions, including a characterization of the conditional regret of the balanced loss, the use of Gibbs distributions and Pinsker-type inequalities for analyzing GLA losses, and a reduction of the conditional regrets of the balanced and GCA losses to those of the zero-one and GCE losses under a newly defined distribution.

\section{Experiments}
\label{sec:experiments}

This section details the empirical evaluation of our proposed Generalized Logit-Adjusted (GLA) and Generalized Class-Aware (GCA) loss functions. We compare their effectiveness in minimizing the balanced loss against several baseline methods on the CIFAR-10, CIFAR-100 \citep{Krizhevsky09learningmultiple}, and Tiny ImageNet \citep{le2015tiny} datasets with respectively 10, 100 and 200 classes.
To simulate class imbalance, we reduced the percentage of examples per class identically in both training and test sets, following exactly the protocol in \citep{menonlong}. Two types of imbalance were considered: Long-tailed imbalance where class sample sizes decrease exponentially across sorted classes \citep{cui2019class}, and Step imbalance where minority classes share one sample size, and majority classes share another, creating a distinct two-group split \citep{buda2018systematic}. The severity of imbalance is quantified by the imbalance ratio,
$\rho = \frac{\max_{k = 1}^n m_k }{\min_{k = 1}^n m_k }$, where $m_k$ is the number of samples in class $k$.  We evaluated performance at $\rho = 100$ (C), following \citet{menonlong}, and at a more extreme setting of $\rho = 1000$ (M). 

Our experimental setup, including training procedures and neural network architectures, strictly followed \citet{menonlong}. We used a ResNet-32 architecture with ReLU activations \citep{he2016deep}. Standard data augmentation techniques were applied: for CIFAR-10 and CIFAR-100, this involved 4-pixel padding followed by $32 \times 32$ random crops and random horizontal flips; for Tiny ImageNet, 8-pixel padding was used, followed by $64 \times 64$ random crops.
All models were trained for $200$ epochs using Stochastic Gradient Descent (SGD) with Nesterov momentum \citep{nesterov1983method}. We used a  a batch size of $1,024$, a weight decay of $1\times 10^{-3}$, and a cosine decay learning rate schedule \citep{loshchilov2016sgdr} without restarts, with an initial learning rate of $0.2$. 

We compared our GLA and GCA losses against a suite of widely used baseline methods: standard cross-entropy (\CE) loss, class-weighted cross-entropy (\WCE) loss \citep{xie1989logit,morik1999combining}, Logit Adjusted (\LA) loss \citep{menonlong}, Equalization (\EQUAL) loss
\citep{tan2020equalization}, Class-Balanced (\CB) loss \citep{cui2019class}, \FOCAL\ loss \citep{ross2017focal} and the \LDAM\ loss \citep{cao2019learning}. For all methods, including our GLA and GCA losses, we tune the hyperparameters using a validation set held out separately from the training set. For the parameter $q$ in both GLA and GCA, we selected values from $\{0.0, 0.1, \dots, 0.9\}$, which are standard choices within the general cross-entropy family. Its performance depends on dataset imbalance (e.g., long-tailed vs. step imbalance).
Further details about the experiments including baselines are provided in Appendix~\ref{app:experiments}. Performance was primarily evaluated using the balanced error on the imbalanced test sets (i.e., the average of the balanced loss over the test data). Results were averaged over five independent runs, and we report means and standard deviations. Table~\ref{tab:comparison-long} presents the balanced error for ResNet-32 on long-tailed and step-imbalanced versions of CIFAR-10, CIFAR-100, and Tiny ImageNet.

\begin{table}[t]
\vskip -0.3in
\caption{Balanced error of ResNet-32 on \emph{long-tailed} (left) and \emph{step-imbalanced} (right) imbalanced CIFAR-10, CIFAR-100 and
  Tiny ImageNet; means $\!\pm\!$ standard deviations over $5$ runs. Note, we are reporting total error and not dividing by number of classes. Imbalance ratios $\rho = 1000$ (M),  100 (C).}
\centering
\resizebox{.47\textwidth}{!}{
  \begin{tabular}{@{\hspace{0pt}}lllll@{\hspace{0pt}}}
    Method &  $\rho$ & CIFAR-10 & CIFAR-100 & Tiny I.Net  \\
    \midrule
    \CE\ & \multirow{9}{*}{M} &  2.46 $\pm$ 0.09 & 38.45 $\pm$ 0.37 &  70.23 $\pm$ 0.38 \\
    \WCE\ & &  2.52 $\pm$ 0.17 & 39.89 $\pm$ 0.76 & 75.89 $\pm$ 0.67 \\
    \LA\ $(\tau=1)$ & &  2.18 $\pm$ 0.18 & 35.92 $\pm$ 0.47 & 67.17 $\pm$ 0.49 \\
    \EQUAL\ & & 2.38 $\pm$ 0.07 & 37.33 $\pm$ 0.36 & 68.44 $\pm$ 0.72 \\
    \CB\ & & 2.58 $\pm$ 0.03 & 41.46 $\pm$ 0.41 & 80.22 $\pm$ 0.59 \\
    \FOCAL\ & &  2.43 $\pm$ 0.10 & 38.02 $\pm$ 0.54 & 69.13 $\pm$ 0.83 \\
    \LDAM\ & &  2.39 $\pm$ 0.08 & 37.39 $\pm$ 0.36 & 68.27 $\pm$ 0.81 \\
    \textbf{GCA}  & & \textbf{2.02 $\pm$ 0.15} & \textbf{33.17 $\pm$ 0.57} & \textbf{64.88 $\pm$ 0.66} \\
    GLA & & 2.04 $\pm$ 0.15 & 33.99 $\pm$ 0.52 & 65.57 $\pm$ 0.27 \\    
    \cmidrule{1-1} \cmidrule{2-5}
    \CE\ & \multirow{9}{*}{C} & 2.72 $\pm$ 0.02 & 61.53 $\pm$ 0.29 & 106.93  $\pm$ 0.89 \\
    \WCE\ & &  2.80 $\pm$ 0.08 & 62.20 $\pm$ 0.57 &  112.50 $\pm$ 0.97 \\
    \LA\ $(\tau=1)$ & & 2.23 $\pm$ 0.08 & 56.23 $\pm$ 0.21 & 102.81 $\pm$ 0.89 \\
    \EQUAL\ & &  2.60 $\pm$ 0.08 & 57.25 $\pm$ 0.40 & 104.91 $\pm$ 0.84 \\
    \CB\ & &  2.76 $\pm$ 0.04 & 61.55 $\pm$ 0.28 & 115.22 $\pm$ 0.71 \\
    \FOCAL\ & &  2.70 $\pm$ 0.06 & 61.21 $\pm$ 0.24 & 105.47 $\pm$ 0.59 \\
    \LDAM\ & & 2.66 $\pm$ 0.08 & 60.37 $\pm$ 0.60 & 103.99 $\pm$ 0.58 \\
    GCA  & &  2.19 $\pm$ 0.08 & 54.02  $\pm$ 0.38 & 101.34 $\pm$ 0.81 \\
    \textbf{GLA} & & \textbf{2.07 $\pm$ 0.06} & \textbf{53.68 $\pm$ 0.76} & \textbf{100.70 $\pm$ 0.83} \\
    \end{tabular}
    }\hspace{0.06\textwidth}
    \resizebox{.46\textwidth}{!}{
    \begin{tabular}{@{\hspace{0pt}}lllll@{\hspace{0pt}}}
      Method & $\rho$ & CIFAR-10 & CIFAR-100 & Tiny I.Net \\
    \midrule
    \CE\ & \multirow{9}{*}{M} & 6.33 $\pm$ 0.01 & 12.47 $\pm$ 0.12 & 39.41 $\pm$ 0.40 \\
    \WCE\ & & 6.44 $\pm$ 0.02 & 13.66 $\pm$ 0.45 & 39.28 $\pm$ 0.31 \\
    \LA\ $(\tau=1)$ & & 5.54 $\pm$ 0.48  & 11.42 $\pm$ 0.33 & 37.44 $\pm$ 0.25 \\
    \EQUAL\ & & 5.89 $\pm$ 0.24 & 12.24 $\pm$ 0.20 & 38.43 $\pm$ 0.44 \\
    \CB\ & &  6.38 $\pm$ 0.01 & 14.96 $\pm$ 0.32 & 47.35 $\pm$ 0.73 \\
    \FOCAL\ & & 6.35 $\pm$ 0.01 & 12.25 $\pm$  0.17 & 39.21 $\pm$ 0.31 \\
    \LDAM\ & & 6.34 $\pm$ 0.01 & 12.30 $\pm$ 0.11 & 38.21 $\pm$ 0.27 \\
    \textbf{GCA}  & & \textbf{5.35 $\pm$ 0.02} & \textbf{10.43 $\pm$ 0.15} & \textbf{36.32 $\pm$ 0.32} \\
    GLA  & & 5.39 $\pm$ 0.02 & 10.58 $\pm$ 0.19 & 36.57 $\pm$ 0.35 \\
    \cmidrule{1-1} \cmidrule{2-5}
    \CE\ & \multirow{9}{*}{C} &  3.66 $\pm$ 0.15 & 60.16 $\pm$ 0.09 & 39.68 $\pm$ 0.25  \\
    \WCE\ & & 3.68 $\pm$ 0.11 & 61.40 $\pm$ 0.51 & 43.68 $\pm$ 0.42 \\
    \LA\ $(\tau=1)$ & &  2.70 $\pm$ 0.12 & 55.43 $\pm$ 0.63 & 38.42 $\pm$ 0.14 \\
    \EQUAL\ & &  3.18 $\pm$ 0.12 & 57.73 $\pm$ 0.54 & 38.91 $\pm$ 0.20 \\
    \CB\ & &  3.81 $\pm$ 0.02  & 66.41 $\pm$ 0.11 & 50.51 $\pm$ 0.45 \\
    \FOCAL\ & & 3.60 $\pm$ 0.11 & 60.06 $\pm$ 0.13  & 39.63 $\pm$ 0.27 \\
    \LDAM\ & &  3.41 $\pm$ 0.10 & 58.95 $\pm$ 0.11 &  38.67 $\pm$ 0.19 \\
     GCA  & & 2.57 $\pm$ 0.04 & 53.85 $\pm$ 0.47 & 37.59 $\pm$ 0.43 \\
    \textbf{GLA}  & & \textbf{2.48 $\pm$ 0.11} & \textbf{52.70 $\pm$ 0.15} & \textbf{36.71 $\pm$ 0.33} \\
    \end{tabular}
    }
\vskip -0.2in
\label{tab:comparison-long}
\end{table}

The results in Table~\ref{tab:comparison-long} highlight that both our proposed GCA losses \ignore{(incorporating class-dependent confidence margins) }and GLA losses generally outperform key baselines such as class-weighted cross-entropy (WCE) and Logit-Adjusted (LA) losses across the tested datasets and imbalance types.
This demonstrates the efficacy of our novel loss formulations in achieving better balanced error, indicating improved fairness and accuracy on minority classes. Comparing our two proposed families, GLA losses often achieve the best overall results on several benchmarks, particularly under moderate imbalance ($\rho = 100$). However, GCA losses in accordance with its better $1/\sqrt{\pp_{\min}}$ bound
tend to exhibit an advantage in settings with high class imbalance ($\rho = 1000$).  

The strong performance of GCA losses, especially their edge in highly imbalanced scenarios ($\rho = 1000$), underscores the impact of using class-dependent confidence margins. These margins allow GCA to adapt more effectively to severe skews in data distribution compared to simpler weighting or logit adjustment techniques. The performance difference observed between $\rho = 100$ and $\rho = 1000$ across all methods, and particularly the relative strengths of GLA and GCA, highlights the sensitivity of these approaches to the severity of class imbalance.
\ignore{We also observe, in accordance with \citep{cao2019learning}, that the estimated balanced error based on learning with different sampling rations varies and the top and bottom half of Table~\ref{tab:comparison-long} produce different numbers. We attribute this effect to interactions between model capacity and difficulty  of the classes.}

\section{Conclusion}
\label{sec:conclusion}

We introduced two novel families of surrogate losses, GLA and GCA
losses, for balanced multi-class classification under class
imbalance. Both are principled extensions of widely used loss designs,
and our theoretical analysis establishes their consistency properties,
highlighting the more favorable $\sH$-consistency bounds of GCA losses
in imbalanced regimes. Empirically, both loss families outperform existing baselines, with GLA performing better in common benchmarks and GCA offering an edge in highly imbalanced settings. These
results position GLA and GCA losses as state-of-the-art surrogates for
balanced classification, bridging the gap between fairness,
consistency, and practical performance.
The extension of these surrogate loss families to structured
prediction or multi-label classification could significantly broaden
their impact.  Finally, refining consistency bounds under realistic
hypothesis classes and leveraging recent enhanced $\sH$-consistency
bounds could provide deeper insights into the behavior of these and
related loss functions in balanced learning settings.


\bibliography{blid,add}

\begin{thebibliography}{154}
\providecommand{\natexlab}[1]{#1}
\providecommand{\url}[1]{\texttt{#1}}
\expandafter\ifx\csname urlstyle\endcsname\relax
  \providecommand{\doi}[1]{doi: #1}\else
  \providecommand{\doi}{doi: \begingroup \urlstyle{rm}\Url}\fi

\bibitem[Awasthi et~al.(2021{\natexlab{a}})Awasthi, Frank, Mao, Mohri, and Zhong]{awasthi2021calibration}
P.~Awasthi, N.~Frank, A.~Mao, M.~Mohri, and Y.~Zhong.
\newblock Calibration and consistency of adversarial surrogate losses.
\newblock In \emph{Advances in Neural Information Processing Systems}, pages 9804--9815, 2021{\natexlab{a}}.

\bibitem[Awasthi et~al.(2021{\natexlab{b}})Awasthi, Mao, Mohri, and Zhong]{awasthi2021finer}
P.~Awasthi, A.~Mao, M.~Mohri, and Y.~Zhong.
\newblock A finer calibration analysis for adversarial robustness.
\newblock \emph{arXiv preprint arXiv:2105.01550}, 2021{\natexlab{b}}.

\bibitem[Awasthi et~al.(2022{\natexlab{a}})Awasthi, Mao, Mohri, and Zhong]{awasthi2022h}
P.~Awasthi, A.~Mao, M.~Mohri, and Y.~Zhong.
\newblock {$H$}-consistency bounds for surrogate loss minimizers.
\newblock In \emph{International Conference on Machine Learning}, pages 1117--1174, 2022{\natexlab{a}}.

\bibitem[Awasthi et~al.(2022{\natexlab{b}})Awasthi, Mao, Mohri, and Zhong]{awasthi2022multi}
P.~Awasthi, A.~Mao, M.~Mohri, and Y.~Zhong.
\newblock Multi-class {$ H $}-consistency bounds.
\newblock In \emph{Advances in Neural Information Processing Systems}, pages 782--795, 2022{\natexlab{b}}.

\bibitem[Awasthi et~al.(2023{\natexlab{a}})Awasthi, Mao, Mohri, and Zhong]{AwasthiMaoMohriZhong2023theoretically}
P.~Awasthi, A.~Mao, M.~Mohri, and Y.~Zhong.
\newblock Theoretically grounded loss functions and algorithms for adversarial robustness.
\newblock In \emph{International Conference on Artificial Intelligence and Statistics}, pages 10077--10094, 2023{\natexlab{a}}.

\bibitem[Awasthi et~al.(2023{\natexlab{b}})Awasthi, Mao, Mohri, and Zhong]{awasthi2023dc}
P.~Awasthi, A.~Mao, M.~Mohri, and Y.~Zhong.
\newblock {DC}-programming for neural network optimizations.
\newblock \emph{Journal of Global Optimization}, 2023{\natexlab{b}}.

\bibitem[Bartlett et~al.(2006)Bartlett, Jordan, and McAuliffe]{bartlett2006convexity}
P.~L. Bartlett, M.~I. Jordan, and J.~D. McAuliffe.
\newblock Convexity, classification, and risk bounds.
\newblock \emph{Journal of the American Statistical Association}, 101\penalty0 (473):\penalty0 138--156, 2006.

\bibitem[Bartlett et~al.(2017)Bartlett, Foster, and Telgarsky]{BartlettFosterTelgarsky2017}
P.~L. Bartlett, D.~J. Foster, and M.~Telgarsky.
\newblock Spectrally-normalized margin bounds for neural networks.
\newblock \emph{CoRR}, abs/1706.08498, 2017.

\bibitem[Berkson(1944)]{Berkson1944}
J.~Berkson.
\newblock Application of the logistic function to bio-assay.
\newblock \emph{Journal of the American Statistical Association}, 39:\penalty0 357--365, 1944.

\bibitem[Berkson(1951)]{Berkson1951}
J.~Berkson.
\newblock Why {I} prefer logits to probits.
\newblock \emph{Biometrics}, 7\penalty0 (4):\penalty0 327--339, 1951.

\bibitem[Brodersen et~al.(2010)Brodersen, Ong, Stephan, and Buhmann]{brodersen2010balanced}
K.~H. Brodersen, C.~S. Ong, K.~E. Stephan, and J.~M. Buhmann.
\newblock The balanced accuracy and its posterior distribution.
\newblock In \emph{International Conference on Pattern Recognition}, pages 3121--3124, 2010.

\bibitem[Buda et~al.(2018)Buda, Maki, and Mazurowski]{buda2018systematic}
M.~Buda, A.~Maki, and M.~A. Mazurowski.
\newblock A systematic study of the class imbalance problem in convolutional neural networks.
\newblock \emph{Neural Networks}, 106:\penalty0 249--259, 2018.

\bibitem[Byrd and Lipton(2019)]{byrd2019effect}
J.~Byrd and Z.~Lipton.
\newblock What is the effect of importance weighting in deep learning?
\newblock In \emph{International Conference on Machine Learning}, pages 872--881, 2019.

\bibitem[Cao et~al.(2019)Cao, Wei, Gaidon, Arechiga, and Ma]{cao2019learning}
K.~Cao, C.~Wei, A.~Gaidon, N.~Arechiga, and T.~Ma.
\newblock Learning imbalanced datasets with label-distribution-aware margin loss.
\newblock In \emph{Advances in Neural Information Processing Systems}, 2019.

\bibitem[Cardie and Nowe(1997)]{CardieNowe1997}
C.~Cardie and N.~Nowe.
\newblock Improving minority class prediction using case-specific feature weights.
\newblock In \emph{International Conference on Machine Learning}, pages 57--65, 1997.

\bibitem[Chan and Stolfo(1998)]{chan1998learning}
P.~K. Chan and S.~J. Stolfo.
\newblock Learning with non-uniform class and cost distributions: Effects and a distributed multi-classifier approach.
\newblock In \emph{Workshop Notes KDD-98 Workshop on Distributed Data Mining}, pages 1--9, 1998.

\bibitem[Chawla et~al.(2002)Chawla, Bowyer, Hall, and Kegelmeyer]{chawla2002smote}
N.~V. Chawla, K.~W. Bowyer, L.~O. Hall, and W.~P. Kegelmeyer.
\newblock {SMOTE}: synthetic minority over-sampling technique.
\newblock \emph{Journal of Artificial Intelligence Research}, 16:\penalty0 321--357, 2002.

\bibitem[Collell et~al.(2016)Collell, Prelec, and Patil]{Collell:2016}
G.~Collell, D.~Prelec, and K.~R. Patil.
\newblock Reviving threshold-moving: a simple plug-in bagging ensemble for binary and multiclass imbalanced data.
\newblock \emph{CoRR}, abs/1606.08698, 2016.

\bibitem[Conitzer et~al.(2019)Conitzer, Freeman, Shah, and Vaughan]{conitzer2019group}
V.~Conitzer, R.~Freeman, N.~Shah, and J.~W. Vaughan.
\newblock Group fairness for the allocation of indivisible goods.
\newblock In \emph{Proceedings of the AAAI Conference on Artificial Intelligence}, pages 1853--1860, 2019.

\bibitem[Cortes et~al.(2016)Cortes, Kuznetsov, Mohri, and Yang]{cortes2016structured}
C.~Cortes, V.~Kuznetsov, M.~Mohri, and S.~Yang.
\newblock Structured prediction theory based on factor graph complexity.
\newblock In \emph{Advances in Neural Information Processing Systems}, 2016.

\bibitem[Cortes et~al.(2017)Cortes, Gonzalvo, Kuznetsov, Mohri, and Yang]{CortesGonzalvoKuznetsovMohriYang2017}
C.~Cortes, X.~Gonzalvo, V.~Kuznetsov, M.~Mohri, and S.~Yang.
\newblock {AdaNet}: Adaptive structural learning of artificial neural networks.
\newblock In \emph{International Conference on Machine Learning}, pages 874--883, 2017.

\bibitem[Cortes et~al.(2021)Cortes, Mohri, and Suresh]{CortesMohriSuresh2021}
C.~Cortes, M.~Mohri, and A.~T. Suresh.
\newblock Relative deviation margin bounds.
\newblock In \emph{International Conference on Machine Learning}, pages 2122--2131, 2021.

\bibitem[Cortes et~al.(2024)Cortes, Mao, Mohri, Mohri, and Zhong]{cortes2024cardinality}
C.~Cortes, A.~Mao, C.~Mohri, M.~Mohri, and Y.~Zhong.
\newblock Cardinality-aware set prediction and top-$ k $ classification.
\newblock In \emph{Advances in Neural Information Processing Systems}, 2024.

\bibitem[Cortes et~al.(2025)Cortes, Mao, Mohri, and Zhong]{cortes2025balancing}
C.~Cortes, A.~Mao, M.~Mohri, and Y.~Zhong.
\newblock Balancing the scales: A theoretical and algorithmic framework for learning from imbalanced data.
\newblock In \emph{International Conference on Machine Learning}, 2025.

\bibitem[Cui et~al.(2021)Cui, Zhong, Liu, Yu, and Jia]{cui2021parametric}
J.~Cui, Z.~Zhong, S.~Liu, B.~Yu, and J.~Jia.
\newblock Parametric contrastive learning.
\newblock In \emph{International Conference on Computer Vision}, 2021.

\bibitem[Cui et~al.(2022)Cui, Liu, Tian, Zhong, and Jia]{cui2021reslt}
J.~Cui, S.~Liu, Z.~Tian, Z.~Zhong, and J.~Jia.
\newblock Reslt: Residual learning for long-tailed recognition.
\newblock \emph{IEEE Transactions on Pattern Analysis and Machine Intelligence}, 2022.

\bibitem[Cui et~al.(2019)Cui, Jia, Lin, Song, and Belongie]{cui2019class}
Y.~Cui, M.~Jia, T.-Y. Lin, Y.~Song, and S.~Belongie.
\newblock Class-balanced loss based on effective number of samples.
\newblock In \emph{Proceedings of the IEEE Conference on Computer Vision and Pattern Recognition}, pages 9268--9277, 2019.

\bibitem[DeSalvo et~al.(2025)DeSalvo, Mohri, Mohri, and Zhong]{desalvo2025budgeted}
G.~DeSalvo, C.~Mohri, M.~Mohri, and Y.~Zhong.
\newblock Budgeted multiple-expert deferral.
\newblock \emph{arXiv preprint arXiv:2510.26706}, 2025.

\bibitem[Du et~al.(2024)Du, Han, and Huang]{dusimpro}
C.~Du, Y.~Han, and G.~Huang.
\newblock Simpro: A simple probabilistic framework towards realistic long-tailed semi-supervised learning.
\newblock In \emph{International Conference on Machine Learning}, 2024.

\bibitem[Elkan(2001)]{elkan2001foundations}
C.~Elkan.
\newblock The foundations of cost-sensitive learning.
\newblock In \emph{International Joint Conference on Artificial Intelligence}, 2001.

\bibitem[Estabrooks et~al.(2004)Estabrooks, Jo, and Japkowicz]{estabrooks2004multiple}
A.~Estabrooks, T.~Jo, and N.~Japkowicz.
\newblock A multiple resampling method for learning from imbalanced data sets.
\newblock \emph{Computational Intelligence}, 20\penalty0 (1):\penalty0 18--36, 2004.

\bibitem[Fan et~al.(2017)Fan, Lyu, Ying, and Hu]{Fan:2017}
Y.~Fan, S.~Lyu, Y.~Ying, and B.~Hu.
\newblock Learning with average top-k loss.
\newblock In \emph{Advances in Neural Information Processing Systems}, pages 497--505, 2017.

\bibitem[Fawcett and Provost(1996)]{Fawcett:1996}
T.~Fawcett and F.~Provost.
\newblock Combining data mining and machine learning for effective user profiling.
\newblock In \emph{International Conference on Knowledge Discovery and Data Mining}, pages 8--13, 1996.

\bibitem[Feldman(2020)]{feldman2020does}
V.~Feldman.
\newblock Does learning require memorization? a short tale about a long tail.
\newblock In \emph{Symposium on Theory of Computing}, pages 954--959, 2020.

\bibitem[Gabidolla et~al.(2024)Gabidolla, Zharmagambetov, and Carreira{-}Perpi{\~{n}}{\'{a}}n]{GabidollaZharmagambetovCarreiraPerpinan2024}
M.~Gabidolla, A.~Zharmagambetov, and M.~{\'{A}}. Carreira{-}Perpi{\~{n}}{\'{a}}n.
\newblock Beyond the {ROC} curve: Classification trees using cost-optimal curves, with application to imbalanced datasets.
\newblock In \emph{International Conference on Machine Learning}, 2024.

\bibitem[Gao et~al.(2023)Gao, Zhao, Li, and Guo]{gao2024enhancing}
J.~Gao, H.~Zhao, Z.~Li, and D.~Guo.
\newblock Enhancing minority classes by mixing: an adaptative optimal transport approach for long-tailed classification.
\newblock In \emph{Advances in Neural Information Processing Systems}, 2023.

\bibitem[Gao et~al.(2024)Gao, Zhao, dan Guo, and Zha]{gao2024distribution}
J.~Gao, H.~Zhao, D.~dan Guo, and H.~Zha.
\newblock Distribution alignment optimization through neural collapse for long-tailed classification.
\newblock In \emph{International Conference on Machine Learning}, 2024.

\bibitem[Ghosh et~al.(2017)Ghosh, Kumar, and Sastry]{ghosh2017robust}
A.~Ghosh, H.~Kumar, and P.~S. Sastry.
\newblock Robust loss functions under label noise for deep neural networks.
\newblock In \emph{Proceedings of the AAAI Conference on Artificial Intelligence}, 2017.

\bibitem[Han(2023)]{han2023wrapped}
B.~Han.
\newblock Wrapped cauchy distributed angular softmax for long-tailed visual recognition.
\newblock In \emph{International Conference on Machine Learning}, pages 12368--12388, 2023.

\bibitem[Han et~al.(2005)Han, Wang, and Mao]{han2005borderline}
H.~Han, W.-Y. Wang, and B.-H. Mao.
\newblock Borderline-smote: a new over-sampling method in imbalanced data sets learning.
\newblock In \emph{International Conference on Intelligent Computing}, pages 878--887, 2005.

\bibitem[Hardt et~al.(2016)Hardt, Price, and Srebro]{hardt2016equality}
M.~Hardt, E.~Price, and N.~Srebro.
\newblock Equality of opportunity in supervised learning.
\newblock In \emph{Advances in Neural Information Processing Systems}, 2016.

\bibitem[He and Garcia(2009)]{he2009learning}
H.~He and E.~A. Garcia.
\newblock Learning from imbalanced data.
\newblock \emph{IEEE Transactions on knowledge and data engineering}, 21\penalty0 (9):\penalty0 1263--1284, 2009.

\bibitem[He et~al.(2016)He, Zhang, Ren, and Sun]{he2016deep}
K.~He, X.~Zhang, S.~Ren, and J.~Sun.
\newblock Deep residual learning for image recognition.
\newblock In \emph{Proceedings of the IEEE Conference on Computer Vision and Pattern Recognition}, pages 770--778, 2016.

\bibitem[Hong et~al.(2021)Hong, Han, Choi, Seo, Kim, and Chang]{hong2020disentangling}
Y.~Hong, S.~Han, K.~Choi, S.~Seo, B.~Kim, and B.~Chang.
\newblock Disentangling label distribution for long-tailed visual recognition.
\newblock In \emph{Proceedings of the IEEE Conference on Computer Vision and Pattern Recognition}, 2021.

\bibitem[Iranmehr et~al.(2019)Iranmehr, Masnadi{-}Shirazi, and Vasconcelos]{Iranmehr:2019}
A.~Iranmehr, H.~Masnadi{-}Shirazi, and N.~Vasconcelos.
\newblock Cost-sensitive support vector machines.
\newblock \emph{Neurocomputing}, 343:\penalty0 50--64, 2019.

\bibitem[Jamal et~al.(2020)Jamal, Brown, Yang, Wang, and Gong]{jamal2020rethinking}
M.~A. Jamal, M.~Brown, M.-H. Yang, L.~Wang, and B.~Gong.
\newblock Rethinking class-balanced methods for long-tailed visual recognition from a domain adaptation perspective.
\newblock In \emph{Proceedings of the IEEE Conference on Computer Vision and Pattern Recognition}, pages 7610--7619, 2020.

\bibitem[Jiawei et~al.(2020)Jiawei, Yu, Ma, Zhao, Yi, et~al.]{jiawei2020balanced}
R.~Jiawei, C.~Yu, X.~Ma, H.~Zhao, S.~Yi, et~al.
\newblock Balanced meta-softmax for long-tailed visual recognition.
\newblock In \emph{Advances in Neural Information Processing Systems}, pages 4175--4186, 2020.

\bibitem[Kang et~al.(2020)Kang, Xie, Rohrbach, Yan, Gordo, Feng, and Kalantidis]{kang2019decoupling}
B.~Kang, S.~Xie, M.~Rohrbach, Z.~Yan, A.~Gordo, J.~Feng, and Y.~Kalantidis.
\newblock Decoupling representation and classifier for long-tailed recognition.
\newblock In \emph{International Conference on Learning Representations}, 2020.

\bibitem[Kang et~al.(2021)Kang, Li, Xie, Yuan, and Feng]{kang2021exploring}
B.~Kang, Y.~Li, S.~Xie, Z.~Yuan, and J.~Feng.
\newblock Exploring balanced feature spaces for representation learning.
\newblock In \emph{International Conference on Learning Representations}, 2021.

\bibitem[Kasarla et~al.(2022)Kasarla, Burghouts, Van~Spengler, Van Der~Pol, Cucchiara, and Mettes]{kasarla2022maximum}
T.~Kasarla, G.~Burghouts, M.~Van~Spengler, E.~Van Der~Pol, R.~Cucchiara, and P.~Mettes.
\newblock Maximum class separation as inductive bias in one matrix.
\newblock In \emph{Advances in Neural Information Processing Systems}, pages 19553--19566, 2022.

\bibitem[Khalili et~al.(2023)Khalili, Zhang, and Abroshan]{khalili2023loss}
M.~M. Khalili, X.~Zhang, and M.~Abroshan.
\newblock Loss balancing for fair supervised learning.
\newblock In \emph{International Conference on Machine Learning}, pages 16271--16290, 2023.

\bibitem[Khan et~al.(2019)Khan, Hayat, Zamir, Shen, and Shao]{khan2019striking}
S.~Khan, M.~Hayat, S.~W. Zamir, J.~Shen, and L.~Shao.
\newblock Striking the right balance with uncertainty.
\newblock In \emph{Proceedings of the IEEE Conference on Computer Vision and Pattern Recognition}, pages 103--112, 2019.

\bibitem[Kim and Kim(2019)]{Kim:2019}
B.~Kim and J.~Kim.
\newblock Adjusting decision boundary for class imbalanced learning, 2019.

\bibitem[King and Zeng(2001)]{King:2001}
G.~King and L.~Zeng.
\newblock Logistic regression in rare events data.
\newblock \emph{Political Analysis}, 9\penalty0 (2):\penalty0 137--163, 2001.

\bibitem[Kini et~al.(2021)Kini, Paraskevas, Oymak, and Thrampoulidis]{kini2021label}
G.~R. Kini, O.~Paraskevas, S.~Oymak, and C.~Thrampoulidis.
\newblock Label-imbalanced and group-sensitive classification under overparameterization.
\newblock In \emph{Advances in Neural Information Processing Systems}, volume~34, pages 18970--18983, 2021.

\bibitem[Koltchinskii and Panchenko(2000)]{KoltchinskiiPanchenko2000}
V.~Koltchinskii and D.~Panchenko.
\newblock Rademacher processes and bounding the risk of function learning.
\newblock In \emph{High Dimensional Probability II}, pages 443--459, 2000.

\bibitem[Koltchinskii and Panchenko(2002)]{KoltchinskiiPanchenko2002}
V.~Koltchinskii and D.~Panchenko.
\newblock Empirical margin distributions and bounding the generalization error of combined classifiers.
\newblock \emph{Annals of Statistics}, 30, 2002.

\bibitem[Kotlowski et~al.(2011)Kotlowski, Dembczynski, and H{\"{u}}llermeier]{KotlowskiDembczynskHullermeier2011}
W.~Kotlowski, K.~Dembczynski, and E.~H{\"{u}}llermeier.
\newblock Bipartite ranking through minimization of univariate loss.
\newblock In \emph{International Conference on Machine Learning}, pages 1113--1120, 2011.

\bibitem[Krizhevsky(2009)]{Krizhevsky09learningmultiple}
A.~Krizhevsky.
\newblock Learning multiple layers of features from tiny images.
\newblock Technical report, Toronto University, 2009.

\bibitem[Kubat and Matwin(1997)]{KubatMa97}
M.~Kubat and S.~Matwin.
\newblock Addressing the curse of imbalanced training sets: One-sided selection.
\newblock In \emph{International Conference on Machine Learning}, 1997.

\bibitem[Le and Yang(2015)]{le2015tiny}
Y.~Le and X.~Yang.
\newblock Tiny imagenet visual recognition challenge.
\newblock \emph{CS 231N}, 7\penalty0 (7):\penalty0 3, 2015.

\bibitem[Lewis and Gale(1994)]{Lewis:1994}
D.~D. Lewis and W.~A. Gale.
\newblock A sequential algorithm for training text classifiers.
\newblock In \emph{Proceedings of the ACM SIGIR Conference on Research and Development in Information Retrieval}, 1994.

\bibitem[Li et~al.(2024{\natexlab{a}})Li, Xu, Bao, Yang, Cong, Cao, and Huang]{LiXuBaoYangCongCaoHuang2024}
F.~Li, Q.~Xu, S.~Bao, Z.~Yang, R.~Cong, X.~Cao, and Q.~Huang.
\newblock Size-invariance matters: Rethinking metrics and losses for imbalanced multi-object salient object detection.
\newblock In \emph{International Conference on Machine Learning}, 2024{\natexlab{a}}.

\bibitem[Li et~al.(2024{\natexlab{b}})Li, Li, Ye, and Zhan]{LiLiYeZhang2024}
L.~Li, X.-C. Li, H.-J. Ye, and D.-C. Zhan.
\newblock Enhancing class-imbalanced learning with pre-trained guidance through class-conditional knowledge distillation.
\newblock In \emph{International Conference on Machine Learning}, pages 28204--28221, 2024{\natexlab{b}}.

\bibitem[Li et~al.(2021)Li, Zhang, Thrampoulidis, Chen, and Oymak]{li2021autobalance}
M.~Li, X.~Zhang, C.~Thrampoulidis, J.~Chen, and S.~Oymak.
\newblock Autobalance: Optimized loss functions for imbalanced data.
\newblock In \emph{Advances in Neural Information Processing Systems}, pages 3163--3177, 2021.

\bibitem[Li et~al.(2025)Li, Xu, Yang, Wang, Zhang, Cao, and Huang]{li2025focal}
S.~Li, Q.~Xu, Z.~Yang, Z.~Wang, L.~Zhang, X.~Cao, and Q.~Huang.
\newblock Focal-sam: Focal sharpness-aware minimization for long-tailed classification.
\newblock In \emph{International Conference on Machine Learning}, 2025.

\bibitem[Lin et~al.(2017)Lin, Goyal, Girshick, He, and Doll{\'a}r]{lin2017focal}
T.-Y. Lin, P.~Goyal, R.~Girshick, K.~He, and P.~Doll{\'a}r.
\newblock Focal loss for dense object detection.
\newblock In \emph{International Conference on Computer Vision}, pages 2980--2988, 2017.

\bibitem[Liu et~al.(2024)Liu, He, Ming, Xie, and Ma]{liuelta}
L.~Liu, S.~He, A.~Ming, R.~Xie, and H.~Ma.
\newblock Elta: An enhancer against long-tail for aesthetics-oriented models.
\newblock In \emph{International Conference on Machine Learning}, 2024.

\bibitem[Liu et~al.(2008)Liu, Wu, and Zhou]{liu2008exploratory}
X.-Y. Liu, J.~Wu, and Z.-H. Zhou.
\newblock Exploratory undersampling for class-imbalance learning.
\newblock \emph{IEEE Transactions on Systems, Man, and Cybernetics}, 39\penalty0 (2):\penalty0 539--550, 2008.

\bibitem[Liu et~al.(2019)Liu, Miao, Zhan, Wang, Gong, and Yu]{liu2019large}
Z.~Liu, Z.~Miao, X.~Zhan, J.~Wang, B.~Gong, and S.~X. Yu.
\newblock Large-scale long-tailed recognition in an open world.
\newblock In \emph{Proceedings of the IEEE Conference on Computer Vision and Pattern Recognition}, pages 2537--2546, 2019.

\bibitem[Loffredo et~al.(2024)Loffredo, Pastore, Cocco, and Monasson]{LoffredoPastoreCoccoMonasson2024}
E.~Loffredo, M.~Pastore, S.~Cocco, and R.~Monasson.
\newblock Restoring balance: principled under/oversampling of data for optimal classification.
\newblock In \emph{International Conference on Machine Learning}, pages 32643--32670, 2024.

\bibitem[Loshchilov and Hutter(2016)]{loshchilov2016sgdr}
I.~Loshchilov and F.~Hutter.
\newblock {SGDR}: Stochastic gradient descent with warm restarts.
\newblock \emph{arXiv preprint arXiv:1608.03983}, 2016.

\bibitem[Maloof(2003)]{Maloof03}
M.~A. Maloof.
\newblock Learning when data sets are imbalanced and when costs are unequal and unknown.
\newblock In \emph{ICML 2003 Workshop on Learning from Imbalanced Datasets}, 2003.

\bibitem[Mao(2025)]{mao2025theory}
A.~Mao.
\newblock \emph{Theory and Algorithms for Learning with Multi-Class Abstention and Multi-Expert Deferral}.
\newblock PhD thesis, New York University, 2025.

\bibitem[Mao et~al.(2023{\natexlab{a}})Mao, Mohri, Mohri, and Zhong]{MaoMohriMohriZhong2023twostage}
A.~Mao, C.~Mohri, M.~Mohri, and Y.~Zhong.
\newblock Two-stage learning to defer with multiple experts.
\newblock In \emph{Advances in Neural Information Processing Systems}, 2023{\natexlab{a}}.

\bibitem[Mao et~al.(2023{\natexlab{b}})Mao, Mohri, and Zhong]{MaoMohriZhong2023characterization}
A.~Mao, M.~Mohri, and Y.~Zhong.
\newblock {H}-consistency bounds: Characterization and extensions.
\newblock In \emph{Advances in Neural Information Processing Systems}, 2023{\natexlab{b}}.

\bibitem[Mao et~al.(2023{\natexlab{c}})Mao, Mohri, and Zhong]{MaoMohriZhong2023ranking}
A.~Mao, M.~Mohri, and Y.~Zhong.
\newblock {H}-consistency bounds for pairwise misranking loss surrogates.
\newblock In \emph{International Conference on Machine learning}, 2023{\natexlab{c}}.

\bibitem[Mao et~al.(2023{\natexlab{d}})Mao, Mohri, and Zhong]{MaoMohriZhong2023rankingabs}
A.~Mao, M.~Mohri, and Y.~Zhong.
\newblock Ranking with abstention.
\newblock In \emph{ICML 2023 Workshop The Many Facets of Preference-Based Learning}, 2023{\natexlab{d}}.

\bibitem[Mao et~al.(2023{\natexlab{e}})Mao, Mohri, and Zhong]{MaoMohriZhong2023structured}
A.~Mao, M.~Mohri, and Y.~Zhong.
\newblock Structured prediction with stronger consistency guarantees.
\newblock In \emph{Advances in Neural Information Processing Systems}, 2023{\natexlab{e}}.

\bibitem[Mao et~al.(2023{\natexlab{f}})Mao, Mohri, and Zhong]{mao2023cross}
A.~Mao, M.~Mohri, and Y.~Zhong.
\newblock Cross-entropy loss functions: Theoretical analysis and applications.
\newblock In \emph{International Conference on Machine Learning}, 2023{\natexlab{f}}.

\bibitem[Mao et~al.(2024{\natexlab{a}})Mao, Mohri, and Zhong]{MaoMohriZhong2024deferral}
A.~Mao, M.~Mohri, and Y.~Zhong.
\newblock Principled approaches for learning to defer with multiple experts.
\newblock In \emph{International Symposium on Artificial Intelligence and Mathematics}, 2024{\natexlab{a}}.

\bibitem[Mao et~al.(2024{\natexlab{b}})Mao, Mohri, and Zhong]{MaoMohriZhong2024predictor}
A.~Mao, M.~Mohri, and Y.~Zhong.
\newblock Predictor-rejector multi-class abstention: Theoretical analysis and algorithms.
\newblock In \emph{International Conference on Algorithmic Learning Theory}, 2024{\natexlab{b}}.

\bibitem[Mao et~al.(2024{\natexlab{c}})Mao, Mohri, and Zhong]{MaoMohriZhong2024score}
A.~Mao, M.~Mohri, and Y.~Zhong.
\newblock Theoretically grounded loss functions and algorithms for score-based multi-class abstention.
\newblock In \emph{International Conference on Artificial Intelligence and Statistics}, 2024{\natexlab{c}}.

\bibitem[Mao et~al.(2024{\natexlab{d}})Mao, Mohri, and Zhong]{mao2024h}
A.~Mao, M.~Mohri, and Y.~Zhong.
\newblock {$ H $}-consistency guarantees for regression.
\newblock In \emph{International Conference on Machine Learning}, pages 34712--34737, 2024{\natexlab{d}}.

\bibitem[Mao et~al.(2024{\natexlab{e}})Mao, Mohri, and Zhong]{mao2024multi}
A.~Mao, M.~Mohri, and Y.~Zhong.
\newblock Multi-label learning with stronger consistency guarantees.
\newblock In \emph{Advances in Neural Information Processing Systems}, 2024{\natexlab{e}}.

\bibitem[Mao et~al.(2024{\natexlab{f}})Mao, Mohri, and Zhong]{mao2024realizable}
A.~Mao, M.~Mohri, and Y.~Zhong.
\newblock Realizable {$ H $}-consistent and {B}ayes-consistent loss functions for learning to defer.
\newblock In \emph{Advances in Neural Information Processing Systems}, 2024{\natexlab{f}}.

\bibitem[Mao et~al.(2024{\natexlab{g}})Mao, Mohri, and Zhong]{mao2024regression}
A.~Mao, M.~Mohri, and Y.~Zhong.
\newblock Regression with multi-expert deferral.
\newblock In \emph{International Conference on Machine Learning}, pages 34738--34759, 2024{\natexlab{g}}.

\bibitem[Mao et~al.(2024{\natexlab{h}})Mao, Mohri, and Zhong]{mao2024universal}
A.~Mao, M.~Mohri, and Y.~Zhong.
\newblock A universal growth rate for learning with smooth surrogate losses.
\newblock In \emph{Advances in Neural Information Processing Systems}, 2024{\natexlab{h}}.

\bibitem[Mao et~al.(2025{\natexlab{a}})Mao, Mohri, and Zhong]{MaoMohriZhong2025mastering}
A.~Mao, M.~Mohri, and Y.~Zhong.
\newblock Mastering multiple-expert routing: Realizable {$H$}-consistency and strong guarantees for learning to defer.
\newblock In \emph{International Conference on Machine Learning}, 2025{\natexlab{a}}.

\bibitem[Mao et~al.(2025{\natexlab{b}})Mao, Mohri, and Zhong]{MaoMohriZhong2025principled}
A.~Mao, M.~Mohri, and Y.~Zhong.
\newblock Principled algorithms for optimizing generalized metrics in binary classification.
\newblock In \emph{International Conference on Machine Learning}, 2025{\natexlab{b}}.

\bibitem[Mao et~al.(2025{\natexlab{c}})Mao, Mohri, and Zhong]{mao2025enhanced}
A.~Mao, M.~Mohri, and Y.~Zhong.
\newblock Enhanced {$\sH $}-consistency bounds.
\newblock In \emph{International Conference on Algorithmic Learning Theory}, 2025{\natexlab{c}}.

\bibitem[Masnadi-Shirazi and Vasconcelos(2010)]{Masnadi-Shirazi:2010}
H.~Masnadi-Shirazi and N.~Vasconcelos.
\newblock Risk minimization, probability elicitation, and cost-sensitive {SVM}s.
\newblock In \emph{International Conference on Machine Learning}, page 759–766, 2010.

\bibitem[Maurer(2016)]{Maurer2016}
A.~Maurer.
\newblock A vector-contraction inequality for rademacher complexities.
\newblock In \emph{International Conference on Algorithmic Learning Theory}, 2016.

\bibitem[McMahan et~al.(2017)McMahan, Moore, Ramage, Hampson, and y~Arcas]{mcmahan2017communication}
B.~McMahan, E.~Moore, D.~Ramage, S.~Hampson, and B.~A. y~Arcas.
\newblock Communication-efficient learning of deep networks from decentralized data.
\newblock In \emph{Artificial Intelligence and Statistics}, pages 1273--1282, 2017.

\bibitem[Meng et~al.(2023)Meng, Dai, Yang, Chen, Chen, Liu, Chen, Wu, Yuan, and Jiang]{meng2024learning}
L.~Meng, X.~Dai, J.~Yang, D.~Chen, Y.~Chen, M.~Liu, Y.-L. Chen, Z.~Wu, L.~Yuan, and Y.-G. Jiang.
\newblock Learning from rich semantics and coarse locations for long-tailed object detection.
\newblock In \emph{Advances in Neural Information Processing Systems}, 2023.

\bibitem[Menon et~al.(2013)Menon, Narasimhan, Agarwal, and Chawla]{menon2013statistical}
A.~Menon, H.~Narasimhan, S.~Agarwal, and S.~Chawla.
\newblock On the statistical consistency of algorithms for binary classification under class imbalance.
\newblock In \emph{International Conference on Machine Learning}, pages 603--611, 2013.

\bibitem[Menon et~al.(2021)Menon, Jayasumana, Rawat, Jain, Veit, and Kumar]{menonlong}
A.~K. Menon, S.~Jayasumana, A.~S. Rawat, H.~Jain, A.~Veit, and S.~Kumar.
\newblock Long-tail learning via logit adjustment.
\newblock In \emph{International Conference on Learning Representations}, 2021.

\bibitem[Mohri et~al.(2024)Mohri, Andor, Choi, Collins, Mao, and Zhong]{MohriAndorChoiCollinsMaoZhong2024learning}
C.~Mohri, D.~Andor, E.~Choi, M.~Collins, A.~Mao, and Y.~Zhong.
\newblock Learning to reject with a fixed predictor: Application to decontextualization.
\newblock In \emph{International Conference on Learning Representations}, 2024.

\bibitem[Mohri and Zhong(2025)]{mohri2025beyond}
M.~Mohri and Y.~Zhong.
\newblock Beyond tsybakov: Model margin noise and {${H}$}-consistency bounds.
\newblock \emph{arXiv preprint arXiv:2511.15816}, 2025.

\bibitem[Mohri et~al.(2018)Mohri, Rostamizadeh, and Talwalkar]{MohriRostamizadehTalwalkar2018}
M.~Mohri, A.~Rostamizadeh, and A.~Talwalkar.
\newblock \emph{Foundations of Machine Learning}.
\newblock {MIT} Press, second edition, 2018.

\bibitem[Mohri et~al.(2019)Mohri, Sivek, and Suresh]{mohri2019agnostic}
M.~Mohri, G.~Sivek, and A.~T. Suresh.
\newblock Agnostic federated learning.
\newblock In \emph{International Conference on Mchine Learning}, pages 4615--4625, 2019.

\bibitem[Morik et~al.(1999)Morik, Brockhausen, and Joachims]{morik1999combining}
K.~Morik, P.~Brockhausen, and T.~Joachims.
\newblock Combining statistical learning with a knowledge-based approach-a case study in intensive care monitoring.
\newblock In \emph{International Conference on Machine Learning}, pages 268--277, 1999.

\bibitem[Nesterov(1983)]{nesterov1983method}
Y.~E. Nesterov.
\newblock A method for solving the convex programming problem with convergence rate $o(1/k^2)$.
\newblock \emph{Dokl. akad. nauk Sssr}, 269:\penalty0 543--547, 1983.

\bibitem[Neyshabur et~al.(2015)Neyshabur, Tomioka, and Srebro]{NeyshaburTomiokaSrebro2015}
B.~Neyshabur, R.~Tomioka, and N.~Srebro.
\newblock Norm-based capacity control in neural networks.
\newblock \emph{CoRR}, abs/1503.00036, 2015.

\bibitem[Provost(2000)]{Provost:2000}
F.~Provost.
\newblock {Machine learning from imbalanced data sets 101}.
\newblock In \emph{Proceedings of the AAAI-2000 Workshop on Imbalanced Data Sets}, 2000.

\bibitem[Qiao and Liu(2008)]{QiaoLiu2008}
X.~Qiao and Y.~Liu.
\newblock Adaptive weighted learning for unbalanced multicategory classification.
\newblock \emph{Biometrics}, 65:\penalty0 159--68, 2008.

\bibitem[Rastegin(2013)]{rastegin2013bounds}
A.~E. Rastegin.
\newblock Bounds of the pinsker and fannes types on the tsallis relative entropy.
\newblock \emph{Mathematical Physics, Analysis and Geometry}, 16\penalty0 (3):\penalty0 213--228, 2013.

\bibitem[Ross and Doll{\'a}r(2017)]{ross2017focal}
T.-Y. Ross and G.~Doll{\'a}r.
\newblock Focal loss for dense object detection.
\newblock In \emph{Proceedings of the IEEE Conference on Computer Vision and Pattern Recognition}, pages 2980--2988, 2017.

\bibitem[Schapire et~al.(1997)Schapire, Freund, Bartlett, and Lee]{SchapireFreundBartlettLee1997}
R.~E. Schapire, Y.~Freund, P.~Bartlett, and W.~S. Lee.
\newblock Boosting the margin: A new explanation for the effectiveness of voting methods.
\newblock In \emph{International Conference on Machine Learning}, pages 322--330, 1997.

\bibitem[Shi et~al.(2023)Shi, Wei, Xiang, and Li]{shi2023re}
J.-X. Shi, T.~Wei, Y.~Xiang, and Y.-F. Li.
\newblock How re-sampling helps for long-tail learning?
\newblock In \emph{Advances in Neural Information Processing Systems}, 2023.

\bibitem[Shi et~al.(2024)Shi, Wei, Zhou, Shao, Han, and Li]{shi2024long}
J.-X. Shi, T.~Wei, Z.~Zhou, J.-J. Shao, X.-Y. Han, and Y.-F. Li.
\newblock Long-tail learning with foundation model: Heavy fine-tuning hurts.
\newblock In \emph{International Conference on Machine Learning}, 2024.

\bibitem[Steinwart(2007)]{steinwart2007compare}
I.~Steinwart.
\newblock How to compare different loss functions and their risks.
\newblock \emph{Constructive Approximation}, 26\penalty0 (2):\penalty0 225--287, 2007.

\bibitem[Suh and Seo(2023)]{suh2023long}
M.-K. Suh and S.-W. Seo.
\newblock Long-tailed recognition by mutual information maximization between latent features and ground-truth labels.
\newblock In \emph{International Conference on Machine Learning}, pages 32770--32782, 2023.

\bibitem[Sun et~al.(2007)Sun, Kamel, Wong, and Wang]{sun2007cost}
Y.~Sun, M.~S. Kamel, A.~K. Wong, and Y.~Wang.
\newblock Cost-sensitive boosting for classification of imbalanced data.
\newblock \emph{Pattern Recognition}, 40\penalty0 (12):\penalty0 3358--3378, 2007.

\bibitem[Tan et~al.(2020)Tan, Wang, Li, Li, Ouyang, Yin, and Yan]{tan2020equalization}
J.~Tan, C.~Wang, B.~Li, Q.~Li, W.~Ouyang, C.~Yin, and J.~Yan.
\newblock Equalization loss for long-tailed object recognition.
\newblock In \emph{Proceedings of the IEEE Conference on Computer Vision and Pattern Recognition}, pages 11662--11671, 2020.

\bibitem[Tang et~al.(2020)Tang, Huang, and Zhang]{tang2020long}
K.~Tang, J.~Huang, and H.~Zhang.
\newblock Long-tailed classification by keeping the good and removing the bad momentum causal effect.
\newblock In \emph{Advances in Neural Information Processing Systems}, 2020.

\bibitem[Tewari and Bartlett(2007)]{tewari2007consistency}
A.~Tewari and P.~L. Bartlett.
\newblock On the consistency of multiclass classification methods.
\newblock \emph{Journal of Machine Learning Research}, 8\penalty0 (36):\penalty0 1007--1025, 2007.

\bibitem[Tian et~al.(2020)Tian, Liu, Glaser, Hsu, and Kira]{tian2020posterior}
J.~Tian, Y.-C. Liu, N.~Glaser, Y.-C. Hsu, and Z.~Kira.
\newblock Posterior re-calibration for imbalanced datasets.
\newblock In \emph{Advances in Neural Information Processing Systems}, 2020.

\bibitem[Van~Hulse et~al.(2007)Van~Hulse, Khoshgoftaar, and Napolitano]{VanHulse:2007}
J.~Van~Hulse, T.~M. Khoshgoftaar, and A.~Napolitano.
\newblock Experimental perspectives on learning from imbalanced data.
\newblock In \emph{International Conference on Machine Learning}, 2007.

\bibitem[Verhulst(1838)]{Verhulst1838}
P.~F. Verhulst.
\newblock Notice sur la loi que la population suit dans son accroissement.
\newblock \emph{Correspondance math\'ematique et physique}, 10:\penalty0 113--121, 1838.

\bibitem[Verhulst(1845)]{Verhulst1845}
P.~F. Verhulst.
\newblock Recherches math\'ematiques sur la loi d'accroissement de la population.
\newblock \emph{Nouveaux M\'emoires de l'Acad\'emie Royale des Sciences et Belles-Lettres de Bruxelles}, 18:\penalty0 1--42, 1845.

\bibitem[Wallace et~al.(2011)Wallace, Small, Brodley, and Trikalinos]{WallaceSmallBrodleyTrikalinos2011}
B.~C. Wallace, K.~Small, C.~E. Brodley, and T.~A. Trikalinos.
\newblock Class imbalance, redux.
\newblock In \emph{International Conference on Data Mining}, pages 754--763, 2011.

\bibitem[Wang et~al.(2022)Wang, Xia, Li, Mao, Feng, Chen, and Zhao]{wang2022solar}
H.~Wang, M.~Xia, Y.~Li, Y.~Mao, L.~Feng, G.~Chen, and J.~Zhao.
\newblock Solar: Sinkhorn label refinery for imbalanced partial-label learning.
\newblock In \emph{Advances in Neural Information Processing Systems}, pages 8104--8117, 2022.

\bibitem[Wang et~al.(2021{\natexlab{a}})Wang, Lukasiewicz, Hu, Cai, and Xu]{wang2021rsg}
J.~Wang, T.~Lukasiewicz, X.~Hu, J.~Cai, and Z.~Xu.
\newblock Rsg: A simple but effective module for learning imbalanced datasets.
\newblock In \emph{Proceedings of the IEEE Conference on Computer Vision and Pattern Recognition}, pages 3784--3793, 2021{\natexlab{a}}.

\bibitem[Wang et~al.(2021{\natexlab{b}})Wang, Lian, Miao, Liu, and Yu]{wang2020long}
X.~Wang, L.~Lian, Z.~Miao, Z.~Liu, and S.~X. Yu.
\newblock Long-tailed recognition by routing diverse distribution-aware experts.
\newblock In \emph{International Conference on Learning Representations}, 2021{\natexlab{b}}.

\bibitem[Wang et~al.(2023)Wang, Xu, Yang, He, Cao, and Huang]{wang2023unified}
Z.~Wang, Q.~Xu, Z.~Yang, Y.~He, X.~Cao, and Q.~Huang.
\newblock A unified generalization analysis of re-weighting and logit-adjustment for imbalanced learning.
\newblock In \emph{Advances in Neural Information Processing Systems}, pages 48417--48430, 2023.

\bibitem[Wang et~al.(2025)Wang, Xu, Yang, Xu, Zhang, Cao, and Huang]{wang2025unified}
Z.~Wang, Q.~Xu, Z.~Yang, Z.~Xu, L.~Zhang, X.~Cao, and Q.~Huang.
\newblock A unified perspective for loss-oriented imbalanced learning via localization.
\newblock \emph{IEEE Transactions on Pattern Analysis and Machine Intelligence}, 2025.

\bibitem[Wei et~al.(2024)Wei, Mao, Zhou, Wan, and Zhang]{weilearning}
T.~Wei, Z.~Mao, Z.-H. Zhou, Y.~Wan, and M.-L. Zhang.
\newblock Learning label shift correction for test-agnostic long-tailed recognition.
\newblock In \emph{International Conference on Machine Learning}, 2024.

\bibitem[Wen et~al.(2025)Wen, Xu, Dai, Cong, and Huang]{wen2025semantic}
P.~Wen, Q.~Xu, S.~Dai, R.~Cong, and Q.~Huang.
\newblock Semantic concentration for self-supervised dense representations learning.
\newblock \emph{IEEE Transactions on Pattern Analysis and Machine Intelligence}, 2025.

\bibitem[Xiang et~al.(2020)Xiang, Ding, and Han]{xiang2020learning}
L.~Xiang, G.~Ding, and J.~Han.
\newblock Learning from multiple experts: Self-paced knowledge distillation for long-tailed classification.
\newblock In \emph{European Conference on Computer Vision}, pages 247--263, 2020.

\bibitem[Xiao et~al.(2023)Xiao, Chen, Liu, Wang, Feng, Hao, Zhou, Wu, Yang, and Liu]{xiao2024fed}
Z.~Xiao, Z.~Chen, S.~Liu, H.~Wang, Y.~Feng, J.~Hao, J.~T. Zhou, J.~Wu, H.~Yang, and Z.~Liu.
\newblock Fed-grab: Federated long-tailed learning with self-adjusting gradient balancer.
\newblock In \emph{Advances in Neural Information Processing Systems}, 2023.

\bibitem[Xie and Manski(1989)]{xie1989logit}
Y.~Xie and C.~F. Manski.
\newblock The logit model and response-based samples.
\newblock \emph{Sociological Methods \& Research}, 17\penalty0 (3):\penalty0 283--302, 1989.

\bibitem[Yang and Xu(2020)]{yang2020rethinking}
Y.~Yang and Z.~Xu.
\newblock Rethinking the value of labels for improving class-imbalanced learning.
\newblock In \emph{Advances in Neural Information Processing Systems}, 2020.

\bibitem[Yang et~al.(2022)Yang, Chen, Li, Xie, Lin, and Tao]{yang2022inducing}
Y.~Yang, S.~Chen, X.~Li, L.~Xie, Z.~Lin, and D.~Tao.
\newblock Inducing neural collapse in imbalanced learning: Do we really need a learnable classifier at the end of deep neural network?
\newblock In \emph{Advances in Neural Information Processing Systems}, pages 37991--38002, 2022.

\bibitem[Yang et~al.(2024)Yang, Xu, Wang, Li, Han, Bao, Cao, and Huang]{yangharnessing}
Z.~Yang, Q.~Xu, Z.~Wang, S.~Li, B.~Han, S.~Bao, X.~Cao, and Q.~Huang.
\newblock Harnessing hierarchical label distribution variations in test agnostic long-tail recognition.
\newblock In \emph{International Conference on Machine Learning}, 2024.

\bibitem[Ye et~al.(2020)Ye, Chen, Zhan, and Chao]{Ye:2020}
H.-J. Ye, H.-Y. Chen, D.-C. Zhan, and W.-L. Chao.
\newblock Identifying and compensating for feature deviation in imbalanced deep learning, 2020.

\bibitem[Yin et~al.(2018)Yin, Yu, Sohn, Liu, and Chandraker]{Yin:2018}
X.~Yin, X.~Yu, K.~Sohn, X.~Liu, and M.~Chandraker.
\newblock Feature transfer learning for deep face recognition with long-tail data.
\newblock \emph{CoRR}, abs/1803.09014, 2018.

\bibitem[Zhang et~al.(2019{\natexlab{a}})Zhang, Liu, Wang, and Shen]{Zhang:2019}
J.~Zhang, L.~Liu, P.~Wang, and C.~Shen.
\newblock To balance or not to balance: A simple-yet-effective approach for learning with long-tailed distributions, 2019{\natexlab{a}}.

\bibitem[Zhang(2004{\natexlab{a}})]{Zhang2003}
T.~Zhang.
\newblock Statistical behavior and consistency of classification methods based on convex risk minimization.
\newblock \emph{The Annals of Statistics}, 32\penalty0 (1):\penalty0 56--85, 2004{\natexlab{a}}.

\bibitem[Zhang(2004{\natexlab{b}})]{zhang2004statistical}
T.~Zhang.
\newblock Statistical analysis of some multi-category large margin classification methods.
\newblock \emph{Journal of Machine Learning Research}, 5\penalty0 (Oct):\penalty0 1225--1251, 2004{\natexlab{b}}.

\bibitem[Zhang et~al.(2017)Zhang, Fang, Wen, Li, and Qiao]{zhang2017range}
X.~Zhang, Z.~Fang, Y.~Wen, Z.~Li, and Y.~Qiao.
\newblock Range loss for deep face recognition with long-tailed training data.
\newblock In \emph{International Conference on Computer Vision}, pages 5409--5418, 2017.

\bibitem[Zhang et~al.(2018)Zhang, Zhao, Cao, Ma, Huang, Wu, and Tan]{zhang2018online}
Y.~Zhang, P.~Zhao, J.~Cao, W.~Ma, J.~Huang, Q.~Wu, and M.~Tan.
\newblock Online adaptive asymmetric active learning for budgeted imbalanced data.
\newblock In \emph{SIGKDD International Conference on Knowledge Discovery $\&$ Data Mining}, pages 2768--2777, 2018.

\bibitem[Zhang et~al.(2019{\natexlab{b}})Zhang, Zhao, Niu, Wu, Cao, Huang, and Tan]{zhang2019online}
Y.~Zhang, P.~Zhao, S.~Niu, Q.~Wu, J.~Cao, J.~Huang, and M.~Tan.
\newblock Online adaptive asymmetric active learning with limited budgets.
\newblock \emph{IEEE Transactions on Knowledge and Data Engineering}, 2019{\natexlab{b}}.

\bibitem[Zhang et~al.(2022)Zhang, Hooi, Hong, and Feng]{zhang2022self}
Y.~Zhang, B.~Hooi, L.~Hong, and J.~Feng.
\newblock Self-supervised aggregation of diverse experts for test-agnostic long-tailed recognition.
\newblock In \emph{Advances in Neural Information Processing Systems}, pages 34077--34090, 2022.

\bibitem[Zhang et~al.(2023)Zhang, Kang, Hooi, Yan, and Feng]{ZhangKangHooiYanFeng2023}
Y.~Zhang, B.~Kang, B.~Hooi, S.~Yan, and J.~Feng.
\newblock Deep long-tailed learning: {A} survey.
\newblock \emph{IEEE Transactions on Pattern Analysis and Machine Intelligence}, 45\penalty0 (9):\penalty0 10795--10816, 2023.

\bibitem[Zhang and Pfister(2021)]{zhang2021learning}
Z.~Zhang and T.~Pfister.
\newblock Learning fast sample re-weighting without reward data.
\newblock In \emph{International Conference on Computer Vision}, 2021.

\bibitem[Zhang and Sabuncu(2018)]{zhang2018generalized}
Z.~Zhang and M.~Sabuncu.
\newblock Generalized cross entropy loss for training deep neural networks with noisy labels.
\newblock In \emph{Advances in Neural Information Processing Systems}, 2018.

\bibitem[Zhao et~al.(2018)Zhao, Zhang, Wu, Hoi, Tan, and Huang]{zhao2018adaptive}
P.~Zhao, Y.~Zhang, M.~Wu, S.~C. Hoi, M.~Tan, and J.~Huang.
\newblock Adaptive cost-sensitive online classification.
\newblock \emph{IEEE Transactions on Knowledge and Data Engineering}, 31\penalty0 (2):\penalty0 214--228, 2018.

\bibitem[Zhong(2025)]{zhong2025fundamental}
Y.~Zhong.
\newblock \emph{Fundamental Novel Consistency Theory: H-Consistency Bounds}.
\newblock PhD thesis, New York University, 2025.

\bibitem[Zhong et~al.(2021)Zhong, Cui, Liu, and Jia]{zhong2021improving}
Z.~Zhong, J.~Cui, S.~Liu, and J.~Jia.
\newblock Improving calibration for long-tailed recognition.
\newblock In \emph{Proceedings of the IEEE Conference on Computer Vision and Pattern Recognition}, 2021.

\bibitem[Zhou et~al.(2020)Zhou, Cui, Wei, and Chen]{zhou2020bbn}
B.~Zhou, Q.~Cui, X.-S. Wei, and Z.-M. Chen.
\newblock Bbn: Bilateral-branch network with cumulative learning for long-tailed visual recognition.
\newblock In \emph{Proceedings of the IEEE Conference on Computer Vision and Pattern Recognition}, pages 9719--9728, 2020.

\bibitem[Zhou and Liu(2005)]{zhou2005training}
Z.-H. Zhou and X.-Y. Liu.
\newblock Training cost-sensitive neural networks with methods addressing the class imbalance problem.
\newblock \emph{IEEE Transactions on Knowledge and Data Engineering}, 18\penalty0 (1):\penalty0 63--77, 2005.

\bibitem[Zhu et~al.(2023)Zhu, Tang, Sun, and Zhang]{zhu2023generalized}
B.~Zhu, K.~Tang, Q.~Sun, and H.~Zhang.
\newblock Generalized logit adjustment: Calibrating fine-tuned models by removing label bias in foundation models.
\newblock In \emph{Advances in Neural Information Processing Systems}, pages 64663--64680, 2023.

\bibitem[Zhu et~al.(2024)Zhu, Fan, Chen, Liu, Mao, Xu, and Shen]{zhugenerative}
M.~Zhu, C.~Fan, H.~Chen, Y.~Liu, W.~Mao, X.~Xu, and C.~Shen.
\newblock Generative active learning for long-tailed instance segmentation.
\newblock In \emph{International Conference on Machine Learning}, 2024.

\end{thebibliography}
\bibliographystyle{abbrvnat}

\section*{NeurIPS Paper Checklist}

\begin{enumerate}

\item {\bf Claims}
    \item[] Question: Do the main claims made in the abstract and introduction accurately reflect the paper's contributions and scope?
    \item[] Answer: \answerYes{} 
    \item[] Justification: See Section~\ref{sec:introduction}. The paper contains both the theory and experiments described.
    \item[] Guidelines:
    \begin{itemize}
        \item The answer NA means that the abstract and introduction do not include the claims made in the paper.
        \item The abstract and/or introduction should clearly state the claims made, including the contributions made in the paper and important assumptions and limitations. A No or NA answer to this question will not be perceived well by the reviewers. 
        \item The claims made should match theoretical and experimental results, and reflect how much the results can be expected to generalize to other settings. 
        \item It is fine to include aspirational goals as motivation as long as it is clear that these goals are not attained by the paper. 
    \end{itemize}

\item {\bf Limitations}
    \item[] Question: Does the paper discuss the limitations of the work performed by the authors?
    \item[] Answer: \answerYes{} 
    \item[] Justification:  These results position GLA and GCA losses as state-of-the-art
surrogates for balanced classification, bridging the gap between fairness, consistency, and practical
performance. The extension of these surrogate loss families to structured prediction or multi-label
classification could significantly broaden their impact. Finally, refining consistency bounds under
realistic hypothesis classes and leveraging recent enhanced $\sH$-consistency bounds could provide
deeper insights into the behavior of these and related loss functions in balanced learning settings.

    \item[] Guidelines:
    \begin{itemize}
        \item The answer NA means that the paper has no limitation while the answer No means that the paper has limitations, but those are not discussed in the paper. 
        \item The authors are encouraged to create a separate "Limitations" section in their paper.
        \item The paper should point out any strong assumptions and how robust the results are to violations of these assumptions (e.g., independence assumptions, noiseless settings, model well-specification, asymptotic approximations only holding locally). The authors should reflect on how these assumptions might be violated in practice and what the implications would be.
        \item The authors should reflect on the scope of the claims made, e.g., if the approach was only tested on a few datasets or with a few runs. In general, empirical results often depend on implicit assumptions, which should be articulated.
        \item The authors should reflect on the factors that influence the performance of the approach. For example, a facial recognition algorithm may perform poorly when image resolution is low or images are taken in low lighting. Or a speech-to-text system might not be used reliably to provide closed captions for online lectures because it fails to handle technical jargon.
        \item The authors should discuss the computational efficiency of the proposed algorithms and how they scale with dataset size.
        \item If applicable, the authors should discuss possible limitations of their approach to address problems of privacy and fairness.
        \item While the authors might fear that complete honesty about limitations might be used by reviewers as grounds for rejection, a worse outcome might be that reviewers discover limitations that aren't acknowledged in the paper. The authors should use their best judgment and recognize that individual actions in favor of transparency play an important role in developing norms that preserve the integrity of the community. Reviewers will be specifically instructed to not penalize honesty concerning limitations.
    \end{itemize}

\item {\bf Theory assumptions and proofs}
    \item[] Question: For each theoretical result, does the paper provide the full set of assumptions and a complete (and correct) proof?
    \item[] Answer: \answerYes{} 
    \item[] Justification: We have sections that introduces formalism and explains terminology. Every proof lists conditions. See Section~\ref{sec:surrogate}, Section~\ref{sec:theory}, Appendix~\ref{app:margin-bounds}, Appendix~\ref{app:conditional-regret}, Appendix~\ref{app:GCA}, Appendix~\ref{app:la}, Appendix~\ref{app:la-comp-bayes}, Appendix~\ref{app:GLA}, and Appendix~\ref{app:margin-bound}.
    \item[] Guidelines:
    \begin{itemize}
        \item The answer NA means that the paper does not include theoretical results. 
        \item All the theorems, formulas, and proofs in the paper should be numbered and cross-referenced.
        \item All assumptions should be clearly stated or referenced in the statement of any theorems.
        \item The proofs can either appear in the main paper or the supplemental material, but if they appear in the supplemental material, the authors are encouraged to provide a short proof sketch to provide intuition. 
        \item Inversely, any informal proof provided in the core of the paper should be complemented by formal proofs provided in appendix or supplemental material.
        \item Theorems and Lemmas that the proof relies upon should be properly referenced. 
    \end{itemize}

    \item {\bf Experimental result reproducibility}
    \item[] Question: Does the paper fully disclose all the information needed to reproduce the main experimental results of the paper to the extent that it affects the main claims and/or conclusions of the paper (regardless of whether the code and data are provided or not)?
    \item[] Answer: \answerYes{} 
    \item[] Justification: We build on other people's approaches, explain our methodology and provide full experimental details in Section~\ref{sec:experiments} and Appendix~\ref{app:experiments}.
    \item[] Guidelines:
    \begin{itemize}
        \item The answer NA means that the paper does not include experiments.
        \item If the paper includes experiments, a No answer to this question will not be perceived well by the reviewers: Making the paper reproducible is important, regardless of whether the code and data are provided or not.
        \item If the contribution is a dataset and/or model, the authors should describe the steps taken to make their results reproducible or verifiable. 
        \item Depending on the contribution, reproducibility can be accomplished in various ways. For example, if the contribution is a novel architecture, describing the architecture fully might suffice, or if the contribution is a specific model and empirical evaluation, it may be necessary to either make it possible for others to replicate the model with the same dataset, or provide access to the model. In general. releasing code and data is often one good way to accomplish this, but reproducibility can also be provided via detailed instructions for how to replicate the results, access to a hosted model (e.g., in the case of a large language model), releasing of a model checkpoint, or other means that are appropriate to the research performed.
        \item While NeurIPS does not require releasing code, the conference does require all submissions to provide some reasonable avenue for reproducibility, which may depend on the nature of the contribution. For example
        \begin{enumerate}
            \item If the contribution is primarily a new algorithm, the paper should make it clear how to reproduce that algorithm.
            \item If the contribution is primarily a new model architecture, the paper should describe the architecture clearly and fully.
            \item If the contribution is a new model (e.g., a large language model), then there should either be a way to access this model for reproducing the results or a way to reproduce the model (e.g., with an open-source dataset or instructions for how to construct the dataset).
            \item We recognize that reproducibility may be tricky in some cases, in which case authors are welcome to describe the particular way they provide for reproducibility. In the case of closed-source models, it may be that access to the model is limited in some way (e.g., to registered users), but it should be possible for other researchers to have some path to reproducing or verifying the results.
        \end{enumerate}
    \end{itemize}

\item {\bf Open access to data and code}
    \item[] Question: Does the paper provide open access to the data and code, with sufficient instructions to faithfully reproduce the main experimental results, as described in supplemental material?
    \item[] Answer: \answerYes{} 
    \item[] Justification: See Section~\ref{sec:experiments} and Appendix~\ref{app:experiments}.
    \item[] Guidelines:
    \begin{itemize}
        \item The answer NA means that paper does not include experiments requiring code.
        \item Please see the NeurIPS code and data submission guidelines (\url{https://nips.cc/public/guides/CodeSubmissionPolicy}) for more details.
        \item While we encourage the release of code and data, we understand that this might not be possible, so “No” is an acceptable answer. Papers cannot be rejected simply for not including code, unless this is central to the contribution (e.g., for a new open-source benchmark).
        \item The instructions should contain the exact command and environment needed to run to reproduce the results. See the NeurIPS code and data submission guidelines (\url{https://nips.cc/public/guides/CodeSubmissionPolicy}) for more details.
        \item The authors should provide instructions on data access and preparation, including how to access the raw data, preprocessed data, intermediate data, and generated data, etc.
        \item The authors should provide scripts to reproduce all experimental results for the new proposed method and baselines. If only a subset of experiments are reproducible, they should state which ones are omitted from the script and why.
        \item At submission time, to preserve anonymity, the authors should release anonymized versions (if applicable).
        \item Providing as much information as possible in supplemental material (appended to the paper) is recommended, but including URLs to data and code is permitted.
    \end{itemize}

\item {\bf Experimental setting/details}
    \item[] Question: Does the paper specify all the training and test details (e.g., data splits, hyperparameters, how they were chosen, type of optimizer, etc.) necessary to understand the results?
    \item[] Answer: \answerYes{} 
    \item[] Justification: See Section~\ref{sec:experiments} and Appendix~\ref{app:experiments}.
    \item[] Guidelines:
    \begin{itemize}
        \item The answer NA means that the paper does not include experiments.
        \item The experimental setting should be presented in the core of the paper to a level of detail that is necessary to appreciate the results and make sense of them.
        \item The full details can be provided either with the code, in appendix, or as supplemental material.
    \end{itemize}

\item {\bf Experiment statistical significance}
    \item[] Question: Does the paper report error bars suitably and correctly defined or other appropriate information about the statistical significance of the experiments?
    \item[] Answer: \answerYes{} 
    \item[] Justification: See Table~\ref{tab:comparison-long}.
    \item[] Guidelines:
    \begin{itemize}
        \item The answer NA means that the paper does not include experiments.
        \item The authors should answer "Yes" if the results are accompanied by error bars, confidence intervals, or statistical significance tests, at least for the experiments that support the main claims of the paper.
        \item The factors of variability that the error bars are capturing should be clearly stated (for example, train/test split, initialization, random drawing of some parameter, or overall run with given experimental conditions).
        \item The method for calculating the error bars should be explained (closed form formula, call to a library function, bootstrap, etc.)
        \item The assumptions made should be given (e.g., Normally distributed errors).
        \item It should be clear whether the error bar is the standard deviation or the standard error of the mean.
        \item It is OK to report 1-sigma error bars, but one should state it. The authors should preferably report a 2-sigma error bar than state that they have a 96\% CI, if the hypothesis of Normality of errors is not verified.
        \item For asymmetric distributions, the authors should be careful not to show in tables or figures symmetric error bars that would yield results that are out of range (e.g. negative error rates).
        \item If error bars are reported in tables or plots, The authors should explain in the text how they were calculated and reference the corresponding figures or tables in the text.
    \end{itemize}

\item {\bf Experiments compute resources}
    \item[] Question: For each experiment, does the paper provide sufficient information on the computer resources (type of compute workers, memory, time of execution) needed to reproduce the experiments?
    \item[] Answer: \answerYes{} 
    \item[] Justification: Model training was performed using hardware accelerators providing the equivalent computational power of 64 GPUs.
    \item[] Guidelines:
    \begin{itemize}
        \item The answer NA means that the paper does not include experiments.
        \item The paper should indicate the type of compute workers CPU or GPU, internal cluster, or cloud provider, including relevant memory and storage.
        \item The paper should provide the amount of compute required for each of the individual experimental runs as well as estimate the total compute. 
        \item The paper should disclose whether the full research project required more compute than the experiments reported in the paper (e.g., preliminary or failed experiments that didn't make it into the paper). 
    \end{itemize}
    
\item {\bf Code of ethics}
    \item[] Question: Does the research conducted in the paper conform, in every respect, with the NeurIPS Code of Ethics \url{https://neurips.cc/public/EthicsGuidelines}?
    \item[] Answer: \answerYes{} 
    \item[] Justification: The authors have reviewed the NeurIPS Code of Ethics.
    \item[] Guidelines:
    \begin{itemize}
        \item The answer NA means that the authors have not reviewed the NeurIPS Code of Ethics.
        \item If the authors answer No, they should explain the special circumstances that require a deviation from the Code of Ethics.
        \item The authors should make sure to preserve anonymity (e.g., if there is a special consideration due to laws or regulations in their jurisdiction).
    \end{itemize}

\item {\bf Broader impacts}
    \item[] Question: Does the paper discuss both potential positive societal impacts and negative societal impacts of the work performed?
    \item[] Answer: \answerYes{} 
    \item[] Justification: Balanced loss promotes fairness by equalizing performance across demographic
groups and ensures that minority classes are not overlooked
in long-tailed datasets. It is also crucial in
federated learning, where data imbalances across clients can lead to biased models that favor heavy
users. This represents a broader impact of balanced multi-class classification under class imbalance.
    \item[] Guidelines:
    \begin{itemize}
        \item The answer NA means that there is no societal impact of the work performed.
        \item If the authors answer NA or No, they should explain why their work has no societal impact or why the paper does not address societal impact.
        \item Examples of negative societal impacts include potential malicious or unintended uses (e.g., disinformation, generating fake profiles, surveillance), fairness considerations (e.g., deployment of technologies that could make decisions that unfairly impact specific groups), privacy considerations, and security considerations.
        \item The conference expects that many papers will be foundational research and not tied to particular applications, let alone deployments. However, if there is a direct path to any negative applications, the authors should point it out. For example, it is legitimate to point out that an improvement in the quality of generative models could be used to generate deepfakes for disinformation. On the other hand, it is not needed to point out that a generic algorithm for optimizing neural networks could enable people to train models that generate Deepfakes faster.
        \item The authors should consider possible harms that could arise when the technology is being used as intended and functioning correctly, harms that could arise when the technology is being used as intended but gives incorrect results, and harms following from (intentional or unintentional) misuse of the technology.
        \item If there are negative societal impacts, the authors could also discuss possible mitigation strategies (e.g., gated release of models, providing defenses in addition to attacks, mechanisms for monitoring misuse, mechanisms to monitor how a system learns from feedback over time, improving the efficiency and accessibility of ML).
    \end{itemize}
    
\item {\bf Safeguards}
    \item[] Question: Does the paper describe safeguards that have been put in place for responsible release of data or models that have a high risk for misuse (e.g., pretrained language models, image generators, or scraped datasets)?
    \item[] Answer: \answerNA{} 
    \item[] Justification: The paper poses no such risks.
    \item[] Guidelines:
    \begin{itemize}
        \item The answer NA means that the paper poses no such risks.
        \item Released models that have a high risk for misuse or dual-use should be released with necessary safeguards to allow for controlled use of the model, for example by requiring that users adhere to usage guidelines or restrictions to access the model or implementing safety filters. 
        \item Datasets that have been scraped from the Internet could pose safety risks. The authors should describe how they avoided releasing unsafe images.
        \item We recognize that providing effective safeguards is challenging, and many papers do not require this, but we encourage authors to take this into account and make a best faith effort.
    \end{itemize}

\item {\bf Licenses for existing assets}
    \item[] Question: Are the creators or original owners of assets (e.g., code, data, models), used in the paper, properly credited and are the license and terms of use explicitly mentioned and properly respected?
    \item[] Answer: \answerYes{} 
    \item[] Justification: See Section~\ref{sec:experiments}. Each dataset is licensed under CC-BY 4.0.
    \item[] Guidelines:
    \begin{itemize}
        \item The answer NA means that the paper does not use existing assets.
        \item The authors should cite the original paper that produced the code package or dataset.
        \item The authors should state which version of the asset is used and, if possible, include a URL.
        \item The name of the license (e.g., CC-BY 4.0) should be included for each asset.
        \item For scraped data from a particular source (e.g., website), the copyright and terms of service of that source should be provided.
        \item If assets are released, the license, copyright information, and terms of use in the package should be provided. For popular datasets, \url{paperswithcode.com/datasets} has curated licenses for some datasets. Their licensing guide can help determine the license of a dataset.
        \item For existing datasets that are re-packaged, both the original license and the license of the derived asset (if it has changed) should be provided.
        \item If this information is not available online, the authors are encouraged to reach out to the asset's creators.
    \end{itemize}

\item {\bf New assets}
    \item[] Question: Are new assets introduced in the paper well documented and is the documentation provided alongside the assets?
    \item[] Answer: \answerNA{} 
    \item[] Justification: The paper does not release new assets.
    \item[] Guidelines:
    \begin{itemize}
        \item The answer NA means that the paper does not release new assets.
        \item Researchers should communicate the details of the dataset/code/model as part of their submissions via structured templates. This includes details about training, license, limitations, etc. 
        \item The paper should discuss whether and how consent was obtained from people whose asset is used.
        \item At submission time, remember to anonymize your assets (if applicable). You can either create an anonymized URL or include an anonymized zip file.
    \end{itemize}

\item {\bf Crowdsourcing and research with human subjects}
    \item[] Question: For crowdsourcing experiments and research with human subjects, does the paper include the full text of instructions given to participants and screenshots, if applicable, as well as details about compensation (if any)? 
    \item[] Answer: \answerNA{} 
    \item[] Justification: The paper does not involve crowdsourcing nor research with human subjects.
    \item[] Guidelines:
    \begin{itemize}
        \item The answer NA means that the paper does not involve crowdsourcing nor research with human subjects.
        \item Including this information in the supplemental material is fine, but if the main contribution of the paper involves human subjects, then as much detail as possible should be included in the main paper. 
        \item According to the NeurIPS Code of Ethics, workers involved in data collection, curation, or other labor should be paid at least the minimum wage in the country of the data collector. 
    \end{itemize}

\item {\bf Institutional review board (IRB) approvals or equivalent for research with human subjects}
    \item[] Question: Does the paper describe potential risks incurred by study participants, whether such risks were disclosed to the subjects, and whether Institutional Review Board (IRB) approvals (or an equivalent approval/review based on the requirements of your country or institution) were obtained?
    \item[] Answer: \answerNA{} 
    \item[] Justification: The paper does not involve crowdsourcing nor research with human subjects.
    \item[] Guidelines:
    \begin{itemize}
        \item The answer NA means that the paper does not involve crowdsourcing nor research with human subjects.
        \item Depending on the country in which research is conducted, IRB approval (or equivalent) may be required for any human subjects research. If you obtained IRB approval, you should clearly state this in the paper. 
        \item We recognize that the procedures for this may vary significantly between institutions and locations, and we expect authors to adhere to the NeurIPS Code of Ethics and the guidelines for their institution. 
        \item For initial submissions, do not include any information that would break anonymity (if applicable), such as the institution conducting the review.
    \end{itemize}

\item {\bf Declaration of LLM usage}
    \item[] Question: Does the paper describe the usage of LLMs if it is an important, original, or non-standard component of the core methods in this research? Note that if the LLM is used only for writing, editing, or formatting purposes and does not impact the core methodology, scientific rigorousness, or originality of the research, declaration is not required.
    \item[] Answer: \answerNA{} 
    \item[] Justification: The core method development in this research does not involve LLMs as any important, original, or non-standard components.
    \item[] Guidelines:
    \begin{itemize}
        \item The answer NA means that the core method development in this research does not involve LLMs as any important, original, or non-standard components.
        \item Please refer to our LLM policy (\url{https://neurips.cc/Conferences/2025/LLM}) for what should or should not be described.
    \end{itemize}

\end{enumerate}

\newpage
\appendix

\renewcommand{\contentsname}{Contents of Appendix}
\tableofcontents
\addtocontents{toc}{\protect\setcounter{tocdepth}{3}} 
\clearpage

\section{Related work}
\label{app:related-work}

Class imbalance is a prevalent challenge in real-world multi-class
classification problems \citep{cui2019class,Fawcett:1996,kang2021exploring,KubatMa97,Lewis:1994,liu2019large, menonlong}. Applications such as medical diagnosis, fraud
detection, and rare event prediction often involve highly skewed label
distributions, where a small subset of classes dominate the data,
while others, sometimes the most critical, are heavily
underrepresented. Standard training objectives, such as minimizing the
unweighted cross-entropy loss, tend to be biased toward majority
classes, leading to poor performance on minority classes and
undermining the fairness, soundness and reliability of learned models.

The extensive literature on class imbalance has yielded a diverse array of techniques \citep{CardieNowe1997, chawla2002smote, he2009learning, KubatMa97, WallaceSmallBrodleyTrikalinos2011}. Due to space constraints, a comprehensive review of every method is infeasible. Instead, we will categorize and discuss several major strategic directions, referring the reader to recent surveys, such as \citet{ZhangKangHooiYanFeng2023}, for a more exhaustive treatment. These strategies can be broadly grouped as follows:

\textbf{1. Data-Level Approaches}
These methods aim to directly modify the training dataset's class distribution to create a more balanced representation.

\begin{itemize}
    \item \textbf{Re-sampling Techniques:} This is the most traditional approach, involving either oversampling the minority classes (e.g., by duplicating instances or more advanced interpolation) or undersampling the majority classes (by removing instances) \citep{KubatMa97,WallaceSmallBrodleyTrikalinos2011}.
    \item \textbf{Synthetic Data Generation:} More sophisticated methods generate new synthetic samples for minority classes. SMOTE (Synthetic Minority Over-sampling Technique) and its variants are prominent examples \citep{chawla2002smote, han2005borderline, QiaoLiu2008}.
    \item \textbf{Advanced Data Augmentation:} Recent works explore targeted data augmentation strategies to enhance minority class representation, sometimes using generative models or optimal transport principles (e.g., \citep{gao2024enhancing, liuelta, wang2021rsg, zhugenerative}). While these methods can improve minority class recognition, oversampling may lead to overfitting, undersampling can discard valuable data, and the effectiveness of synthetic data depends heavily on the generation quality \citep{estabrooks2004multiple, liu2008exploratory, shi2023re, zhang2021learning}.   
\end{itemize}

\textbf{2. Algorithm-Level Cost-Sensitive Learning}
This category focuses on modifying the learning algorithm to treat classes differently, typically by assigning higher misclassification costs to errors on minority classes.

\begin{itemize}
    \item \textbf{Class Re-Weighting:} A common implementation involves incorporating class weights directly into the loss function, where weights are often inversely proportional to class frequencies or based on concepts like the "effective number of samples" \citep{cui2019class}. Examples include weighted versions of Softmax or the 0/1 loss \citep{GabidollaZharmagambetovCarreiraPerpinan2024, morik1999combining, xie1989logit}.
    \item \textbf{Cost-Sensitive Classifiers:} Some learning algorithms, like SVMs, have explicit cost-sensitive formulations \citep{Iranmehr:2019, Masnadi-Shirazi:2010}. Many other methods adapt standard learners to be cost-aware \citep{elkan2001foundations, Fan:2017, jamal2020rethinking, sun2007cost, wang2022solar, suh2023long, wang2023unified, wang2025unified, li2025focal, xiao2024fed, zhang2018online, zhang2019online, zhang2022self, zhao2018adaptive, zhou2005training}. Cost-sensitive methods offer a principled way to emphasize underrepresented classes. While they can be viewed as algorithmically achieving effects similar to re-sampling, they avoid explicit data duplication or removal. However, their success often hinges on the appropriate selection of costs/weights, and they may not fundamentally alter the decision boundaries if the classes are inherently hard to separate or if the chosen weights are not optimal \citep{VanHulse:2007}.
\end{itemize}

\textbf{3. Loss Function and Logit Adjustment}
This broad category involves designing or modifying loss functions to be more robust to class imbalance or to directly optimize for balanced performance metrics.

\begin{itemize}
    \item \textbf{Modulating Sample Contributions:} Some losses dynamically adjust the contribution of each sample to the total loss based on its difficulty or class. The Focal loss \citep{lin2017focal}, for instance, down-weights well-classified (often majority class) examples, allowing the model to focus on hard, minority examples.
    \item \textbf{Margin-Based Modifications:} Several approaches aim to enforce larger decision margins for minority classes or between specific class pairs. Examples include LDAM \citep{cao2019learning}, Equalization loss (ESQL) \citep{tan2020equalization}, and Balanced Softmax \citep{jiawei2020balanced}. LADE \citep{hong2020disentangling} also explores disentangling label distributions.
    \item \textbf{Direct Logit Adjustments:} This sub-group directly modifies the logits (pre-softmax outputs) of the model, often by adding class-specific biases. The Logit Adjustment (LA) method by \citet{menonlong,khan2019striking} and related techniques like UNO-IC \citep{tian2020posterior, weilearning} and LSC \citep{weilearning} fall here. \citet{menonlong} showed that a specific form of logit adjustment can achieve Bayes-consistency for the balanced error. Other works explore multiplicative logit modifications \citep{Ye:2020} or combinations of additive and multiplicative changes, like the Vector-Scaling loss \citep{kini2021label}, though multiplicative changes can sometimes be seen as equivalent to input feature re-normalization. To capture how these modified loss functions handle different classes, \citet{wang2023unified} proposed a novel technique named data-dependent contraction. \citet{wang2025unified} showed that the additive and multiplicative logit modifications essentially correspond to different local calibration assumptions. These methods directly influence the optimization landscape and decision boundaries but may introduce new hyperparameters requiring careful tuning.
\end{itemize}

\textbf{4. Representation Learning for Imbalanced Data}
Instead of (or in addition to) modifying data or loss functions, these techniques focus on learning feature representations that are inherently more robust to class imbalance or that better highlight minority class characteristics.
\begin{itemize}
    \item Examples include OLTR \citep{liu2019large}, PaCo \citep{cui2021parametric}, DisA \citep{gao2024distribution}, and other recent methods focused on semantic richness or distribution alignment (e.g., RichSem \citep{meng2024learning}, RBL \citep{meng2024learning}, WCDAS \citep{han2023wrapped}). Learning discriminative and balanced representations is a fundamental goal, and these methods often aim to decouple feature learning from classifier training to some extent. 
\end{itemize}

\textbf{5. Decoupled Training and Post-Hoc Adjustments}
This strategy involves separating the learning process into stages or applying corrections after an initial model has been trained.

\begin{itemize}
    \item \textbf{Decoupled Training:} Representation learning and classifier training are often performed separately. For example, a model might first be trained with instance-balanced sampling or a standard loss, and then the classifier head is fine-tuned using a class-balanced approach (e.g., Decouple-IB-CRT \citep{kang2019decoupling}, CB-CRT \citep{kang2019decoupling}, SR-CRT \citep{kang2019decoupling}, PB-CRT \citep{kang2019decoupling}, MiSLAS \citep{zhong2021improving}). Weight normalization techniques \citep{Kim:2019, kang2019decoupling, Zhang:2019} also often fall under this paradigm.
    \item \textbf{Post-Hoc Correction:} These methods adjust the outputs or decision thresholds of a pre-trained classifier to improve performance on imbalanced data, without retraining the model \citep{Collell:2016, Fawcett:1996, zhu2023generalized}. These approaches offer flexibility and can be applied to existing models, but post-hoc methods may not achieve the same level of performance as methods that incorporate imbalance considerations throughout training.
\end{itemize}

\textbf{6. Ensemble Learning Approaches}
Ensemble methods combine multiple classifiers to achieve better predictive performance than any single constituent classifier. For imbalanced learning, ensembles are often constructed by training base learners on different re-sampled versions of the data or by using different cost-sensitive strategies for each member.
\begin{itemize}
    \item Examples include BBN \citep{zhou2020bbn}, LFME \citep{xiang2020learning}, RIDE \citep{wang2020long}, ResLT \citep{cui2021reslt}, SADE \citep{zhang2022self}, and DirMixE \citep{yangharnessing}. Ensembles are often robust but can increase computational expense and reduce model interpretability.
\end{itemize}

\textbf{7. Other Notable Strategies}
The field also includes various other specialized techniques:

\begin{itemize}
    \item \textbf{Transfer Learning:} Leveraging knowledge from related tasks or datasets can help, especially for data-scarce minority classes (e.g., SSP \citep{yang2020rethinking}).
    \item \textbf{Specialized Classifier Design:} Some works focus on designing classifier architectures or objective functions specifically robust to long tails or confounding factors (e.g., De-confound \citep{tang2020long}, \citep{kasarla2022maximum, yang2022inducing}, LIFT \citep{shi2024long}, SimPro \citep{dusimpro}).
    \item \textbf{Metric-Focused Optimization:} Recent studies also analyze the asymptotic performance of classifiers under different metrics on imbalanced data \citep{LoffredoPastoreCoccoMonasson2024}, develop size-invariant metrics for specific tasks like salient object detection \citep{LiXuBaoYangCongCaoHuang2024}, and propose improved Average Precision (AP) losses for the AUPRC metric that are robust to noisy and imbalanced pseudo-labels \citep{wen2025semantic}. Information and data augmentation via distillation have also been explored \citep{LiLiYeZhang2024}.
\end{itemize}

This categorization highlights the multifaceted nature of addressing class imbalance. Our work contributes to the area of loss function and logit adjustment, aiming for theoretically grounded and empirically effective solutions. For further details on the landscape of imbalanced learning, we again refer the reader to comprehensive surveys like \citet{ZhangKangHooiYanFeng2023}.

\section{Margin bounds}
\label{app:margin-bounds}

This section provides a margin-based theoretical analysis of
cost-sensitive multi-class classification. We derive margin bounds for
both the GCA and GLA families. The analysis for the GLA family is more
complex, and the resulting bound is generally less favorable, with a
dependence on $1/\sfp_{\min}$.

The proof involves the derivation of an upper bound on the cost-sensitive zero-one loss function expressed in terms of a margin loss and a maximum operator, along with an analysis of the Rademacher complexity of this maximum term via the vector contraction lemma. Moreover, our margin bounds for GLA loss functions are non-trivial and require a specific and entirely new analysis (Appendix~\ref{app:margin-GLA}).

\subsection{Theoretical analysis}

Let $h \colon \sX \times [\num] \to \Rset$ be scoring function
belonging to the hypothesis set $\sH$. 
We define the cost-sensitive zero-one loss function $\sfL$ as follows:
for all $(h, x, y) \in \sH \times \sX \times [\num]$,
\[
\sfL(h, x, k) = c(x, y) \, 1_{\hh(x) \neq y},
\]
where $c(x, y) $ is a non-negative cost that is upper bounded by $\ov C$. Note that $\lbal$ is a special case of $\sfL$.

\textbf{A. Cost-sensitive margin loss functions.}  We first introduce
new cost-sensitive margin loss functions which will play a central
role in our derivation of margin-based guarantees for cost-sensitive
learning.

Let $\Phi_{\rho} \colon u \mapsto 
\min \paren*{1, \max \paren*{0, 1
    - u / \rho}}$ denote the $\rho$-margin loss function.
We can upper-bound the cost-sensitive zero-one loss function $\sfL$
as follows:
\begin{align*}
  \sfL(h, x, y)
  & \leq c(x, y) \Phi_{\rho}\paren*{\rho_h(x, y)}\\
  & = c(x, y) \Phi_{\rho}\paren[\Big]{h(x, y) - \max_{y' \neq y} h(x, y')}\\
  & \leq c(x, y) \Phi_{\rho}\paren*{h(x, y) - h(x, \hh(x))}\\
  & = c(x, y) \max_{y' \in [\num]} \curl*{\Phi_{\rho}\paren*{h(x, y) - h(x, y')}}.
\end{align*}
The second inequality follows from the fact that when $y = \hh(x)$ we
have $h(x, y) = h(x, \hh(x)) \geq \max_{y' \neq y} h(x, y')$. Otherwise, for $y \neq \hh(x)$, the
runner-up prediction satisfies $\argmax_{y' \neq y} h(x, y') =
\hh(x)$.

The analysis above motivates the definition of the
\emph{cost-sensitive margin loss function} as the function
$\sfL_{\rho} \colon \sH_{\mathrm{all}} \times \sX \times [\num] \to
\Rset$, defined as follows, for any fixed $\rho > 0$:
\begin{equation*}
\sfL_{\rho}(h, x, y)
=  c(x, y) \max_{y' \in [\num]} \curl*{\Phi_{\rho}\paren*{h(x, y) - h(x, y')}}.
\end{equation*}

\textbf{B. Margin bounds.}
\label{sec:margin-bound}
We now establish a general margin-based generalization bound, which 
serves as the foundation for deriving new algorithms for
cost-sensitive classification. 

Given a sample $S = \paren*{x_1, \ldots, x_m}$ and a hypothesis $h$,
the \emph{empirical cost-sensitive margin loss} is defined by $\h
\sR_{S,\rho}(h) = \frac{1}{m} \sum_{i = 1}^m
\sfL_{\rho}(h, x_i, y_i)$ and the \emph{empirical GCA loss} is defined by $\h \sR_{S, \ell_{\rm{GCA}}}(h) = \frac{1}{m} \sum_{i = 1}^m
\ell_{\rm{GCA}}(h, x_i, y_i)$. The empirical Rademacher complexity of $\sH$ for a sample $S$ is defined as:
\begin{equation*}
  \h \Rad_{S}(\sH)
  = \frac{1}{m} \E_{\bepsilon}\bracket*{\sup_{h \in \sH} \curl*{
      \sum_{i = 1}^m \sum_{y = 1}^\num \e_{iy} h(x_i, y)}},
\end{equation*}
where $\bepsilon =
\paren*{\e_{i y}}_{i, y}$ represents a matrix of independent
Rademacher variables $\e_{iy}$s, each uniformly distributed over
$\curl*{-1, +1}$.
For any integer $m \geq 1$, the Rademacher complexity
of $\sH$ is the expectation of $\h \Rad_{S}(\sH)$ over all
samples $S$ of size $m$: $\Rad_{m}(\sH) = \E_{S \sim \sD^m}
\bracket*{\h \Rad_{S}(\sH)}$.

Using these notions of complexity, we prove the following cost-sensitive margin-based guarantees.

\begin{restatable}[Margin bound for  cost-sensitive
    classification]{theorem}{MarginBound}
\label{thm:margin-bound}
Let $\sH$ be a family of functions mapping from $\sX \times [\num]$ to
$\Rset$. Then, for any
$\delta > 0$, with probability at least $1 - \delta$, each of the
following inequalities holds for all $h \in \sH$:
\begin{align*}
  \sR_{\sfL}(h) &\leq \h \sR_{S, \rho}(h) + 4 \ov C \sqrt{2\num}\, \Rad_{m}(\sH)
  + \sqrt{\frac{\log \frac{1}{\delta}}{2m}}\\
  \sR_{\sfL}(h) &\leq \h \sR_{S, \rho}(h) + 4 \ov C \sqrt{2\num}\, \h \Rad_{S}(\sH)
  + 3 \sqrt{\frac{\log \frac{2}{\delta}}{2m}}.
\end{align*}
\end{restatable}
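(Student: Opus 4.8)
The plan is to reduce the cost-sensitive zero-one risk to the margin risk and then prove one-sided uniform convergence via Rademacher complexity. Since the pointwise inequality $\sfL(h,x,y)\le\sfL_\rho(h,x,y)$ is already established, taking expectations over $(x,y)\sim\sD$ gives $\sR_{\sfL}(h)\le\E_{(x,y)\sim\sD}\bracket*{\sfL_\rho(h,x,y)}$ for every $h\in\sH$. It therefore suffices to bound $\E_{(x,y)\sim\sD}\bracket*{\sfL_\rho(h,x,y)}$ uniformly by $\h\sR_{S,\rho}(h)$ plus complexity and concentration terms, working with the family $\cG=\curl*{(x,y)\mapsto\sfL_\rho(h,x,y)\colon h\in\sH}$, whose members lie in $[0,\ov C]$ because $\Phi_\rho\in[0,1]$ and $c(x,y)\le\ov C$.

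First I would apply the standard Rademacher generalization bound for bounded loss families, obtained by applying McDiarmid's inequality to the uniform deviation $\sup_{h\in\sH}\paren*{\E_{(x,y)\sim\sD}\bracket*{\sfL_\rho(h,x,y)}-\h\sR_{S,\rho}(h)}$ and then symmetrizing. This yields, with probability at least $1-\delta$ and for all $h$, $\E_{(x,y)\sim\sD}\bracket*{\sfL_\rho(h,x,y)}\le\h\sR_{S,\rho}(h)+2\Rad_m(\cG)+\sqrt{\frac{\log(1/\delta)}{2m}}$, which together with the reduction above gives the first inequality once $\Rad_m(\cG)$ is controlled. For the second, data-dependent inequality, I would additionally apply McDiarmid to show that $\h\Rad_S(\cG)$ concentrates around $\Rad_m(\cG)$ and take a union bound over the two high-probability events; this is the source of the $2/\delta$ and of the factor $3$ in the concentration term.

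The crux is bounding the Rademacher complexity of the loss family, $\Rad_m(\cG)$, by that of the scoring class, $\Rad_m(\sH)$. After symmetrization the relevant quantity is $\frac1m\E_{\bepsilon}\sup_{h\in\sH}\sum_{i=1}^m\e_i\,c(x_i,y_i)\max_{y'\in[\num]}\Phi_\rho\paren*{h(x_i,y_i)-h(x_i,y')}$. I would first use $c(x_i,y_i)\le\ov C$ to factor out the bounded cost, then view the remaining term as the $\num$-way maximum of the Lipschitz margin loss $\Phi_\rho$ applied to logit differences of the score vector $\paren*{h(x_i,1),\dots,h(x_i,\num)}$. Applying the vector contraction lemma to this maximum converts the single-Rademacher term into the double-index sum $\sum_i\sum_y\e_{iy}h(x_i,y)$ that defines $\h\Rad_S(\sH)$, and collecting the contraction constant together with the Lipschitzness of $\Phi_\rho$ produces the claimed factor $4\ov C\sqrt{2\num}$.

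I expect this contraction step to be the main obstacle. Unlike the scalar Talagrand contraction, the loss here is a maximum over the $\num$ competing labels of a composition applied to pairwise logit differences, coupled with a data-dependent cost; handling this requires the vector contraction lemma applied to the score vector with respect to its $\ell_2$ geometry, together with a careful verification that the $\num$-way maximum remains Lipschitz and that the nonnegative, uniformly bounded cost can be pulled outside without inflating the Lipschitz estimate. Tracking these constants so that the final bound matches $4\ov C\sqrt{2\num}\,\Rad_m(\sH)$ (and, for the empirical version, $4\ov C\sqrt{2\num}\,\h\Rad_S(\sH)$) is the delicate part of the argument.
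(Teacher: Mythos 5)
Your proposal matches the paper's proof essentially step for step: the same reduction $\sR_{\sfL}(h)\leq \E[\sfL_{\rho}(h,x,y)]$, the same application of the standard Rademacher generalization bound (with McDiarmid and a union bound yielding the factor $3$ and the $2/\delta$), and the same use of the vector contraction lemma after verifying that $[h(x_i,y)]_{y}\mapsto c(x_i,y_i)\max_{y'}\Phi_{\rho}(h(x_i,y_i)-h(x_i,y'))$ is Lipschitz in the $\ell_2$ geometry with the cost absorbed into the constant. No substantive differences; the approach is sound and is exactly the one the paper takes.
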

The proof is included in Appendix~\ref{app:margin-bound}. These bounds can be generalized to hold uniformly for all $\rho \in (0, 1]$, at the cost of additional $\log
  \log$-terms, using standard proof techniques
  \citep[Theorem~5.9]{MohriRostamizadehTalwalkar2018}.
As with standard margin bounds, these learning guarantees suggest a
trade-off: Increasing $\rho$ reduces the complexity term (second
term) but simultaneously increases the empirical
cost-sensitive margin loss, $\h \sR_{S, \rho}(h)$ (first term), by
imposing stricter confidence margin requirements. Thus, if $h$
maintains a low empirical cost-sensitive margin loss even
with a relatively large $\rho$ value, it admits a strong
generalization error guarantee. Using the fact that $\sfL_{\rho}(h)$ is upper bounded by $\ell_{\rm{GCA}}(h / \rho)$, where
$c(x, y) = \frac{1}{\pp(y)} \leq \frac{1}{\pp_{\min}} = \ov C$, we derive the margin bounds for GCA losses below.
\begin{align*}
   \sR_{\lbal}(h)  &\leq \h \sR_{S, \ell_{\rm{GCA}}}(h / \rho) + \frac{4}{\pp_{\min}} \sqrt{2\num}\, \Rad_{m}(\sH)
  + \sqrt{\frac{\log \frac{1}{\delta}}{2m}}\\
  \sR_{\lbal}(h) &\leq \h \sR_{S, \ell_{\rm{GCA}}}(h / \rho) + \frac{4}{\pp_{\min}}\sqrt{2\num}\, \h \Rad_{S}(\sH)
  + 3 \sqrt{\frac{\log \frac{2}{\delta}}{2m}}.
\end{align*}

\subsection{Margin bounds for GLA lossess}
\label{app:margin-GLA}

The previous section established margin bounds for GCA losses by leveraging their class-weighted structure. In contrast, deriving analogous bounds for GLA losses is non-trivial due to their different formulation, which involves shifting logits based on class priors. To address this, we will rely on a non-trivial inequality presented in Lemma~\ref{lemma:LAMargin}. 

Given a sample $S = \paren*{x_1, \ldots, x_m}$ and a hypothesis $h$, the \emph{empirical GLA loss} is defined by $\h \sR_{S, \ell_{\rm{GLA}}}(h) = \frac{1}{m} \sum_{i = 1}^m
\ell_{\rm{GLA}}(h, x_i, y_i)$. For simplicity, our analysis focuses on the GLA loss with $q = 0$. A similar line of reasoning allows for the extension of this proof to the general case where $q \in (0, 1)$.
In our setting of the balanced loss, the costs only depend on $y$ with
$c(y) = 1/\pp(y)$.  Our analysis holds for arbitrary such
$y$-dependent costs. Let $c_{\max}$ denote an upper bound $c_{\min}$ a
lower bound on the costs. Define $C_{\max} = \frac{c_{\max}}{\log
  \bracket*{1 + \frac{c_{\min}}{c_{\max}}}}$. Then, for any $y, y' \in \sY$,
the following holds:
\[
\frac{c(y)}{\log \bracket*{1 + \frac{c(y)}{c(y')}}}
\leq \frac{c(y)}{\log \bracket*{1 + \frac{c_{\min}}{c_{\max}}}}
\leq C_{\max}.
\]
Thus, for any $\rho > 0$ and $y, y' \in \sY$, we have (see illustration in Figure ~\ref{fig:LAMargin} and proof of Lemma~\ref{lemma:LAMargin})
\[
c(y) \Phi_\rho(v) \leq C_{\max}  \log\bracket*{1 + \frac{c(y)}{c(y')} \exp\paren*{-\frac{v}{\rho}}}.
\]
\begin{figure}[t]
    \centering
    \includegraphics[scale=0.4]{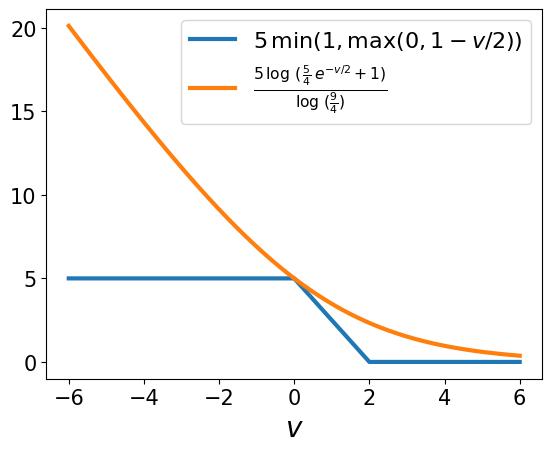}
    \caption{Illustration of the bound in the proof of the margin loss
      for $\ell_{\rm LA}$.}
    \label{fig:LAMargin}
\end{figure}
Using the monotonicity of the logarithm and upper-bounding a maximum
of non-negative terms by a sum yields the following any $\rho > 0$ and
any $(x, y) \in \sX \times \sY$:
\begin{align*}
c(y) \max_{y' \neq y} \curl*{\Phi_{\rho}\paren*{h(x, y) - h(x, y')}}
& \leq C_{\max} \max_{y' \neq y} \curl*{\log\bracket*{1 + \frac{c(y)}{c(y')} \exp\paren*{\frac{h(x, y') - h(x, y)}{\rho}}}}\\
& \leq C_{\max} \curl*{\log\bracket*{1 + \max_{y' \neq y} \frac{c(y)}{c(y')} \exp\paren*{\frac{h(x, y') - h(x, y)}{\rho}}}}\\
& \leq C_{\max} \curl*{\log\bracket*{1 + \sum_{y' \neq y} \frac{c(y)}{c(y')} \exp\paren*{\frac{h(x, y') - h(x, y)}{\rho}}}}\\
& = C_{\max} \curl*{\log\bracket*{\sum_{y' \in \sY} \frac{c(y)}{c(y')} \exp\paren*{\frac{h(x, y') - h(x, y)}{\rho}}}}\\
& = C_{\max} \, \ell_{\rm GLA}(h/\rho, x, y).
\end{align*}
Thus, this yields the margin-based bounds for the GLA loss below. In our setting, $c_{\min} = 1 \leq \frac{1}{ \pp(y)}$
and $c_{\max} = \frac{1}{\pp_{\min}}$. Thus, as with the $\sH$-consistency
guarantees, the margin bounds here depend on $\frac{1}{\pp_{\min}}$.
\begin{align*}
   \sR_{\lbal}(h)  &\leq \frac{1}{\pp_{\min} \log
  \bracket*{1 + \pp_{\min}}} \h \sR_{S, \ell_{\rm{GLA}}}(h / \rho) + \frac{4}{\pp_{\min}} \sqrt{2\num}\, \Rad_{m}(\sH)
  + \sqrt{\frac{\log \frac{1}{\delta}}{2m}}\\
  \sR_{\lbal}(h) &\leq \frac{1}{\pp_{\min} \log
  \bracket*{1 + \pp_{\min}}} \h \sR_{S, \ell_{\rm{GLA}}}(h / \rho) + \frac{4}{\pp_{\min}}\sqrt{2\num}\, \h \Rad_{S}(\sH)
  + 3 \sqrt{\frac{\log \frac{2}{\delta}}{2m}}.
\end{align*}

\begin{lemma}
\label{lemma:LAMargin}
For any $\rho > 0$ and $y, y' \in \sY$, we have 
\[
c(y) \Phi_\rho(v) \leq C_{\max}  \log\bracket*{1 + \frac{c(y)}{c(y')} \exp\paren*{-\frac{v}{\rho}}},
\]
for every $v \in \Rset$.
\end{lemma}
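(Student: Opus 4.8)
The plan is to fix $y, y' \in \sY$, abbreviate $a = c(y)$ and $b = c(y')$, and prove the pointwise inequality $a\,\Phi_\rho(v) \le C_{\max}\log\bracket*{1 + \tfrac{a}{b}\,e^{-v/\rho}}$ for every $v \in \Rset$. Since both sides are unchanged under the rescaling $v \mapsto v/\rho$, I would first substitute $u = v/\rho$ and set $G(u) = C_{\max}\log\bracket*{1 + \tfrac{a}{b}\,e^{-u}}$, so the target reads $a\,\Phi_1(u) \le G(u)$ with $\Phi_1(u) = \min(1, \max(0, 1 - u))$. Two preliminary facts drive the argument. First, $G$ is nonnegative, strictly decreasing, and convex in $u$, since $G'(u) = -C_{\max}\,\tfrac{(a/b)e^{-u}}{1 + (a/b)e^{-u}} = -C_{\max}\,\tfrac{1}{1 + (b/a)e^{u}}$ is negative and increasing. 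Second, the inequality $\tfrac{a}{\log[1 + a/b]} \le C_{\max}$ displayed just before the lemma (which follows from $a/b \ge c_{\min}/c_{\max}$ and $a \le c_{\max}$) is exactly the statement $G(0) = C_{\max}\log[1 + a/b] \ge a$, and I note further that $a \le c_{\max} < C_{\max}$.

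I would then dispatch the three regimes of $\Phi_1$. For $u \ge 1$ the left-hand side vanishes while $G \ge 0$, so the bound is immediate. For $u \le 0$ we have $\Phi_1(u) = 1$, and since $G$ is decreasing, $G(u) \ge G(0) \ge a$ by the preliminary bound. The real content is the middle regime $0 < u < 1$, where $\Phi_1(u) = 1 - u$ and it remains to show that the convex function $D(u) = G(u) - a(1 - u)$ is nonnegative on $[0,1]$.

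To analyze $D$ on $[0,1]$ I would locate its unconstrained minimizer. Solving $D'(u) = a + G'(u) = 0$ gives a unique critical point $u^{*} = \log\tfrac{C_{\max} - a}{b}$ (well defined because $a < C_{\max}$), at which $\tfrac{a}{b}e^{-u^{*}} = \tfrac{a}{C_{\max} - a}$ and hence $G(u^{*}) = -C_{\max}\log(1 - a/C_{\max})$. By convexity the minimum of $D$ over $[0,1]$ lies either at $u^{*}$, when $u^{*} \in (0,1)$, or at an endpoint; the endpoints are already settled, since $D(0) = G(0) - a \ge 0$ (the preliminary bound) and $D(1) = G(1) \ge 0$. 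For the interior case, writing $\theta = a/C_{\max} \in (0,1)$ and factoring out $C_{\max}$ gives $D(u^{*}) = C_{\max}\bracket*{-\log(1 - \theta) - \theta + \theta\log\tfrac{C_{\max} - a}{b}}$.

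The crux — the step I expect to require the most care — is showing $D(u^{*}) \ge 0$, and the clean observation that closes it is that the minimizer being interior forces the logarithmic term to cooperate: $u^{*} > 0$ is equivalent to $C_{\max} - a > b$, so $\theta\log\tfrac{C_{\max}-a}{b} \ge 0$, and combined with the elementary scalar inequality $-\log(1 - \theta) \ge \theta$ for $\theta \in (0,1)$ this yields $D(u^{*}) \ge 0$. Thus the only genuinely nontrivial ingredient is recognizing that the sign condition $u^{*} > 0$ characterizing an interior minimizer is precisely the condition making the extra term nonnegative, after which a single one-line inequality finishes the estimate. Undoing the substitution $u = v/\rho$ then recovers the claimed bound for all $v \in \Rset$.
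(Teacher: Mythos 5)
Your proof is correct. It shares the skeleton of the paper's argument: the same reduction to $u = v/\rho$, the same three-regime case split on $\Phi_\rho$, and the same preliminary inequality $C_{\max}\log\bracket*{1 + c(y)/c(y')} \geq c(y)$ (Eq.~\eqref{eq:aux} in the paper), with the regimes $u \leq 0$ and $u \geq 1$ handled identically. The divergence is in the middle regime $0 < u < 1$, which is where all the content lies. The paper proves the linear-interpolation bound $(1-t)\log(1+a) \leq \log(1 + a e^{-t})$, with $a = c(y)/c(y')$, by checking that the difference $h(t) = \log(1+ae^{-t}) - (1-t)\log(1+a)$ has nonnegative derivative (via $\log(1+w) \geq w/(1+w)$) and vanishes at $t = 0$, then multiplies by $C_{\max}$ and applies Eq.~\eqref{eq:aux}. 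You instead compare $G(u) = C_{\max}\log\bracket*{1 + (c(y)/c(y'))e^{-u}}$ with $c(y)(1-u)$ directly, use convexity of the gap $D$, compute the critical point $u^{*} = \log\frac{C_{\max}-c(y)}{c(y')}$ explicitly (well defined because $c(y) \leq c_{\max} < C_{\max}$, as you note), and observe that interiority of $u^{*}$ forces $\log\frac{C_{\max}-c(y)}{c(y')} > 0$, so that the elementary bound $-\log(1-\theta) \geq \theta$ closes the estimate. Both arguments are sound. The paper's route is shorter and needs only a one-line monotonicity check; yours is more computational but pinpoints exactly where the inequality is tightest on $[0,1]$ and makes the role of the constant $C_{\max}$ (through $\theta = c(y)/C_{\max}$ and the sign of $u^{*}$) explicit, which is a useful sanity check on the sharpness of the bound.
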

\begin{proof}
Fix labels $y, y'$ and a margin value $v \in \Rset$.  
Write $a = \dfrac{c(y)}{c(y')}$, $\;t = \dfrac{v}{\rho}$, and recall the
bounds $c_{\min}\leq c(y),c(y')\leq c_{\max}$.  
By definition, $C_{\max} = \frac{c_{\max}}{\log
  \bracket*{1 + \frac{c_{\min}}{c_{\max}}}}$. Then, for any $y, y' \in \sY$,
the following holds:
\begin{equation}
\label{eq:aux}
\frac{c(y)}{\log \bracket*{1 + \frac{c(y)}{c(y')}}}
\leq \frac{c(y)}{\log \bracket*{1 + \frac{c_{\min}}{c_{\max}}}}
\leq C_{\max}.    
\end{equation}
Next, we analyze case by case.

\emph{(i) $v \leq 0$ ($t\leq 0$).}  Then $\Phi_\rho(t) = 1$ and
$\exp(-t) \geq 1$, using \eqref{eq:aux} gives 
\[
C_{\max} \log \paren*{1 + a e^{-t}}
\geq
C_{\max} \log(1 + a)
\geq c(y) = c(y) \Phi_\rho(t).
\]
\emph{(ii) $0 \leq v \leq \rho$ ($0 \leq t\leq 1$).}  Define
$h(t) = \log(1 + a e^{-t}) - (1 - t) \log(1 + a)$.
Since
$h'(t)= -\dfrac{a e^{-t}}{1 + a e^{-t}} + \log(1 + a) \geq -\dfrac{a}{1 + a} + \log(1 + a) \geq 0$ (because
$\log(1 + u)\geq u/(1 + u)$ for all $u \geq 0$), we have $h(t)\geq h(0) = 0$; hence
\[
(1 - t) \log(1 + a) \leq
\log \paren*{1 + a e^{-t} }.
\]
Multiplying by $C_{\max}$ and using \eqref{eq:aux} gives
\[
C_{\max} \log \paren*{1+a e^{-t}}
\geq 
C_{\max} \log(1 + a) (1 - t)
\geq 
c(y) (1-t)
=
c(y) \Phi_\rho(v).
\]
\emph{(iii) $v \geq \rho$ ($t \geq 1$).}  Then $\Phi_\rho(v) = 0$ and the desired
inequality is trivial because the right–hand side is non‑negative.

In conclusion, all three cases yield
\[
c(y) \Phi_\rho(v) \leq C_{\max}  \log\bracket*{1 + \frac{c(y)}{c(y')} \exp\paren*{-\frac{v}{\rho}}},
\]
for every $v \in \Rset$. This completes the proof.
\end{proof}

\subsection{Algorithms}
\label{sec:algorithms}

The margin guarantees established in the previous section provide a
foundation for developing new algorithms. We begin by deriving a more
explicit learning guarantee within a broad framework, which we then
use to define a general cost-sensitive learning algorithm.

\textbf{A. Explicit upper bounds}. To make these guarantees
more explicit, we introduce the following setup.
Given a feature mapping $\Phi \colon \sX \times [\num] \to \Rset^d$,
we can identify $\sX \times [n]$ with a subset of $\Rset^d$, with
$\norm{\Psi(x, y)} \leq \sfX_y$ for all $x \in \sX$ and $\sfX = \max_{y
  \in [n]} \sfX_y$, for some norm $\norm*{ \ \cdot \ }$. We assume
$\sH$ is given by $\sH = \curl*{h \in \ov \sH \colon \norm{h}_* \leq \ov \sfH}$, for some appropriate norm $\norm*{\, \cdot \,}_*$ on some
space $\ov \sH$ and $\ov \sfH > 0$. This formulation covers a wide range
of hypothesis sets, including linear, kernel-based, and neural network
models. In particular, it captures the settings of neural networks
with weight matrices constrained by a Frobenius norm bound
\citep{CortesGonzalvoKuznetsovMohriYang2017,
  NeyshaburTomiokaSrebro2015} or a spectral norm complexity constraint
relative to reference weight matrices
\citep{BartlettFosterTelgarsky2017}.
In all of these cases, the empirical Rademacher
complexity can be upper bounded as follows:
\begin{equation*}
  \h \Rad_{S}(\sH) \leq \frac{\sqrt{\num} \, \sfH }{m} \sqrt{\sum_{j = 1}^\num m_j \sfX_j^2} \leq \frac{\sqrt{\num} \, \sfH \sX}{\sqrt{m}},
\end{equation*}
where the complexity term $\sfH$ depends on $\ov \sfH$. Combining this upper bound with Theorem~\ref{thm:margin-bound}
yields the following more explicit guarantee.

\begin{corollary}
\label{cor:margin-bound}
Fix $\rho = [\rho_k]_{k \in [\num]}$, then, for any $\delta > 0$,
with probability at least $1 - \delta$ over the choice of a sample $S$
of size $m$, the following holds for any $f \in \sH$:
\begin{align*}
  \sR_{\sfL}(h) &\leq \h \sR_{S, \rho}(h)
  + \frac{4 \ov C \sqrt{2 } \num \sfH}{m} \sqrt{\sum_{j = 1}^\num m_j \sfX_j^2} 
  + 3 \sqrt{\frac{\log \frac{2}{\delta}}{2m}}.
\end{align*}
\end{corollary}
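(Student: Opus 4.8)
The plan is to obtain this corollary as a direct specialization of Theorem~\ref{thm:margin-bound}, substituting into its complexity term the explicit Rademacher-complexity estimate that holds for the structured hypothesis class $\sH = \curl*{h \in \ov\sH \colon \norm{h}_* \leq \ov\sfH}$. The only two ingredients needed are already in hand: the empirical-complexity version of the margin bound (the second inequality of Theorem~\ref{thm:margin-bound}) and the displayed bound $\h\Rad_{S}(\sH) \leq \frac{\sqrt{\num}\,\sfH}{m}\sqrt{\sum_{j=1}^{\num} m_j \sfX_j^2}$ established just above the corollary under the feature-map assumption $\norm{\Psi(x,y)} \leq \sfX_y$.

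Concretely, I would start from the high-probability statement $\sR_{\sfL}(h) \leq \h\sR_{S,\rho}(h) + 4\ov C\sqrt{2\num}\,\h\Rad_{S}(\sH) + 3\sqrt{\frac{\log(2/\delta)}{2m}}$, valid uniformly over $h \in \sH$ with probability at least $1-\delta$. I then upper-bound the middle term by inserting $\h\Rad_{S}(\sH) \leq \frac{\sqrt{\num}\,\sfH}{m}\sqrt{\sum_{j=1}^{\num} m_j \sfX_j^2}$. Since this replaces the right-hand side by something larger, the inequality and its probability guarantee are preserved verbatim. The coefficient then consolidates via $\sqrt{2\num}\cdot\sqrt{\num} = \sqrt{2}\,\num$, yielding exactly the stated term $\frac{4\ov C\sqrt{2}\,\num\,\sfH}{m}\sqrt{\sum_{j=1}^{\num} m_j \sfX_j^2}$, and the additive $3\sqrt{\frac{\log(2/\delta)}{2m}}$ carries through unchanged. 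This completes the derivation in a single substitution step.

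There is essentially no analytical obstacle internal to the corollary itself: it is a bookkeeping specialization, and the only point demanding care is tracking the $\sqrt{\num}$ factors so that $\sqrt{2\num}\cdot\sqrt{\num}$ collapses correctly to $\sqrt{2}\,\num$ and the per-class counts $m_j$ appear under the square root in the right places. The genuine analytical content — and hence the real "hard part" — lives upstream in the Rademacher-complexity estimate, which is taken as given here; that bound in turn rests on the structural hypotheses on $\sH$ (the norm constraint $\norm{h}_* \leq \ov\sfH$ and the feature bound $\norm{\Psi(x,y)} \leq \sfX_y$) together with standard duality and vector-contraction arguments that cover the linear, kernel-based, and norm-constrained neural network cases cited in the surrounding text. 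Since the excerpt permits me to assume that estimate, the corollary follows immediately.
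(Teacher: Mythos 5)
Your proposal is correct and matches the paper's own derivation, which likewise obtains the corollary by plugging the bound $\h \Rad_{S}(\sH) \leq \frac{\sqrt{\num}\,\sfH}{m}\sqrt{\sum_{j=1}^{\num} m_j \sfX_j^2}$ into the second inequality of Theorem~\ref{thm:margin-bound} and simplifying $\sqrt{2\num}\cdot\sqrt{\num} = \sqrt{2}\,\num$. There is nothing more to the paper's argument than this substitution, so your account is complete.
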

\vskip -0.15in
As with Theorem~\ref{thm:margin-bound}, this bound can be generalized
to hold uniformly for all $\rho \in (0, 1]$, at the cost of additional $\log \log$-terms. This generalized
  guarantee provides a basis for designing algorithms 
  choosing $h \in \sH$ and $\rho$ to minimize the bound.

Let $\Psi$ be a decreasing convex function such that $\Phi_{\rho}(x)
\leq \Psi\left(\frac{x}{\rho}\right)$ for all $x \in \Rset$ and $\rho
> 0$. $\Psi$ may be the hinge loss, $\Psi(x) = \max(0, 1 - x)$, or any
member of the broad family of composition-sum (comp-sum) losses
\citep{mao2023cross} defined by $\Psi(x) = \Phi^{\tau}(e^{-x})$, with
$\Phi^\tau$ for $\tau \geq 0$ given by
\[
\Phi^{\tau}(u) =
\begin{cases}
  \frac{1}{1 - \tau} \paren*{(1 + u)^{1 - \tau} - 1}
  & \tau \geq 0, \tau \neq 1 \\
\log(1 + u) & \tau = 1,
\end{cases}
\]
for all $u \geq 0$. This family includes the logistic loss ($\tau = 1$) and the
exponential loss ($\tau = 0$).
Using the fact that $\Phi_{\rho}(t) \leq
\Psi\left(\frac{t}{\rho}\right)$, the guarantee of
Corollary~\ref{cor:margin-bound} and its generalization to a uniform
bound can be expressed as: for any $\delta > 0$, with probability at
least $1 - \delta$, for all $h \in \sH$, where the last term accounts for the $\log$-$\log$ terms and the
$\delta$-confidence term
\begin{equation*}
\sR_{\sfL}(h) \leq
\frac{1}{m} \bracket*{ \sum_{i = 1}^m c(x_i, y_i) \max_{y' \in [\num]}
    \curl*{ \Psi\paren*{\tfrac{h(x_i, y_i) - h(x_i, y')}{\rho}}} } + \frac{4 \ov C \sqrt{2 } \num \sfH}{m} \sqrt{\sum_{j = 1}^\num m_j \sfX_j^2} 
  + O\paren*{
    \frac{1}{\sqrt{m}}}.
\end{equation*}

\textbf{B. General cost-sensitive algorithm.} The bound leads to the following regularization-based algorithm:
\begin{equation*}
\min_{h \in \ov \sH} \lambda \norm*{h}^2 + \frac{1}{m} \bracket*{ \sum_{i = 1}^m c(x_i, y_i) \max_{y' \in [\num]} \curl*{
    \Psi\paren*{\tfrac{h(x_i, y_i) - h(x_i, y')}{\rho}}} },
\end{equation*}
where $\lambda$ and $\rho$s are selected via cross-validation. This is equivalent to minimizing the following surrogate loss:
\begin{equation}
\ell(h, x, y) = c(x, y) \max_{y' \in [\num]} \curl*{
    \Psi\paren*{\tfrac{h(x, y) - h(x, y')}{\rho}}}
\end{equation}
The preceding derivation shows that this form can be further upper-bounded by both GCA and GLA losses. Consequently, both loss families benefit from margin guarantees, though GCA losses achieve more favorable bounds due to the GLA bounds' dependence on $1/\pp_{\min}$.

\section{Experimental details}
\label{app:experiments}

This appendix provides supplementary information regarding the experimental setup discussed in Section~\ref{sec:experiments}. We first present the precise mathematical formulations for our algorithms and all baseline loss functions used in the comparative analysis. Then, we detail the specific hyperparameter ranges explored during the cross-validation process for each method.

Since our work focuses on principled surrogate losses for imbalanced data, our experiments aimed for a direct comparison with existing losses in their basic forms. We excluded common enhancements from data modification or optimization strategies to isolate the performance of the loss functions. 

\subsection{Loss function formulations}

Let $m_k$ be the number of samples in class $k$, and $m$ be the total number of samples. Below are the definitions of the loss functions optimized by our algorithms and those optimized by the baseline methods. For any triplet $(h, x, y)$, where $h$ is the hypothesis, $x$ is the input, and $y$ is the true label from a set of $n$ classes:
\begin{itemize}
    \item \textbf{Cross-Entropy (CE) Loss:}
    \begin{equation*}
    \ell_{\CE}(h, x, y) = -\log\left(\frac{e^{h(x, y)}}{\sum_{j = 1}^n e^{h(x, j)}}\right).
    \end{equation*}
    \item \textbf{Class-Weighted Cross-Entropy (WCE) loss} \citep{morik1999combining, xie1989logit}:
    \begin{equation*}
    \ell_{\WCE}(h, x, y) = - \frac{m}{m_{y}}\log\left(\frac{e^{h(x, y)}}{\sum_{j = 1}^n e^{h(x, j)}}\right).
    \end{equation*}

    \item \textbf{Logit Adjusted (LA) Loss ($\tau = 1$)} \citep{menonlong}:
    \begin{equation*}
    \ell_{\LA}(h, x, y) = -\log\left(\frac{e^{h(x, y) + \log(m_y)} }{\sum_{j = 1}^n e^{h(x, j) + \log(m_j)}}\right).
    \end{equation*}
    
    \item \textbf{Equalization (EQUAL) Loss} \citep{tan2020equalization}:
    \begin{equation*}
    \ell_{\EQUAL}(h, x, y) = -\log\left(\frac{e^{h(x, y)}}{\sum_{j = 1}^n w_{j} e^{h(x, j)}}\right),
    \end{equation*}
    with weight $w_j = 1 - \beta \mathbf{1}_{\{\frac{m_j}{m} < \lambda\}} \mathbf{1}_{\{y \neq j\}}$, where $\beta \sim \text{Bernoulli}(p)$, and $1 > p > 0$, $1 > \lambda > 0$ are hyperparameters.

    \item \textbf{Class-Balanced (CB) Loss} \citep{cui2019class}:
    \begin{equation*}
    \ell_{\CB}(h, x, y) = -\frac{1 - \gamma}{1 - \gamma^{\frac{m_y}{m}}}\log\left(\frac{e^{h(x, y)} }{\sum_{j = 1}^n e^{h(x, j)}}\right),
    \end{equation*}
    where $1 > \gamma > 0$ is a hyperparameter.

    \item \textbf{\FOCAL\ Loss} \citep{ross2017focal}:
    \begin{equation*}
    \ell_{\FOCAL}(h, x, y) = -\left(1 - \frac{e^{h(x, y)} }{\sum_{j = 1}^n e^{h(x, j)}}\right)^{\gamma}\log\left(\frac{e^{h(x, y)} }{\sum_{j = 1}^n e^{h(x, j)}}\right),
    \end{equation*}
    where $\gamma \geq 0$ is a hyperparameter.

    \item \textbf{\LDAM\ Loss} \citep{cao2019learning}:
    \begin{equation*}
    \ell_{\LDAM}(h, x, y) = -\log\left(\frac{e^{h(x, y) - \Delta_y}}{e^{h(x, y) - \Delta_y} + \sum_{j \neq y} e^{h(x, j)}}\right),
    \end{equation*}
    where $\Delta_j = \frac{C}{m_j^{\frac14}}$ for $j \in [n]$, and $C > 0$ is a hyperparameter.

    \item \textbf{Generalized Class-Aware (GCA) Loss}:
\begin{equation*}
\ell_{\rm{GCA}}(h,x,y) = \frac{m}{m_y} \Psi^q \paren*{\frac{e^{h(x, y) / \rho_y}}{\sum_{y' \in \sY} e^{h(x, y') / \rho_y}}},
\end{equation*}
where $q \in [0, 1)$ and $\brho = \paren*{\rho_1, \ldots, \rho_n}$ is a vector of positive parameters for each class. 

\item \textbf{Generalized Logit-Adjusted (GLA) Loss}:
\begin{equation*}
\ell_{\rm{GLA}}(h,x,y) = \Psi^q \paren*{\frac{e^{h(x, y) + \frac{\log\paren*{m_y / m}}{1 - q}}}{\sum_{y' \in \sY} e^{h(x, y') + \frac{\log\paren*{m_{y'} / m}}{1 - q}}}},
\end{equation*}
where $q \in [0, 1)$ is a hyperparameter.
\end{itemize}

\subsection{Hyperparameter search protocol}

As stated in Section~\ref{sec:experiments}, all hyperparameters for the baseline methods and our algorithms were optimized via cross-validation. The search ranges for each tunable parameter were as follows:

\begin{itemize}
    \item \textbf{\CE\ Loss, \WCE\ Loss:} These methods do not have tunable hyperparameters beyond standard optimization settings.
    \item \textbf{\LA\ Loss:} We fixed the hyperparameter $\tau=1$ as the algorithm is only Bayes-consistent for that value.
    \item \textbf{\EQUAL\ Loss:} $p$ was selected from $\{0.1, 0.2, \dots, 0.9\}$, and $\lambda$ was selected from $\{0.176, 0.5, 0.8, 1.5, 1.76, 2.0, 3.0, 5.0\} \times 10^{-3}$ by following \citet{tan2020equalization},
    \item \textbf{\CB\ Loss:} $\gamma$ was selected from $\{0.1, 0.2, \dots, 0.9, 0.99, 0.999, 0.9999\}$ by following \citet{cui2019class},
    \item \textbf{\FOCAL\ Loss:} $\gamma$ was selected from $\{1.0, 1.5, \dots, 10.0\}$ and $\{0.0, 0.1, \dots, 0.9\}$ by following \citet{ross2017focal}.
    \item \textbf{\LDAM\ Loss:} $C$ was selected from $\{10^{-4}, \dots, 10^{4}\}$ and $\{5\times 10^{-4}, \dots, 5\times 10^{3}\}$ by following \citet{cao2019learning}.
    \item \textbf{GCA Loss:} $\brho $ was chosen as $\paren*{\frac{m_1^{1/3}}{\sum_{k = 1}^n m_k^{1/3}}, \ldots, \frac{m_n^{1/3}}{\sum_{k = 1}^n m_k^{1/3}}}$ by following \citet{cortes2025balancing}. $q$ was selected from $\{0.0, 0.1, \dots, 0.9\}$.  
    \item \textbf{GLA Loss:} $q$ was selected from $\{0.0, 0.1, \dots, 0.9\}$.
\end{itemize}

\section{Conditional regret for the balanced loss: proof of Lemma~\ref{lemma:conditional-regret}}
\label{app:conditional-regret}

\ConditionalRegret*
\begin{proof}
By the definition and Bayes' theorem, the conditional error can be expressed as follows: 
\begin{align*}
\sC_{\lbal}(h,x)  
& = \sum_{y \in \sY} \frac{\pp(y \mid x)}{\pp(y)} 1_{\hh(x)\neq y}\\
& = \sum_{y \in \sY} \frac{\pp(y \mid x)}{\pp(y)} - \frac{\pp(\hh(x) \mid x)}{\pp(\hh(x))}.
\end{align*}
Since $\curl*{\hh(x)\colon h\in \sH} = \sf H(x)$, the best-in-class conditional error can be expressed as follows:
\begin{align*}
\sC^*_{\lbal}\paren*{\sH, x} = \sum_{y \in \sY} \frac{\pp(y \mid x)}{\pp(y)} - \max_{y \in \sf H(x)} \frac{\pp(y \mid x)}{\pp(y)},
\end{align*}
which proves the first part of the lemma. This leads to
\begin{align*}
\Delta \sC_{\lbal, \sH}(h, x)
= \sC_{\lbal}(h, x) - \sC^*_{\lbal} \paren*{\sH, x} = \max_{y \in \sf H(x)} \frac{\pp(y \mid x)}{\pp(y)} - \frac{\pp(\hh(x) \mid x)}{\pp(\hh(x))},
\end{align*}
which proves the second part of the lemma.
\end{proof}

\section{\texorpdfstring{$\sH$}{H}-Consistency for the GCA losses: proof of Theorem~\ref{thm:GCA}}
\label{app:GCA}

\GCA*
\begin{proof}
The proof involves a reduction of the conditional regrets of the balanced and GCA losses to those of the zero-one and GCE losses under a newly defined distribution and the use of known $\sH$-consistency bounds for GCE losses. We define a new conditional probability $\qq(y \mid x)$ as $\qq(y \mid x) = \frac{\pp(y \mid x)}{\pp(y)} \frac1{Z(x)}$, where $Z(x) = \sum_{y \in \sY} \frac{\pp(y \mid x)}{\pp(y)} \leq \frac{1}{\pp_{\min}}$ is the normalization factor.  By
Lemma~\ref{lemma:conditional-regret}, the conditional regret of
$\lbal$ can be expressed and upper-bounded as follows:
\begin{align*}
\Delta \sC_{\lbal, \sH}(h, x)  
& = \max_{y \in \sY}\frac{\pp(y \mid x)}{\pp(y)} - \frac{\pp(\hh(x)) \mid x)}{\pp(\hh(x))}\\
& =  Z(x) \paren*{\max_{y \in \sY} \qq(y \mid x) -\qq(x \mid \hh(x))}\\
& =  Z(x) \Delta \sC_{\ell_{0-1}, \sH}(h, x)\\
& \leq  Z(x) \Gamma\paren*{\Delta \sC_{\ell_{\rm{GCE}}, \sH}(h, x)} \tag{$\sH$-consistency bound of $\ell_{\rm{GCE}}$}\\
& =  Z(x)\Gamma\paren*{\sum_{y \in \sY} \qq(y \mid x) \ell_{\rm{GCE}}(h, x, y) - \inf_{h \in \sY} \sum_{y \in \sY} \qq(y \mid x) \ell_{\rm{GCE}}(h, x, y)}\\
& =  Z(x) \Gamma\paren*{\frac{1}{Z(x)} \paren*{\sum_{y \in \sY}  \frac{\pp(y \mid x)}{\pp(y)} \ell_{\rm{GCE}}(h, x, y) - \inf_{h \in \sY} \sum_{y \in \sY} \frac{\pp(y \mid x)}{\pp(y)} \ell_{\rm{GCE}}(h, x, y)}}\\
& =  Z(x) \Gamma\paren*{\frac{1}{Z(x)}\paren*{\sum_{y \in \sY}  \pp(y \mid x) \ell_{\rm{GCA}}(h, x, y) - \inf_{h \in \sY} \sum_{y \in \sY} \pp(y \mid x) \ell_{\rm{GCA}}(h, x, y)}}\\
& =  Z(x) \Gamma\paren*{\frac{1}{Z(x)} \Delta \sC_{\ell_{\rm{GCA}}, \sH}(h, x)}\\
& = \beta \, Z(x)^{1 - \alpha} \Delta \sC_{\ell_{\rm{GCA}}, \sH}(h, x)^{\alpha}\\
& \leq \beta \, \paren*{\frac1{p_{\min}}}^{1 - \alpha} \Delta \sC_{\ell_{\rm{GCA}}, \sH}(h, x)^{\alpha}
\end{align*}
Thus, taking expectations gives:
\begin{align*}
\sR_{\lbal}(h) - \sR_{\lbal}^*( \sH) + \sM_{\lbal}( \sH)
& = \E_{x} \bracket*{\Delta \sC_{\lbal, \sH}(h, x)}\\
& \leq  \E_x \bracket*{\ov \Gamma\paren*{\Delta \sC_{\ell_{\rm{GCA}}, \sH}(h, x)}}\\
& \leq  \ov \Gamma\paren*{\E_x \bracket*{\Delta \sC_{\ell_{\rm{GCA}}, \sH}(h, x)}}
\tag{concavity of $\Gamma$ and Jensen's ineq.}\\
& =  \Gamma\paren*{\sR_{\ell_{\rm{GCA}}}(h) - \sR_{\ell_{\rm{GCA}}}^*(\sH) + \sM_{\ell_{\rm{GCA}}}(\sH)},
\end{align*} 
where $\ov \Gamma(t)
= \beta\, \paren*{\frac{1}{\pp_{\min}}}^{1 - \alpha} t^{\alpha}$.
This concludes the first part of the proof. The second part follows directly from the fact that the minimizability gap $\sM_{\ell_{\rm{GCA}}}(\sH)$ vanishes when the approximation error, $\sA_{\ell_{\rm{GCA}}}(\sH)$, is zero.
This concludes the first part of the proof. The second part follows directly using the fact that when the approximation error is zero:
$\sA_{\ell_{\rm{GCA}}}(\sH) = 0$, the
minimizability gap
$\sM_{\ell_{\rm{GCA}}}(\sH)$ vanishes.
\end{proof}

Note that, for simplicity, we assumed $\rho_y = 1$ for all $y$ in Theorem~\ref{thm:GCA} and its proof. To handle varying values of $\rho_y$, we can directly extend the $\sH$-consistency bounds for the general cross-entropy (GCE) family, as derived in \citep{mao2023cross, MaoMohriZhong2023characterization}, to the setting where GCE uses distinct $\rho_y$ values. We can then similarly show that these extended bounds for the GCE family can be transformed into bounds for the GCA losses.

\section{Negative results for the LA losses: proof of Theorem~\ref{thm:la-bayes}}
\label{app:la}
\LABayes*
\begin{proof}
The Bayes classifier $h_{\rm{LA}}^*$ of the LA loss satisfies the following condition:
\begin{equation*}
\frac{e^{h_{\rm{LA}}^*(x, y) + \tau \log(\pp(y))}}{\sum_{y' \in \sY} e^{h_{\rm{LA}}^*(x, y') + \tau \log(\pp(y'))}} = \pp(y \mid x)
\end{equation*}
By rearranging the terms, we have
\begin{align*}
e^{h_{\rm{LA}}^*(x, y)} 
& = \frac{\pp(y \mid x)}{\pp(y)^{\tau}} \sum_{y' \in \sY} e^{h_{\rm{LA}}^*(x, y') + \tau \log(\pp(y'))}\\
& = \frac{\pp(x \mid y) \pp(y)}{\pp(y)^{\tau} \pp(x)} \sum_{y' \in \sY} e^{h_{\rm{LA}}^*(x, y') + \tau \log(\pp(y'))} \tag{Bayes’ theorem}\\
& = \frac{\pp(x \mid y) \pp(y)^{1 - \tau}}{\pp(x)} \sum_{y' \in \sY} e^{h_{\rm{LA}}^*(x, y') + \tau \log(\pp(y'))}.
\end{align*}
Thus, since the term $\frac{\sum_{y' \in \sY} e^{h_{\rm{LA}}^*(x, y') + \tau \log(\pp(y'))}}{\pp(x)}$ does not depend on $y$, we obtain
\begin{equation*}
\hh_{\rm{LA}}^*(x) = \argmax_{y \in \sY} h_{\rm{LA}}^*(x, y) = \argmax_{y \in \sY} e^{h_{\rm{LA}}^*(x, y)} = \argmax_{y \in \sY} \pp(x \mid y) \pp(y)^{1 - \tau}.
\end{equation*}
By Lemma~\ref{lemma:conditional-regret}, we know that the Bayes classifier $h_{\rm{bal}}^*$ of the Balanced loss satisfies that
\begin{equation*}
\hh_{\rm{bal}}^* = \argmax_{y \in \sY} \frac{\pp(y \mid x)}{\pp(y)} = \argmax_{y \in \sY}  \pp(x \mid y).
\end{equation*}
Therefore, for any $\tau \neq 1$, there exists a distribution such that $\hh_{\rm{LA}}^*(x) \neq \hh_{\rm{bal}}^*$. This implies that when $\tau \neq 1$, the LA loss $\ell_{\rm{LA}}$ is not Bayes-consistent with respect to the balanced loss $\lbal$. 
\end{proof}

\section{Bayes-Consistency for the GLA losses: proof of Theorem~\ref{thm:la-comp-bayes}}
\label{app:la-comp-bayes}
\LACompBayes*
\begin{proof}
The Bayes classifier $h_{\rm{GLA}}^*$ of the GLA loss satisfies the following condition:
\begin{equation*}
\frac{e^{h_{\rm{GLA}}^*(x, y) + \frac{\log(\pp(y))}{1 - q}}}{\sum_{y' \in \sY} e^{h_{\rm{GLA}}^*(x, y') + \frac{\log(\pp(y'))}{1 - q}}} = \frac{\paren*{\pp(y \mid x)}^{\frac{1}{1 - q}}}{\sum_{y' \in \sY} \paren*{\pp(y' \mid x)}^{\frac{1}{1 - q}}}
\end{equation*}
By rearranging the terms, we have
\begin{align*}
e^{h_{\rm{GLA}}^*(x, y)} 
& = \frac{\paren*{\pp(y \mid x)}^{\frac{1}{1 - q}}}{\paren*{\pp(y)}^{\frac{1}{1 - q}}} \frac{\sum_{y' \in \sY} e^{h_{\rm{GLA}}^*(x, y') + \frac{\log(\pp(y'))}{1 - q}}}{\sum_{y' \in \sY} \paren*{\pp(y' \mid x)}^{\frac{1}{1 - q}}}\\
& = \paren*{\frac{\pp(x \mid y)}{\pp(x)}}^{\frac{1}{1 - q}} \frac{\sum_{y' \in \sY} e^{h_{\rm{GLA}}^*(x, y') + \frac{\log(\pp(y'))}{1 - q}}}{\sum_{y' \in \sY} \paren*{\pp(y' \mid x)}^{\frac{1}{1 - q}}} \tag{Bayes’ theorem}.
\end{align*}
Thus, since the term $\frac{\sum_{y' \in \sY} e^{h_{\rm{GLA}}^*(x, y') + \frac{\log(\pp(y'))}{1 - q}}}{\sum_{y' \in \sY} \paren*{\pp(y' \mid x)}^{\frac{1}{1 - q}}}$ does not depend on $y$, we obtain
\begin{equation*}
\hh_{\rm{GLA}}^*(x) = \argmax_{y \in \sY} h_{\rm{GLA}}^*(x, y) = \argmax_{y \in \sY} e^{h_{\rm{GLA}}^*(x, y)} = \argmax_{y \in \sY} \pp(x \mid y).
\end{equation*}
By Lemma~\ref{lemma:conditional-regret}, we know that the Bayes classifier $h_{\rm{bal}}^*$ of the Balanced loss satisfies that
\begin{equation*}
\hh_{\rm{bal}}^* = \argmax_{y \in \sY} \frac{\pp(y \mid x)}{\pp(y)} = \argmax_{y \in \sY} \pp(x \mid y).
\end{equation*}
Therefore, we have $\hh_{\rm{GLA}}^*(x) = \hh_{\rm{bal}}^*$. This implies that the GLA loss $\ell_{\rm{GLA}}$ is Bayes-consistent with respect to the balanced loss $\lbal$. 
\end{proof}

\section{\texorpdfstring{$\sH$}{H}-Consistency for the GLA losses: proof of Theorem~\ref{thm:GLA}}
\label{app:GLA}

\GLA*
\begin{proof}
The proof involves a characterization of the conditional regret of the balanced loss and the use of Gibbs distributions and Pinsker-type inequalities for analyzing GLA losses.

By Lemma~\ref{lemma:conditional-regret}, for complete hypothesis sets, the conditional regret of the balanced loss can be expressed as follows:
\begin{align*}
& \Delta \sC_{\lbal, \sH}(h, x) = \max_{y \in \sY}\frac{\pp(y \mid x)}{\pp(y)} - \frac{\pp(\hh(x)) \mid x)}{\pp(\hh(x))}.
\end{align*}
Let $\yy(x) = \argmax_{y \in \sY} \frac{\pp(y \mid x)}{\pp(y)}$, where we choose the label with the same
deterministic strategy for breaking ties as that of $\hh(x) = \argmax_{y \in \sY} h(x, y)$. We analyze by cases.

\textbf{Case I: $q = 0$}. In this case, the conditional regret for the GLA loss can be written as
\begin{equation*}
\begin{aligned}
 \sC_{\ell_{\rm{GLA}}}\paren*{h, x)}
 = -\sum_{y\in \sY } \pp(y \mid x) \log \paren*{\frac{e^{h(x, y) + \log(\pp(y))}}{\sum_{y' \in \sY} e^{h(x, y') + \log(\pp(y'))}}}
 = - \sum_{y\in \sY }\pp(y \mid x) \log\paren*{\ov \sS(x, y)}
\end{aligned}
\end{equation*}
where we let $\ov \sS(x, y) = \frac{e^{\ov h(x, y)}}{\sum_{y'\in \sY}e^{\ov h(x, y')}} \in [0,1]$ for any $y\in \sY$ with $\ov h(x, y) = h(x, y) + \log(\pp(y))$ and the constraint that $\sum_{y\in \sY} \ov \sS(x, y) = 1$. Note that $\ov \sS$ can be viewed as a Gibbs distribution induced by $h$ with prior $\pp(y)$. Leveraging the facts that $\ov \sS$ is a surjection and $\sH$ is complete, minimizing over $\ov \sS$, we know that $\sC_{\ell_{\rm{GLA}}}^*\paren{\sH, x}$ has the following form:
\begin{equation*}
\sC_{\ell_{\rm{GLA}}}^*\paren{\sH, x} =  -\sum_{y\in \sY }\pp(y \mid x) \log\paren*{\pp(y \mid x)}. 
\end{equation*}
Thus, we obtain
\begin{align*}
\Delta\sC_{\ell_{\rm{GLA}} \sH}\paren*{h, x}
& = \sC_{\ell_{\rm{GLA}}}\paren*{h, x} - \sC^*_{\ell_{\rm{GLA}}}\paren*{\sH, x} \\
& =
  \sum_{y\in \sY }\pp(y \mid x) \log\paren*{\pp(y \mid x)} - \sum_{y\in \sY }\pp(y \mid x) \log\paren*{\sS(x, y)}\\
  & = \sum_{y\in \sY }\pp(y \mid x) \log\paren*{\pp(y \mid x)} - \sum_{y\in \sY }\pp(y \mid x) \log\paren*{\frac{e^{h(x, y) + \log(\pp(y))}}{\sum_{y'\in \sY}e^{h(x, y') + \log(\pp(y'))}}}\\
  & = \sum_{y\in \sY }\pp(y \mid x) \log\paren*{\pp(y \mid x)\frac{\sum_{y'\in \sY}e^{h(x, y') + \log(\pp(y'))}}{e^{h(x, y) + \log(\pp(y))}}}\\
  & = \DD \paren*{\pp(\cdot \mid x) || \ov S(x, \cdot)}
\end{align*}
where $\DD(\pp || \qq)$ is the relative entropy of two distributions $\pp$ and $\qq$. Consider the case where $\yy(x) \neq \hh(x)$. Then, by Pinsker’s inequality \citep[Proposition~E.7]{MohriRostamizadehTalwalkar2018}, we have
\begin{align*}
& \Delta\sC_{\ell_{\rm{GLA}} \sH}\paren*{h, x}\\
& = \DD \paren*{\pp(\cdot \mid x) || \ov S(x, \cdot)}\\
 & \geq \frac12 \norm*{\pp(\cdot \mid x) - \ov S(x, \cdot)}_{1}^2 \tag{Pinsker’s inequality}\\
 & \geq \frac12 \paren*{ \abs*{\pp(\yy(x) \mid x) - \ov S(x, \yy(x))} + \abs*{\pp(\hh(x) \mid x) - \ov S(x, \hh(x))}}^2\\
 & = \frac12 \paren*{\pp(\yy(x)) \abs*{\frac{\pp(\yy(x) \mid x)}{\pp(\yy(x))} - \frac{\ov S(x, \yy(x))}{\pp(\yy(x))}} + \pp(\hh(x))  \abs*{\frac{\pp(\hh(x) \mid x)}{\pp(\hh(x))} - \frac{\ov S(x, \hh(x))}{\pp(\hh(x))}}}^2.
\end{align*}
Plugging the expression of $\pi_h$, we have
\begin{align*} 
& \Delta\sC_{\ell_{\rm{GLA}} \sH}\paren*{h, x}\\
  & \geq \frac12 \paren*{\pp(\yy(x)) \abs*{\frac{\pp(\yy(x) \mid x)}{\pp(\yy(x))} - \frac{e^{h(x, \yy(x))}}{\sum_{y'\in \sY}e^{h(x, y') + \log(\pp(y'))}}} + \pp(\hh(x)) \abs*{\frac{\pp(\hh(x) \mid x)}{\pp(\hh(x))} - \frac{e^{h(x, \hh(x))}}{\sum_{y'\in \sY}e^{h(x, y') + \log(\pp(y'))}}}}^2\\
  & \geq \frac{\paren*{\pp_{\min}}^2}{2} \paren*{\abs*{\frac{\pp(\yy(x) \mid x)}{\pp(\yy(x))} - \frac{e^{h(x, \yy(x))}}{\sum_{y'\in \sY}e^{h(x, y') + \log(\pp(y'))}}} + \abs*{\frac{\pp(\hh(x) \mid x)}{\pp(\hh(x))} - \frac{e^{h(x, \hh(x))}}{\sum_{y'\in \sY}e^{h(x, y') + \log(\pp(y'))}}} }^2\\
 & \geq \frac{\paren*{\pp_{\min}}^2}{2}  \abs*{\frac{\pp(\yy(x) \mid x)}{\pp(\yy(x))} - \frac{\pp(\hh(x) \mid x)}{\pp(\hh(x))} + \frac{e^{h(x, \hh(x))}}{\sum_{y'\in \sY}e^{h(x, y') + \log(\pp(y'))}} - \frac{e^{h(x, \yy(x))}}{\sum_{y'\in \sY}e^{h(x, y') + \log(\pp(y'))}}}^2 \tag{$\abs*{a} + \abs*{b} \geq \abs*{a - b}$}\\
 & \geq \frac{\paren*{\pp_{\min}}^2}{2}  \abs*{\frac{\pp(\yy(x) \mid x)}{\pp(\yy(x))} - \frac{\pp(\hh(x) \mid x)}{\pp(\hh(x))}}^2
\tag{$\frac{\pp(\yy(x) \mid x)}{\pp(\yy(x))} - \frac{\pp(\hh(x) \mid x)}{\pp(\hh(x))} \geq 0$ and $\frac{e^{h(x, \hh(x))}}{\sum_{y'\in \sY}e^{h(x, y') + \log(\pp(y'))}} - \frac{e^{h(x, \yy(x))}}{\sum_{y'\in \sY}e^{h(x, y') + \log(\pp(y'))}} \geq 0$ by def. of $\yy(x)$ and $\hh(x)$}\\
& = \frac{\paren*{\pp_{\min}}^2}{2}  \paren*{\Delta \sC_{\lbal, \sH}(h, x)}^2.
\end{align*}
Then, by taking the expectation on both sides and using the Jensen's inequality, we obtain
\begin{equation*}
\sR_{\lbal}(h) - \sR^*_{\lbal}(\sH) + \sM_{\lbal}(\sH)
 \leq  \Gamma \paren*{ \sR_{\ell_{\rm{GLA}}}(h)
    - \sR^*_{\ell_{\rm{GLA}}}(\sH) + \sM_{\ell_{\rm{GLA}}}(\sH) },
\end{equation*}
where $\Gamma(t) = \frac{\sqrt{2t}}{\pp_{\min}}$.

\textbf{Case II: $q \in (0, 1)$}. In this case, the conditional regret for the GLA loss can be written as
\begin{equation*}
\begin{aligned}
 \sC_{\ell_{\rm{GLA}}}\paren*{h, x)}
 = -\sum_{y\in \sY } \pp(y \mid x) \Psi^q \paren*{\frac{e^{h(x, y) + \frac{\log(\pp(y))}{1 - q}}}{\sum_{y' \in \sY} e^{h(x, y') + \frac{\log(\pp(y'))}{1 - q}}}}
 = - \sum_{y\in \sY }\pp(y \mid x) \Psi^q \paren*{\ov \sS(x, y)}
\end{aligned}
\end{equation*}
where we let $\ov \sS(x, y) = \frac{e^{\ov h(x, y)}}{\sum_{y'\in \sY}e^{\ov h(x, y')}} \in [0,1]$ for any $y\in \sY$ with $\ov h(x, y) = h(x, y) + \frac{\log(\pp(y))}{1 - q}$ and the constraint that $\sum_{y\in \sY} \ov \sS(x, y) = 1$. Note that $\ov \sS$ can be viewed as a Gibbs distribution induced by $h$. Leveraging the facts that $\ov \sS$ is a surjection and $\sH$ is complete, minimizing over $\ov \sS$, we know that $\sC_{\ell_{\rm{GLA}}}^*\paren{\sH, x}$ has the following form:
\begin{equation*}
\sC_{\ell_{\rm{GLA}}}^*\paren{\sH, x}
=  \sum_{y\in \sY }\pp(y \mid x) \Psi^q \paren*{\frac{\pp(y \mid x)^{\frac1{1-q}}}{\sum_{y \in \sY} \pp(y \mid x)^{\frac1{1-q}}}}
= \frac1q \sum_{y\in \sY }\pp(y \mid x) \paren*{1 - \paren*{\frac{\pp(y \mid x)^{\frac1{1-q}}}{\sum_{y \in \sY} \pp(y \mid x)^{\frac1{1-q}}}}^q}.
\end{equation*}
Thus, we obtain
\begin{align*}
& \Delta\sC_{\ell_{\rm{GLA}} \sH}\paren*{h, x}\\
& = \sC_{\ell_{\rm{GLA}}}\paren*{h, x} - \sC^*_{\ell_{\rm{GLA}}}\paren*{\sH, x} \\
& =
  \frac1q \sum_{y\in \sY }\pp(y \mid x) \paren*{1 - \ov S(x, y)^q} - \frac1q \sum_{y\in \sY }\pp(y \mid x) \paren*{1 - \paren*{\frac{\pp(y \mid x)^{\frac1{1-q}}}{\sum_{y \in \sY} \pp(y \mid x)^{\frac1{1-q}}}}^q}\\
  & = \frac{\sum_{y \in \sY} \pp(y \mid x) \paren*{\paren*{\frac{\pp(y \mid x)^{\frac1{1-q}}}{\sum_{y \in \sY} \pp(y \mid x)^{\frac1{1-q}}}}^q - \ov S(x, y)^q}}{q}\\
  & = \paren*{\sum_{y \in \sY} \pp(y \mid x)^{\frac1{1-q}}}^{1 - q} \frac{\paren*{1 - \sum_{y \in \sY} \paren*{\frac{\pp(y \mid x)^{\frac1{1-q}}}{\sum_{y \in \sY} \pp(y \mid x)^{\frac1{1-q}}}}^{1 - q} \ov S(x, y)^q}}{q}\\
  & = \paren*{\sum_{y \in \sY} \pp(y \mid x)^{\frac1{1-q}}}^{1 - q} \TT_{1 - q} \paren*{ \s(\cdot \mid x) || \ov S(x, \cdot)}
\end{align*}
where $\TT_{q}(\pp || \qq)$ denotes the Tsallis relative entropy of order $q$ between the distributions $\pp$ and $\qq$, and $\s(y \mid x) = \frac{\pp(y \mid x)^{\frac1{1-q}}}{\sum_{y \in \sY} \pp(y \mid x)^{\frac1{1-q}}}$. Consider the case where $\yy(x) \neq \hh(x)$. Then, by a Pinsker-type inequality \citep[Eq.~(4.13)]{rastegin2013bounds}, we have
\begin{align*}
& \Delta\sC_{\ell_{\rm{GLA}} \sH}\paren*{h, x}\\
& = \paren*{\sum_{y \in \sY} \pp(y \mid x)^{\frac1{1-q}}}^{1 - q} \TT_{1 - q} \paren*{\s(\cdot \mid x) || \ov S(x, \cdot)}\\
 & \geq \frac{1 - q}2 \paren*{\sum_{y \in \sY} \pp(y \mid x)^{\frac1{1-q}}}^{1 - q} \norm*{\s(\cdot \mid x) - \ov S(x, \cdot)}_{1}^2 \tag{Pinsker-type inequality \citep[Eq.~(4.13)]{rastegin2013bounds}}\\
 & \geq \frac{1 - q}2 \paren*{\sum_{y \in \sY} \pp(y \mid x)^{\frac1{1-q}}}^{1 - q} \paren*{ \abs*{\s(\yy(x) \mid x) - \ov S(x, \yy(x))} + \abs*{\s(\hh(x) \mid x) - \ov S(x, \hh(x))}}^2\\
 & = \frac{1 - q}2 \paren*{\sum_{y \in \sY} \pp(y \mid x)^{\frac1{1-q}}}^{1 - q}\\
 & \qquad \times \paren*{\pp(\yy(x))^{\frac{1}{1 - q}} \abs*{\frac{\s(\yy(x) \mid x)}{\pp(\yy(x))^{\frac{1}{1 - q}}} - \frac{\ov S(x, \yy(x))}{\pp(\yy(x))^{\frac{1}{1 - q}}}} + \pp(\hh(x))^{\frac{1}{1 - q}}  \abs*{\frac{\s(\hh(x) \mid x)}{\pp(\hh(x))^{\frac{1}{1 - q}}} - \frac{\ov S(x, \hh(x))}{\pp(\hh(x))^{\frac{1}{1 - q}}}}}^2\\
 & \geq \frac{1 - q}2 \paren*{\pp_{\min}}^{\frac{2}{1 - q}} \paren*{\sum_{y \in \sY} \pp(y \mid x)^{\frac1{1-q}}}^{1 - q}\\
 & \qquad \times \paren*{\abs*{\frac{\s(\yy(x) \mid x)}{\pp(\yy(x))^{\frac{1}{1 - q}}} - \frac{\s(\hh(x) \mid x)}{\pp(\hh(x))^{\frac{1}{1 - q}}} + \frac{\ov S(x, \hh(x))}{\pp(\hh(x))^{\frac{1}{1 - q}}} - \frac{\ov S(x, \yy(x))}{\pp(\yy(x))^{\frac{1}{1 - q}}}}}^2
 \tag{$\abs*{a} + \abs*{b} \geq \abs*{a - b}$}\\
 & \geq \frac{1 - q}2 \paren*{\pp_{\min}}^{\frac{2}{1 - q}} \paren*{\sum_{y \in \sY} \pp(y \mid x)^{\frac1{1-q}}}^{1 - q} \abs*{\frac{\s(\yy(x) \mid x)}{\pp(\yy(x))^{\frac{1}{1 - q}}} - \frac{\s(\hh(x) \mid x)}{\pp(\hh(x))^{\frac{1}{1 - q}}}}^2.
 \tag{$\frac{\s(\yy(x) \mid x)}{\pp(\yy(x))^{\frac{1}{1 - q}}} - \frac{\s(\hh(x) \mid x)}{\pp(\hh(x))^{\frac{1}{1 - q}}} \geq 0$ and $\frac{\ov S(x, \hh(x))}{\pp(\hh(x))^{\frac{1}{1 - q}}} - \frac{\ov S(x, \yy(x))}{\pp(\yy(x))^{\frac{1}{1 - q}}} \geq 0$ by def. of $\yy(x)$ and $\hh(x)$}
\end{align*}
Plugging the expression of $\s(\cdot \mid x)$, we have
\begin{align*}
& \Delta\sC_{\ell_{\rm{GLA}} \sH}\paren*{h, x}\\
& \geq \frac{1 - q}2 \paren*{\pp_{\min}}^{\frac{2}{1 - q}} \paren*{\sum_{y \in \sY} \pp(y \mid x)^{\frac1{1-q}}}^{1 - q} \abs*{\frac{\s(\yy(x) \mid x)}{\pp(\yy(x))^{\frac{1}{1 - q}}} - \frac{\s(\hh(x) \mid x)}{\pp(\hh(x))^{\frac{1}{1 - q}}}}^2\\
& = \frac{1 - q}2 \paren*{\pp_{\min}}^{\frac{2}{1 - q}} \paren*{\sum_{y \in \sY} \pp(y \mid x)^{\frac1{1-q}}}^{-q - 1} \abs*{\paren*{\frac{\pp(\yy(x) \mid x)}{\pp(\yy(x))}}^{\frac{1}{1 - q}} - \paren*{\frac{\pp(\hh(x) \mid x)}{\pp(\hh(x))}}^{\frac{1}{1 - q}}}^2\\
& \leq \frac{1 - q}2 \paren*{\pp_{\min}}^{\frac{2}{1 - q}} \paren*{\sum_{y \in \sY} \pp(y \mid x)^{\frac1{1-q}}}^{-q - 1} \abs*{\frac{\pp(\yy(x) \mid x)}{\pp(\yy(x))} \paren*{\frac{\pp(\yy(x) \mid x)}{\pp(\yy(x))}}^{\frac{q}{1 - q}} - \frac{\pp(\hh(x) \mid x)}{\pp(\hh(x))} \paren*{\frac{\pp(\hh(x) \mid x)}{\pp(\hh(x))}}^{\frac{q}{1 - q}}}^2\\
& \geq \frac{1 - q}2 \paren*{\pp_{\min}}^{\frac{2}{1 - q}} \paren*{\sum_{y \in \sY} \pp(y \mid x)^{\frac1{1-q}}}^{-q - 1} \abs*{\frac{\pp(\yy(x) \mid x)}{\pp(\yy(x))} \paren*{\frac{\pp(\yy(x) \mid x)}{\pp(\yy(x))}}^{\frac{q}{1 - q}} - \frac{\pp(\hh(x) \mid x)}{\pp(\hh(x))} \paren*{\frac{\pp(\hh(x) \mid x)}{\pp(\hh(x))}}^{\frac{q}{1 - q}}}^2\\
& \geq \frac{1 - q}2 \paren*{\pp_{\min}}^{\frac{2}{1 - q}} \paren*{\sum_{y \in \sY} \pp(y \mid x)^{\frac1{1-q}}}^{-q - 1} \paren*{\frac{\pp(\yy(x) \mid x)}{\pp(\yy(x))}}^{\frac{2q}{1 - q}} \abs*{\frac{\pp(\yy(x) \mid x)}{\pp(\yy(x))} - \frac{\pp(\hh(x) \mid x)}{\pp(\hh(x))}}^2.
\end{align*}
Next, using $\sum_{y \in \sY} \pp(y \mid x)^{\frac1{1-q}} = \norm*{p(\cdot \mid x)}_{\frac1{1-q}}^{\frac1{1-q}} \leq \norm*{p(\cdot \mid x)}_1^{\frac1{1-q}} = 1$ and $\frac{\pp(\yy(x) \mid x)}{\pp(\yy(x))} = \max_{y \in \sY} \frac{\pp(y \mid x)}{\pp(y)} \geq 1$, we can write:
\begin{align*}
\Delta\sC_{\ell_{\rm{GLA}} \sH}\paren*{h, x}
& \geq \frac{1 - q}2 \paren*{\pp_{\min}}^{\frac{2}{1 - q}} \abs*{\frac{\pp(\yy(x) \mid x)}{\pp(\yy(x))} - \frac{\pp(\hh(x) \mid x)}{\pp(\hh(x))}}^2 \\
& = \frac{1 - q}2 \paren*{\pp_{\min}}^{\frac{2}{1 - q}} \paren*{\Delta \sC_{\lbal, \sH}(h, x)}^2.
\end{align*}
Then, by taking the expectation on both sides and using the Jensen's inequality, we obtain
\begin{equation*}
\sR_{\lbal}(h) - \sR^*_{\lbal}(\sH) + \sM_{\lbal}(\sH)
 \leq  \Gamma \paren*{ \sR_{\ell_{\rm{GLA}}}(h)
    - \sR^*_{\ell_{\rm{GLA}}}(\sH) + \sM_{\ell_{\rm{GLA}}}(\sH) },
\end{equation*}
where $\Gamma(t) = \frac{\sqrt{2t}}{\paren*{\pp_{\min}}^{\frac{1}{1 - q}} (1 - q)^{\frac12}}$. This concludes the first part of the proof. The second part follows directly using the fact that when the approximation error is zero:
$\sA_{\ell_{\rm{GLA}}}(\sH) = 0$, the
minimizability gap
$\sM_{\ell_{\rm{GLA}}}(\sH)$ vanishes.
\end{proof}

\section{Margin bound: proof of Theorem~\ref{thm:margin-bound}}
\label{app:margin-bound}

\MarginBound*
\begin{proof}
Consider the family of functions taking values in $[0, 1]$:
\begin{equation*}
\sH' = \curl*{z = (x, y) \mapsto \sfL_{\rho}(h, x, y) \colon h \in \sH}.
\end{equation*}
By \citep[Theorem~3.3]{MohriRostamizadehTalwalkar2018}, with probability at least $1 - \delta$, for all $g \in \sH'$,
\begin{equation*}
\E[g(z)] \leq \frac{1}{m} \sum_{i = 1}^m g(z_i) + 2 \h \Rad_{S}(\sH') + 3 \sqrt{\frac{\log \frac{2}{\delta}}{2m}},
\end{equation*}
and thus, for all $h \in \sH$,
\begin{equation*}
\E[\sfL_{\rho}(h, x, y)] \leq \h \sR_{S, \rho}(h) + 2 \h \Rad_{S}(\sH') + 3 \sqrt{\frac{\log \frac{2}{\delta}}{2m}}.
\end{equation*}
Since $\sR_{\sfL}(h) \leq \sR_{\sfL_{\rho}}(h) = \E[\sfL_{\rho}(h, x, y)]$, we have
\begin{equation*}
\sR_{\sfL}(h) \leq \h \sR_{S, \rho}(h) + 2 \h \Rad_{S}(\sH') + 3 \sqrt{\frac{\log \frac{2}{\delta}}{2m}}.
\end{equation*}
Fix $h$, $(x_i, y_i)$ and $\rho > 0$, define $\Psi$ as follows:
\[\Psi([h(x_i, y)]_{y \in [\num]}) = c(x_i, y_i) \max_{y' \in [\num]} \curl*{ \Phi_{\rho}\paren*{h(x_i, y_i) - h(x_i, y')}}.\] Then,
by the sub-additivity of the maximum operator, we can write
for any $f, \wt f \in \sH$:
\begin{align*}
  & \Psi([h(x_i, y)]_{y \in [\num]}) - \Psi([\wt h(x_i, y)]_{y \in [\num]})\\
  & \leq  c(x_i, y_i) \max_{y' \in [\num]} \curl*{ \Phi_{\rho}\paren*{h(x_i, y_i) - h(x_i, y')}} -  c(x_i, y_i) \max_{y' \in [\num]} \curl*{ \Phi_{\rho}\paren*{\wt h(x_i, y_i) - \wt h(x_i, y')}}\\
  &  \leq  \frac{2 c(x_i, y_i)}{\rho} \curl*{\norm*{[h(x_i, y) - \wt h(x_i, y)]_{y \in [\num]}}_1}
  \tag{by $\frac{1}{\rho}$-Lipschitzness of $\Phi_{\rho}$}\\
  &  \leq \frac{2 \ov C \sqrt{\num}}{\rho} \norm*{[h(x_i, y) - \wt h(x_i, y)]_{y \in [\num]}}_2.  
\end{align*}
Thus, $\Psi$ is $\frac{2 \sqrt{\num}}{\rho}$-Lipschitz with respect to the $\norm*{\cdot}_{2}$ norm. Thus, by the vector contraction lemma \citep{Maurer2016,cortes2016structured}, $\h \Rad_S(\sH')$ can be bounded as follows:
\begin{align*}
\h \Rad_S(\sH')
\leq 2 \ov C \sqrt{2 \num} \, \h \Rad_{S}(\sH).
\end{align*}
This proves the second inequality. The first inequality, can be derived in the same way by using the first inequality of \citep[Theorem~3.3]{MohriRostamizadehTalwalkar2018}.
\end{proof}

\end{document}